\documentclass[a4paper,12pt]{article}
\RequirePackage[OT1]{fontenc}
\RequirePackage{amsthm,amsmath,natbib,amssymb,graphicx}
\RequirePackage[colorlinks]{hyperref}


\numberwithin{equation}{section}
\theoremstyle{plain}

\newtheorem{lemma}{Lemma}[section]
\newtheorem{proposition}{Proposition}[section]

\def \E{{\mathbb E}}
\def \P{{\mathbb P}}

\newcommand{\rb}{\mathbb{R}}

\def \reg{ \mu }
\def \hw{ \hat{w}}
\def \w{ \mathbf{w}}
\def \tw{ \tilde{\mathbf{w}}}
\def \tl{ \tilde{ \lambda}}
\def \tmu { \tilde{ \mu}}
\def \ttau { \tilde{ \tau}}
\def \Q{ \mathbf{Q}}
\def \sign{ { \rm sign}}
\def \J{ \mathbf{J}}
\def \K{ \mathbf{K}}
\def \L{ \mathbf{L}}
\def \s{ \mathbf{s}}
\def \t{ \mathbf{t}}
\def \tsig{ \tau }

\def \lmin{ \lambda_{\min}}
\def \lmax{ \lambda_{\max}}

\def \lminQ{ \lmin(Q) }
\def \lmaxQ{ \lmax(Q) }
\def \idm{ { \rm I }}
\def \tr{ { \rm tr }}

\def \hate{ \hat{\varepsilon}}
\def \te{ \tilde{\varepsilon}}
\def \hata{ \hat{\alpha}}
\def \var{ { \rm var }}
\newcommand{\BEAS}{\begin{eqnarray*}}
\newcommand{\EEAS}{\end{eqnarray*}}
\newcommand{\BEA}{\begin{eqnarray}}
\newcommand{\EEA}{\end{eqnarray}}
\newcommand{\BEQ}{\begin{equation}}
\newcommand{\EEQ}{\end{equation}}
\newcommand{\BIT}{\begin{itemize}}
\newcommand{\EIT}{\end{itemize}}
\newcommand{\BNUM}{\begin{enumerate}}
\newcommand{\ENUM}{\end{enumerate}}
\newcommand{\BA}{\begin{array}}
\newcommand{\EA}{\end{array}}
\newcommand{\mysec}[1]{Section~\ref{sec:#1}}

\newcommand{\eq}[1]{Eq.~(\ref{eq:#1})}
\newcommand{\myfig}[1]{Figure~\ref{fig:#1}}

\newcounter{hyp}
\newenvironment{hyp}{\refstepcounter{hyp}\begin{itemize}
  \item[ \hspace*{.4cm} ({\bf{A}\arabic{hyp}})]}{\end{itemize}}

\newcommand{\hypref}[1]{({\bf{A}\ref{hyp:#1}})}
\newcommand{\hypreff}[2]{({\bf{A}\ref{hyp:#1}-\ref{hyp:#2}})}

\newcommand{\minf}[1]{ {\rm m}(#1) }

\addtolength{\textwidth}{3cm}
 \addtolength{\hoffset}{-1.5cm}
\addtolength{\voffset}{-1.75cm}
 \addtolength{\textheight}{1.6cm}

\author{ Francis Bach \\
Willow Project-team \\
Laboratoire d'Informatique de l'Ecole Normale Sup\'erieure \\
(CNRS/ENS/INRIA UMR 8548)\\
45, rue d'Ulm, 75230 Paris, France \\
\url{francis.bach@mines.org}}
 \title{Model-Consistent Sparse Estimation\\ through the Bootstrap}

\begin{document}

\maketitle

\begin{abstract}

We consider the least-square linear regression problem with regularization by the $\ell^1$-norm, a problem usually referred to as the Lasso.
In this paper, we first present a detailed asymptotic analysis of model consistency of the Lasso in low-dimensional settings. For various decays of the regularization parameter, we compute asymptotic equivalents of the probability of correct model selection. For a specific rate decay, we show that the Lasso selects all the variables that should enter the model with probability tending to one exponentially fast, while it selects all other variables with strictly positive probability. We show that this property implies that if we run the Lasso for several bootstrapped replications of a given sample, then intersecting the supports of the Lasso bootstrap estimates leads to consistent model selection. This novel variable selection procedure, referred to as the Bolasso, is extended to high-dimensional settings by a provably consistent two-step procedure.

\end{abstract}
 
\section{Introduction}
\label{sec:introduction}

 Regularization by the $\ell^1$-norm has attracted a lot of interest in recent years in statistics, machine learning and signal processing. In the context of least-square linear regression, the problem is usually referred to as the \emph{Lasso}~\cite{lasso} or \emph{basis pursuit}~\cite{chen}.
Much of the early effort has been dedicated to algorithms to solve the optimization problem efficiently, either through first-order methods~\cite{shooting,descent}, or through homotopy methods that leads to the entire regularization path (i.e., the set of solutions for all values
 of the regularization parameters) at the cost of a single matrix inversion~\cite{markowitz,osborne,lars}. 

A well-known property of the 
regularization by the $\ell^1$-norm is the \emph{sparsity} of the solutions, i.e.,  it leads to  loading vectors with many zeros, and thus performs model selection on top of regularization. Recent works
\cite{Zhaoyu,yuanlin,zou,martin} have looked precisely
at the model consistency of the Lasso, i.e., if we know that the data were generated from a sparse loading vector, does
the Lasso actually recover the sparsity pattern when the number of observations grows? In the case of a fixed number of covariates (i.e., low-dimensional settings), the Lasso does recover
the sparsity pattern if and only if a certain simple condition on the generating covariance matrices is satisfied~\cite{yuanlin}. 
In particular, in low correlation settings,
the Lasso is indeed consistent. However, in presence of strong correlations between relevant variables and irrelevant variables, the Lasso cannot be model-consistent, shedding light on
potential problems of such procedures for variable selection. Various extensions of the Lasso have been designed to fix its inconsistency, based on thresholding~\cite{yuinfinite}, data-dependent weights~\cite{zou,yuanlin,adaptivehighdim} or two-step procedures~\cite{relaxedlasso}. The main contribution of this paper is to propose and analyze an alternative approach based on resampling. Note that recent work~\cite{stability} has also looked at resampling methods for the Lasso, but focuses on resampling the weights of the $\ell^1$-norm rather than resampling the observations (see \mysec{support} for more details).

In this paper, we first derive a detailed asymptotic analysis of sparsity pattern selection of the Lasso estimation procedure, that extends previous analysis~\cite{Zhaoyu,yuanlin,zou} by focusing on a specific decay of the regularization parameter. Namely, in \emph{low-dimensional} settings where the number of variables $p$ is much smaller than the number of observations $n$, we show that when the decay of $n$ is proportional to $n^{-1/2}$, then the Lasso will select all the variables that should enter the model (the \emph{relevant} variables) with probability tending to one exponentially fast with~$n$, while it selects all other variables (the \emph{irrelevant} variables) with strictly positive probability. If several datasets generated from the same distribution were available, then the latter property  would suggest to consider the intersection of the supports of the Lasso estimates for each dataset: all relevant variables would always be selected for all datasets, while irrelevant variables would enter the models randomly, and intersecting the supports from sufficiently many different datasets would simply eliminate them. However, in practice, only one dataset is given; but resampling methods such as the \emph{bootstrap} are exactly dedicated to mimic the availability of several datasets by resampling from the same unique dataset~\cite{efron}. In this paper, we show that when using the bootstrap and intersecting the supports, we actually get a consistent model estimate, \emph{without} the consistency condition required by the regular Lasso. We refer to this new procedure as the \emph{Bolasso} (\textbf{bo}otstrap-enhanced \textbf{l}east \textbf{a}b\textbf{s}olute \textbf{s}hrinkage \textbf{o}perator).  Finally, our Bolasso framework could be seen as a voting scheme applied to the supports of the bootstrap Lasso estimates;
however, our procedure may rather be considered as a consensus combination scheme,
as we keep the (largest) subset of variables on which \emph{all} regressors agree in terms of variable selection, which is in our case provably consistent and also allows to get rid of a potential additional hyperparameter.

We consider the two usual ways of using the bootstrap in regression settings, namely bootstrapping pairs and bootstrapping residuals~\cite{efron,freedman}. In \mysec{support}, we show that the two types of bootstrap lead to consistent model selection in low-dimensional settings. Moreover, in \mysec{simulations}, we provide empirical evidence that in high-dimensional settings, bootstrapping pairs does not lead to consistent estimation, while bootstrapping residuals still does. While we are currently unable to prove the consistency of bootstrapping residuals in high-dimensional settings, we prove in \mysec{highdim} the model consistency of a related two-step procedure: 
  the Lasso is run once on the original data, with a larger regularization parameter, and then bootstrap replications (pairs or residuals) are run within the support of the first Lasso estimation. We show in \mysec{highdim} that this procedure is consistent. In order to do so, we consider new sufficient conditions for the consistency of the Lasso, which do not rely on sparse eigenvalues~\cite{yuinfinite,ch_zhang}, low correlations~\cite{bunea,lounici} or finer conditions~\cite{tsyb,cohen,tong_zhang}. In particular, our new assumptions allow to prove that the Lasso will select not only a few variables when the regularization parameter is properly chosen, but always the same variables with high probability.
  
In \mysec{algorithms}, we derive efficient algorithms for the bootstrapped versions of the Lasso. When bootstrapping pairs, we simply run an efficient homotopy algorithm, such as \emph{Lars}~\cite{lars}, multiple times; however, when bootstrapping residuals, more efficient ways may be designed to obtain a running time complexity which is less than running Lars multiple times.
Finally, in \mysec{experiments-low} and \mysec{experiments-high}, we illustrate our results on synthetic examples, in low-dimensional and high-dimensional settings. This work is a follow-up to earlier work~\cite{bolasso}: in particular, it refines and extends the analysis to high-dimensional settings and boostrapping of the residuals.

\paragraph{Notations}
For $x \in \rb^p$ and $q>0$, we  denote by $\| x\|_q$ its $\ell^q$-norm, defined as
$\| x\|_q^q = \sum_{i=1}^p |x_i|^q$. We also denote by $\| x\|_\infty =
\max_{i \in \{1,\dots,p\}} | x_i |$ its $\ell^\infty$-norm. For rectangular matrices $A$, we denote
by $\| A \|_2$ its largest singular value, $\|A\|_\infty$ the largest magnitude of all its elements, and $\|A\|_F = ( \tr A^\top A)^{1/2}$ its Frobenius norm. We let denote $\lmaxQ$ and $\lambda_{\min}(Q)$ the largest and smallest eigenvalue of a symmetric matrix $Q$. 

For   $a \in \rb$, $\sign(a)$ denotes the sign of $a$, defined as $\sign(a)=1$ if $a>0$, $-1$ if $a<0$, and $0$ if $a=0$. For a vector $v \in \rb^p$, $\sign(v) \in \{-1,0,1\}^p$ denotes the   vector of signs of elements of $v$. Given a set $H$, $1_H$ is the indicator function of the set $H$. We also denote, for $w\in \rb^p$,
by $\minf{w} = \min_{j \in \{1,\dots,p\}, \ w_j \neq 0 } | w_j|$, the smallest (in magnitude) of non-zero elements of $w$. 
 
 Moreover, given a vector $v \in \rb^p$ and a subset $I$ of $\{1,\dots,p\}$, $v_I$ denotes the vector in $\rb^{{\rm Card}(I)}$ of elements of $v$ indexed by $I$. Similarly, for a matrix $A \in \rb^{p \times p}$, $A_{I,J}$  denotes the submatrix of  $A$ composed of elements of $A$ whose rows are in $I$ and columns are in $J$. Moreover, $|J|$ denotes the cardinal of the set $J$. 
For a positive definite matrix $Q$ of size $p$, and two disjoint subsets of indices $A$ and $B$ included in $\{1,\dots,p\}$, we denote $Q_{A,A|B}$ the matrix $Q_{A,A} - Q_{A,B}Q_{B,B}^{-1} Q_{B,A}$, which is the conditional covariance of variables indexed by $A$ given variables indexed by $B$, for a Gaussian vector with covariance matrix $Q$. Finally, we let denote $\P$ and $\E$ general probability measures and expectations.

\paragraph{Least-square regression with $\ell^1$-norm penalization}
Throughout this paper,  we consider $n$ pairs of observations   $(x_i,y_i) \in \rb^p \times \rb$, $i=1,\dots,n$.
The data are given in the form of a vector $y \in \rb^n$ and a design matrix $X \in \rb^{n \times p}$. 
We consider the normalized square loss function 
$$\frac{1}{2n}
\sum_{i=1}^n ( y_i - w^\top x_i)^2 = \frac{1}{2n} \| y  - X w \|_2^2,
$$
 and the regularization by the $\ell^1$-norm. That is, we look at the following convex optimization problem~\cite{lasso,chen}:
\BEQ
\label{eq:lasso} \min_{w \in \rb^p}  \frac{1}{2n} \|y  - X w \|_2^2 + \reg  \| w\|_1,
\EEQ
where $\reg \geqslant 0$ is the regularization parameter.
We   denote by $\hw$ any global minimum of \eq{lasso}, and $\hat{J} = \{ j  \in \{1,\dots,p\}, \  \hat{w}_j  \neq 0\}$ the support of $\hw$. 

 In this paper, we consider two settings, depending on the value of the ratio of $p/n$. When this ratio is much smaller than one, as in \mysec{lowdim}, we refer to this setting as low-dimensional estimation, while in other cases, where this ratio is potentially much larger than one, we refer to this setting as a high-dimensional problem (see \mysec{highdim}).

\section{Low-Dimensional Asymptotic Analysis}
\label{sec:lowdim}

 We make the following ``fixed-design'' assumptions:
\begin{hyp}
\label{hyp:model}
\emph{Linear model with i.i.d. additive noise}:
 $y = X \w + \varepsilon$,  where $\varepsilon$ is a vector with independent components, identical distributions and zero mean; $\w$ is sparse, with $ \s = \sign(\w)$ and support $\J = \{ j, \w_j \neq 0\}$.
 \end{hyp}
 
\begin{hyp}
\label{hyp:var}
\emph{Subgaussian noise}:
there exists $\tau>0$ such that for all $j \in \{1,\dots,p\}$ and $s \in \rb$,
$\E e^{ s \varepsilon_j } \leqslant e^{\frac{1}{2} \tsig^2 s^2}$. Moreover, the variances of $\varepsilon_j$ are greater than $\sigma^2 >0$.
\end{hyp}

\begin{hyp}
\label{hyp:bounded}
\emph{Bounded design}:
 For all $i\in \{1,\dots,n\}$, $\| x_i \|_\infty \leqslant M$.
\end{hyp}

\begin{hyp}
\label{hyp:inv}
\emph{Full rank design}:
 The matrix   $Q =  \frac{1}{n} X^\top X \in \rb^{ p \times p} $ is invertible.
\end{hyp}

Throughout this paper, we consider normalized constants $\tw = \w M / \sigma$ (normalized population loading vector), $\tmu = \mu / M \sigma$ (normalized regularization parameter), $\tl = \lminQ / M^2$ (condition number of the matrix of second-order moments), and $\ttau = \tau / \sigma$  (always larger than one, and equal to one if and only if the noise is Gaussian, see Appendix~\ref{app:quadratic}).

With our assumptions, the problem in \eq{lasso} is equivalent to
\BEQ
\label{eq:lasso-eq} \min_{w \in \rb^p}  \frac{1}{2} (w -\w)^\top Q (w -\w)- q^\top(w-\w) + \reg  \| w\|_1,
\EEQ
where $Q = \frac{1}{n} X^\top X \in \rb^{p \times p} $ and $q = \frac{1}{n} X^\top \varepsilon \in \rb^p$. Note that under assumption \hypref{inv}, there is a unique solution to \eq{lasso} and \eq{lasso-eq}, because the associated objective functions are then strongly convex.
Moreover, assumption \hypref{inv} implies that $p\geqslant n$, that is, we consider in this section, only ``low-dimensional'' settings  (see \mysec{highdim} for extensions to high-dimensional settings).

In this section, we detail the asymptotic behavior of the (unique) Lasso estimate~$\hat{w}$, both in terms of the difference in norm with the population value $\w$ (i.e., regular
consistency) and of the \emph{sign pattern} $\sign(\hat{w})$, for all asymptotic behaviors of the regularization parameter $\mu$. Note that information about the sign pattern includes information about the \emph{support} $\hat{J}$, i.e., the indices $j \in \{1,\dots,p\}$ for which $\hat{w}_j$ is different from zero; moreover, when $\hat{w}$ is consistent, consistency of the sign pattern is in fact equivalent to the consistency of the support.  We assume that $p$ is fixed and $n$ tends to infinity, the regularization parameter $\mu$ being considered as a function of $n$ (though we still derive non-asymptotic bounds).

Note that for some of our results to be non trivial, we require that $p$ is not only small compared to $n$, but that a power of $p$ is small compared to~$n$. Technically, this is due to the application of multivariate Berry-Esseen inequalities (reviewed in Appendix~\ref{app:berry}), which could probably be improved to obtain smaller powers.

We consider five mutually exclusive possible situations which explain various portions of the regularization path; many of these results appear elsewhere~\cite{yuanlin,Zhaoyu,fu,zou,grouplasso,lounici} but some of the finer results presented below are new (in particular most non-asymptotic results and the $n^{-1/2}$-decay of the regularization parameter in \mysec{medium}). 
These results are illustrated on synthetic examples in \mysec{experiments-low}.

Note that all exponential convergences have a rate that depends on $\minf{\w}$, i.e., the smallest (in magnitude) non zero element of the generating sparse vector $\w$. Thus, we assume a sharp threshold in order to have a fast rate of convergence. Considering situations without such a threshold, which would notably require to estimate errors in model estimation (and not simply exactly correct or incorrect), is out of the scope of this paper (see, e.g.,~\cite{ch_zhang}).

\subsection{Heavy regularization}
If $\mu$ is large enough, then $\hat{w}$ is equal to zero with probability tending to one exponentially fast in $n$. Indeed, we have (see proof in Appendix~\ref{app:proofs-lowdim-heavy}):
\begin{proposition} 
\label{prop:lowdim-heavy}
Assume \hypreff{model}{inv}.
If $\tmu \geqslant 2 \| \tw \|_1$, then the probability that $\hat{w} =0$ is greater than
$ 1 - 2p \exp \left( - \frac{  n \tmu^2}{8 \ttau^2}  \right)$.
\end{proposition}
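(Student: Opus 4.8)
\emph{Proof sketch.}
The plan is to recognize that $\hw=0$ is governed entirely by the first-order optimality condition of the convex program \eqref{eq:lasso}, and then to reduce the probabilistic content to a single subgaussian tail bound on the noise term $q=\frac1n X^\top\varepsilon$. I would begin by writing down the subgradient condition. Under \hypref{inv} the objective in \eqref{eq:lasso} is strongly convex, hence has a unique minimizer, and $\hw=0$ holds if and only if $0$ lies in the subdifferential of the objective at $w=0$. The gradient of the quadratic part at $0$ is $-\frac1n X^\top y=-(Q\w+q)$ (using $y=X\w+\varepsilon$), while the subdifferential of $\mu\|\cdot\|_1$ at $0$ is the $\ell^\infty$-ball of radius $\mu$. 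Therefore $\hw=0$ if and only if $\|Q\w+q\|_\infty\leqslant\mu$, and by the triangle inequality it suffices to show that, with the stated probability, $\|Q\w\|_\infty\leqslant\mu/2$ and $\|q\|_\infty\leqslant\mu/2$.

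The bound on $\|Q\w\|_\infty$ is deterministic. By \hypref{bounded}, each entry of $Q=\frac1n X^\top X$ has magnitude at most $M^2$, so $\|Q\w\|_\infty\leqslant M^2\|\w\|_1$. Rewriting this in terms of the normalized constants $\tw=\w M/\sigma$ and $\tmu=\mu/(M\sigma)$ gives $M^2\|\w\|_1=M\sigma\|\tw\|_1$, which is at most $\tfrac12 M\sigma\tmu=\mu/2$ precisely because of the hypothesis $\tmu\geqslant 2\|\tw\|_1$.

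For the bound on $\|q\|_\infty$, I would fix a coordinate $j$ and observe that $q_j=\frac1n\sum_{i=1}^n x_{ij}\varepsilon_i$ is a weighted sum of the independent, zero-mean, subgaussian variables $\varepsilon_i$. Combining \hypref{var} with $\sum_i x_{ij}^2\leqslant nM^2$ (from \hypref{bounded}) shows that $q_j$ is subgaussian with variance proxy at most $\tau^2 M^2/n$, so a Chernoff bound yields $\P(|q_j|>\mu/2)\leqslant 2\exp\!\big(-n\mu^2/(8\tau^2 M^2)\big)$. A union bound over $j\in\{1,\dots,p\}$, together with the substitutions $\mu=M\sigma\tmu$ and $\tau=\sigma\ttau$, turns the exponent into $n\tmu^2/(8\ttau^2)$ and gives the claimed lower bound on $\P(\hw=0)$.

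The only genuinely probabilistic ingredient here is the subgaussian concentration of $q_j$; the rest is bookkeeping. I expect the only points requiring care to be the exact matching of constants under the normalization $\w\mapsto\tw$, $\mu\mapsto\tmu$, $\tau\mapsto\ttau$, and the verification that the subgradient condition is both necessary \emph{and} sufficient for $\hw=0$ — which is exactly where strong convexity (hence \hypref{inv}) is used.
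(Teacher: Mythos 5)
Your proposal is correct and follows essentially the same route as the paper's proof: reduce $\hw=0$ to the subgradient condition $\|Q\w+q\|_\infty\leqslant\mu$, absorb $\|Q\w\|_\infty\leqslant M^2\|\w\|_1\leqslant\mu/2$ deterministically via the hypothesis $\tmu\geqslant 2\|\tw\|_1$, and control $\|q\|_\infty$ by a coordinatewise subgaussian tail bound plus a union bound, with the same constants. The normalization bookkeeping also matches the paper exactly.
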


A well-known property of homotopy algorithms for the Lasso (see, e.g.,~\cite{lars}) is that if $\mu$ is large enough, then $\hat{w}=0$. This proposition simply provides a uniform probabilistic bound.

\subsection{Fixed regularization}

If $\mu$ tends to a finite strictly positive constant~$\mu_0$, then $\hat{w}$ converges in probability to the unique
global minimum of the noiseless objective function $\frac{1}{2} (w-\w)^\top Q (w-\w) + \mu_0 \| w\|_1$. Thus, the estimate $\hat{w}$ never converges in probability to $\w$, while the sign pattern tends to the one of the previous global minimum, which may or may not be the same as the one of the noiseless problem $\w$.
  It is thus possible, though not desirable, to have sign consistency without regular consistency.  See \cite{grouplasso} for examples and simulations of a similar behavior for the group Lasso.
  
  All convergences are exponentially fast in $n$ (proof in Appendix~\ref{app:proofs-lowdim-fixed}). Note that
  here and in the next regime (Proposition~\ref{prop:high}), we do not take into account the pathological cases where the sign pattern of the limit in unstable, i.e., the limit is exactly at a hinge point of the regularization path. When this occurs, all associated sign patterns are attained with positive probability (see also \mysec{highdim}).

\begin{proposition}
\label{prop:fixed}
\label{prop:lowdim-fixed}
Assume \hypreff{model}{inv}. Let $\mu_0> 0$ and $\tmu_0 = \mu_0 / M /sigma$.  Let $w_0$ be the unique solution of
$ \min_{v \in \rb^p} \frac{1}{2} (v-\w)^\top Q (v-\w) + \mu_0 \| v\|_1$. Then, if $| \tmu - \tmu_0 | \leqslant \frac{ \tl  }{4 p^{1/2}} \beta $, we have:
$$\P( \| \hat{w} -  w_0\|_2 \geqslant \beta \sigma / M ) 
\leqslant 
2p  \exp\left( - \frac{\tl^2   \beta^2}{32  \ttau^2 } \frac{n}{p}\right)
\leqslant 2p  \exp\left( - \frac{ ( \tmu - \tmu_0)^2 }{2  \ttau^2 } n \right).
$$
Moreover, assume the minimum $v$ occurs away from a hinge point of the regularization path, i.e., there exists $\eta>0$ such that 
for all $j \in \{1,\dots,p\}$,  $v_j = 0$ implies $ 
| (Q ( w_0  - \w) )_j | \leqslant \mu_0 - \eta M \sigma  $.
 If $  |\tmu - \tmu_0|  \leqslant 
 \tl \min \{  \eta/4, \minf{w_0 M/\sigma} \}$, then
$$
 \P( \sign(\hat{w}) \neq \sign(w_0 ) )
 \leqslant 
 2p
\exp 
\left( - \frac{\tl^2 }{\ttau^{2}}  \min\{ \eta^2 / 4, \minf{w_0 M/\sigma}^2 \} \frac{n}{p}\right).
 $$
\end{proposition}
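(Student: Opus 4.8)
The plan is to view $\hw$ as a perturbation of the deterministic minimizer $w_0$, where the perturbation comes from the random linear term $q=\frac1n X^\top\varepsilon$ appearing in \eq{lasso-eq} and from the deterministic gap $\mu-\mu_0$, and to push all the randomness into a single subgaussian tail bound for $q$. Under \hypref{inv} both the objective of \eq{lasso-eq} with parameter $\mu$, call it $f$, and the noiseless objective $g(v)=\frac12(v-\w)^\top Q(v-\w)+\mu_0\|v\|_1$ are $\lminQ$-strongly convex, so $\hw$ and $w_0$ are their unique minimizers. Moreover each coordinate $q_j=\frac1n\sum_i X_{ij}\varepsilon_i$ is, by \hypref{var} and \hypref{bounded}, subgaussian with variance proxy $\tau^2M^2/n$, so a union bound gives $\P(\|q\|_\infty\geqslant t)\leqslant 2p\exp(-nt^2/(2\tau^2M^2))$, and similarly $\P(\|q_K\|_2\geqslant a)\leqslant 2|K|\exp(-na^2/(2|K|\tau^2M^2))$ for any fixed index set $K$.

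For the first (norm) bound I would compare the two objectives at each other's minimizers. Strong convexity at a minimizer gives $f(w_0)-f(\hw)\geqslant\frac{\lminQ}{2}\|\hw-w_0\|_2^2$ and $g(\hw)-g(w_0)\geqslant\frac{\lminQ}{2}\|\hw-w_0\|_2^2$; adding these and using $f(w)-g(w)=-q^\top(w-\w)+(\mu-\mu_0)\|w\|_1$, the quadratic parts cancel and what remains is
\BEQ
\lminQ\,\|\hw-w_0\|_2^2\ \leqslant\ q^\top(\hw-w_0)+(\mu-\mu_0)\big(\|w_0\|_1-\|\hw\|_1\big)\ \leqslant\ \sqrt p\,\big(\|q\|_\infty+|\mu-\mu_0|\big)\,\|\hw-w_0\|_2,
\EEQ
whence $\|\hw-w_0\|_2\leqslant\frac{\sqrt p}{\lminQ}(\|q\|_\infty+|\mu-\mu_0|)$ deterministically. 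The hypothesis $|\tmu-\tmu_0|\leqslant\frac{\tl}{4p^{1/2}}\beta$ reads $|\mu-\mu_0|\leqslant\frac14\frac{\beta\sigma\lminQ}{M\sqrt p}$, so $\{\|\hw-w_0\|_2\geqslant\beta\sigma/M\}\subseteq\{\|q\|_\infty\geqslant\frac14\frac{\beta\sigma\lminQ}{M\sqrt p}\}$; substituting $t=\frac14\frac{\beta\sigma\lminQ}{M\sqrt p}$ into the tail bound and rewriting via $\tl=\lminQ/M^2$ and $\ttau=\tau/\sigma$ gives the first displayed inequality, and the second is just a rewriting of the same hypothesis.

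For the sign statement I would exhibit an ``oracle'' candidate $w^\star$ supported on $K=\{j:(w_0)_j\neq0\}$, defined by solving the restricted stationarity equation of \eq{lasso}; it satisfies $w^\star_K-w_{0,K}=Q_{K,K}^{-1}(q_K+(\mu_0-\mu)\sign(w_0)_K)$, which is small in $\ell^\infty$ because $\lambda_{\min}(Q_{K,K})\geqslant\lminQ$ and $|Q_{jk}|\leqslant M^2$. When this quantity is below $\minf{w_0}$ we get $\sign(w^\star)=\sign(w_0)$ on $K$; and plugging $w^\star_K-w_{0,K}$ into the off-support identity $(Q(w^\star-\w)-q)_j=(Q(w_0-\w))_j+(Q_{K^c,K}(w^\star_K-w_{0,K}))_j-q_j$, bounding the middle term through $\|Q_{j,K}Q_{K,K}^{-1}\|_2\leqslant M/\sqrt{\lminQ}$ (from $Q_{j,K}Q_{K,K}^{-1}Q_{K,j}\leqslant Q_{jj}\leqslant M^2$), the hinge-point hypothesis $|(Q(w_0-\w))_j|\leqslant\mu_0-\eta M\sigma$ becomes a strict inequality $|(Q(w^\star-\w)-q)_j|<\mu$ for $j\notin K$ once $\|q_K\|_2$ and $|\mu-\mu_0|$ are small enough. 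Strong convexity (uniqueness of the Lasso solution) then forces $\hw=w^\star$, hence $\sign(\hw)=\sign(w_0)$, and it remains to check that the single hypothesis $|\tmu-\tmu_0|\leqslant\tl\min\{\eta/4,\minf{w_0 M/\sigma}\}$ can be made to absorb the deterministic part of both requirements, with the complementary tail event for $q_K$ carrying probability at most the claimed quantity; the factor $n/p$ in the exponent is produced by the $|K|\leqslant p$ in the bound for $\P(\|q_K\|_2\geqslant a)$.

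I expect the bookkeeping in the sign part to be the main obstacle: one must choose the thresholds so that the margin $\minf{w_0}$ on $K$, the margin $\eta M\sigma$ off $K$, and the size of the perturbation are simultaneously reconciled by one condition on $|\tmu-\tmu_0|$ and one tail event, while tracking the combinatorial factors ($|K|$, $\sqrt p$) and the spectral ratios ($\|Q_{K,K}^{-1}\|_2$, $\|Q_{K^c,K}Q_{K,K}^{-1}\|_2$) and using only the crude a priori facts $\|Q\|_\infty\leqslant M^2$ and $\lminQ\leqslant M^2$, both immediate from \hypref{bounded}. The genuinely structural ingredient, beyond strong convexity, is the hinge-point assumption, which upgrades the noiseless dual feasibility $|(Q(w_0-\w))_j|\leqslant\mu_0$ to a quantitative margin $\eta M\sigma$: without it an arbitrarily small perturbation could create spurious nonzero coordinates, and $w^\star$ could not be certified as the true Lasso solution.
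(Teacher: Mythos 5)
Your proposal is correct and follows essentially the same route as the paper's proof in Appendix~\ref{app:proofs-lowdim-fixed}: the norm bound comes from $\lminQ$-strong convexity plus the perturbation terms $-q^\top(\cdot-\w)$ and $(\mu-\mu_0)\|\cdot\|_1$ (your symmetric two-sided version versus the paper's one-sided expansion around $w_0$ is an immaterial variant), and the sign bound is the same primal--dual witness argument that the paper packages as Lemma~\ref{lemma:opt}, verified via the noiseless optimality conditions for $w_0$, the hinge margin $\eta M\sigma$, and concentration of $q_{J^c}-Q_{J^c,J}Q_{J,J}^{-1}q_J$. The only part you leave as ``bookkeeping'' (reconciling the two margins under the single condition on $|\tmu-\tmu_0|$) is exactly what the paper's appendix does, so nothing essential is missing.
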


The proposition above makes no claim in the situation where $\mu$ tends to zero. As we now show, this depends on the rate of decay of $\mu$, slower, faster, or exactly at the rate $n^{-1/2}$.

\subsection{High regularization}
\label{sec:high}
 If $\mu$ tends to zero slower than $n^{-1/2}$, then $\hat{w}$ converges in probability to $\w$ (regular consistency) and the sign pattern converges to the sign pattern of the global minimum of a local noiseless objective function
$\frac{1}{2} \Delta^\top Q \Delta + \Delta_\J^\top \sign(\w_\J) + \| \Delta_{\J^c}\|_1$, the convergence being exponential in $\mu^2 n$   (see proof in Appendix~\ref{app:proofs-lowdim-high}). The local noiseless problem in \eq{AH} is simply obtained by a first-order expansion of the Lasso objective function around $\w$~\cite{fu,yuanlin}.

\begin{proposition}
\label{prop:high}

\label{prop:lowdim-high}
Assume \hypreff{model}{inv}.  Let $\Delta$ be the unique solution of
\BEQ
\label{eq:AH}
\min_{\Delta \in \rb^p}  \frac{1}{2} \Delta^\top Q \Delta + \Delta_\J^\top \sign(\w_\J) + \| \Delta_{\J^c}\|_1. 
\EEQ
Assume that $\tmu \leqslant  \frac{ \minf{\tw} \tl }{2 p^{1/2}}$.
 We have:
$$
\P(  \| \hat{w} - \w - \mu \Delta \|_2 \geqslant \beta \sigma / M)
\leqslant 2p
\exp\left(   -  \frac{\tl^2   \beta^2}{8  \ttau^2}  \frac{n}{p}   \right).
$$
Moreover, assume the minimum $\Delta$ of \eq{AH} occurs away from a hinge point of the regularization path, i.e., there exists $\eta>0$ such that for all $j \in \J^c$, $\Delta_j = 0$ implies $ | (Q\Delta)_j  | \leqslant   1 - \eta $. Then,
 $$
 \P( \sign(\hat{w}) \neq \sign(\w + \mu \Delta ) )\leqslant 
2p \exp\left( - \frac{\minf{\tw} \tl^2 }{8 \ttau^2} \frac{n}{p}\right)
 + 2p  \exp
 \left( -  A  \tmu^2 \frac{n}{p}  \right),
 $$
 where $A =  \ttau^{-2}\tl \min\{ \tl \minf{M^2 \Delta}^2/2, \eta^2 / 8 \} $.
  \end{proposition}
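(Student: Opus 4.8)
The plan is to represent the noisy Lasso solution as the image of a deterministic point under a $1$‑Lipschitz (proximal) map whose only random input is the small vector $q=\tfrac1n X^\top\varepsilon$, to identify the deterministic reference point with the ``noiseless'' solution $\w+\mu\Delta$, and then to reduce both statements to sub‑Gaussian tail bounds on a few linear functionals of $\varepsilon$, controlled via \hypref{var}, \hypref{bounded} and \hypref{inv}. Completing the square in \eq{lasso-eq}, the unique minimizer is $\hw=\mathrm{prox}^{Q}_{\mu\|\cdot\|_1}(\w+Q^{-1}q)$, where $\mathrm{prox}^{Q}_{\mu\|\cdot\|_1}(v)=\arg\min_w \tfrac12\|w-v\|_Q^2+\mu\|w\|_1$ and $\|u\|_Q^2=u^\top Q u$. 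First I would record the elementary a priori bound $\|\Delta\|_2\le 2p^{1/2}/\lminQ$ (compare the value of \eq{AH} at $\Delta$ and at $0$, and use $\lminQ$), so that the standing assumption $\tmu\le \minf{\tw}\tl/(2p^{1/2})$ forces $\mu\|\Delta\|_2\le\minf{\w}$; consequently $\sign(\w_j+\mu\Delta_j)=\sign(\w_j)$ on $\J$ and $\sign(\mu\Delta_j)=\sign(\Delta_j)$ on $\J^c$, which shows that the subgradient $\zeta$ witnessing optimality of $\Delta$ in \eq{AH} (it satisfies $Q\Delta=-\zeta$, $\zeta_\J=\sign(\w_\J)$, $\zeta_{\J^c}\in\partial\|\Delta_{\J^c}\|_1$) lies in $\partial\|\w+\mu\Delta\|_1$. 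Hence $\tilde w:=\w+\mu\Delta=\mathrm{prox}^{Q}_{\mu\|\cdot\|_1}(\w)$, i.e.\ $\tilde w$ is exactly the noiseless counterpart of $\hw$. Non‑expansiveness of $\mathrm{prox}^{Q}$ in the $Q$‑metric then gives $\|\hw-\tilde w\|_Q\le\|Q^{-1}q\|_Q=(q^\top Q^{-1}q)^{1/2}$, and combining $\|\cdot\|_Q^2\ge\lminQ\|\cdot\|_2^2$ with $q^\top Q^{-1}q\le\|q\|_2^2/\lminQ$ yields $\|\hw-\w-\mu\Delta\|_2\le\|q\|_2/\lminQ$; since $q_j=\tfrac1n\sum_i X_{ij}\varepsilon_i$ is sub‑Gaussian with variance proxy $\le M^2\tau^2/n$, a union bound over $j$ together with $\|q\|_2\le p^{1/2}\|q\|_\infty$ gives the first displayed inequality after substituting $\tl=\lminQ/M^2$, $\ttau=\tau/\sigma$ and writing the radius as $\beta\sigma/M$.

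For the sign pattern, set $K=\mathrm{supp}(\tilde w)=\J\cup\mathrm{supp}(\Delta_{\J^c})$ (so $\mathrm{supp}(\Delta)\subseteq K$, $\Delta_{K^c}=0$) and $\tilde s=\sign(\tilde w)$, and run the standard primal--dual (oracle) argument. Define $\bar w$ by $\bar w_{K^c}=0$ and $Q_{KK}(\bar w_K-\w_K)-q_K+\mu\tilde s_K=0$; by strong convexity under \hypref{inv}, $\hw=\bar w$ — hence $\sign(\hw)=\sign(\w+\mu\Delta)$ — as soon as (a) $\sign(\bar w_K)=\tilde s_K$ and (b) $\|q_{K^c}-Q_{K^cK}(\bar w_K-\w_K)\|_\infty\le\mu$. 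Using the identity $\Delta_K=-Q_{KK}^{-1}\tilde s_K$ (restrict $Q\Delta=-\zeta$ to $K$, with $\Delta_{K^c}=0$) I get $\bar w_K-\tilde w_K=Q_{KK}^{-1}q_K$, so (a) holds whenever $\|Q_{KK}^{-1}q_K\|_\infty<\minf{\tilde w}$, a condition that splits into a threshold of order $\minf{\w}$ on the coordinates of $\J$ (these may alternatively be handled through the $\ell^2$ bound of the first part with radius below $\minf{\w}-\mu\|\Delta\|_2$) and a threshold $\mu\minf{\Delta_{\J^c}}=\mu\,\minf{M^2\Delta}/M^2$ on $K\setminus\J$. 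For (b), substitute $\bar w_K-\w_K=Q_{KK}^{-1}q_K+\mu\Delta_K$ and $Q_{K^cK}\Delta_K=(Q\Delta)_{K^c}$: the left‑hand side becomes $\|(q_{K^c}-Q_{K^cK}Q_{KK}^{-1}q_K)-\mu(Q\Delta)_{K^c}\|_\infty$, and since $K^c\subseteq\J^c$ with $\Delta_{K^c}=0$ the hinge‑point hypothesis gives $\|(Q\Delta)_{K^c}\|_\infty\le 1-\eta$, so (b) follows once $\|q_{K^c}-Q_{K^cK}Q_{KK}^{-1}q_K\|_\infty\le\mu\eta$. By a union bound, $\P(\sign(\hw)\neq\sign(\w+\mu\Delta))$ is at most the probability that (a) fails plus the probability that (b) fails.

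It then remains to bound three linear functionals of the i.i.d.\ sub‑Gaussian vector $\varepsilon$: the $q_j$ (first part and the $\minf{\w}$ threshold), $(Q_{KK}^{-1}q_K)_j$ for $j\in K$, and $r_j$ for $j\in K^c$ with $r=q_{K^c}-Q_{K^cK}Q_{KK}^{-1}q_K$. Each has the form $\tfrac1n\sum_i a_i\varepsilon_i$ with $\sum_i a_i^2$ equal to $n$ times a diagonal entry of $Q$, of $Q_{KK}^{-1}$, or of the Schur complement $Q_{K^c,K^c\mid K}$; using \hypref{bounded} ($Q_{jj}\le M^2$) for the first, \hypref{inv} ($\lmin(Q_{KK})\ge\lminQ=M^2\tl$) for the second, and $(Q_{K^c,K^c\mid K})_{jj}\le Q_{jj}\le M^2$ for the third, each is sub‑Gaussian with an explicit variance proxy, and a Hoeffding‑type tail bound plus a union bound over the $\le p$ coordinates produce exponents of the advertised form; the $\min$ defining $A$ is precisely the worse of the two sub‑cases of (a) and of (b). I expect the genuinely delicate point to be the bookkeeping in (a): lower‑bounding $\minf{\tilde w}$ on $\J$ forces one to combine the smallness assumption on $\tmu$, the a priori estimate on $\|\Delta\|_2$, and $\sign(\w_j+\mu\Delta_j)=\sign(\w_j)$ sharply enough that the resulting threshold stays of order $\minf{\w}$ rather than degenerating to $0$; this is exactly where the interplay between the decay of $\mu$ and the signal strength $\minf{\w}$ enters, and it is what pins down the constant $\minf{\tw}\tl^2/(8\ttau^2)$ in the first exponential of the sign bound.
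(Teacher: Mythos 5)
Your proof is correct and follows essentially the same route as the paper: the sign-pattern half is exactly the primal--dual witness argument of Lemma~\ref{lemma:opt2} (your conditions (a)--(b) coincide with \eq{A}--\eq{C}, with the hinge hypothesis absorbing $\mu (Q\Delta)_{K^c}$ and the same three sub-Gaussian tail events handled by Hoeffding-type bounds). The only difference is in packaging: for the $\ell^2$ bound you invoke non-expansiveness of $\mathrm{prox}^Q_{\mu\|\cdot\|_1}$ in the $Q$-metric after identifying $\w+\mu\Delta = \mathrm{prox}^Q_{\mu\|\cdot\|_1}(\w)$, whereas the paper lower-bounds the objective directly along rays from $\w+\mu\Delta$; both rest on the same observation that $-Q\Delta$ is a subgradient of the $\ell^1$-norm at $\w+\mu\Delta$, and your version even yields a slightly sharper constant.
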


Note that the sign pattern of $\w + \mu \Delta$ is equal to the population sign vector $\s=\sign(\w)$ if and only if the 
problem in \eq{AH} has a solution where $\Delta_{\J^c}$ is equal to zero. A short calculation shows that this occurs if and only if the
following consistency condition is satisfied~\cite{glasso,Zhaoyu,yuanlin,zou,martin}:
\BEQ
\label{eq:cond}
 \| Q_{\J^c ,\J} Q_{\J ,\J}^{-1} \sign(\w_\J) \|_\infty \leqslant 1. 
 \EEQ
Thus, if \eq{cond} is satisfied strictly---which implies that we are not at a hinge point of \eq{AH}---the probability of correct sign estimation is tending to one, and to zero if \eq{cond} is not satisfied (see~\cite{yuanlin} for precise statements when there is equality).
Moreover, when \eq{cond} is satisfied strictly, Proposition~\ref{prop:high} gives an upper bound on the probability of not selecting the correct pattern $\J$.

The first three regimes are not unique to low-dimensional settings; we show in \mysec{highdim} the corresponding proposition related to Proposition~\ref{prop:lowdim-high},
 for high-dimensional settings. However, the last two regimes ($\mu$ tending to zero at rate $n^{-1/2}$ or faster) are specific to low-dimensional settings.

\subsection{Medium regularization}
\label{sec:medium}

 If $\mu n^{1/2}$ is bounded from above and from below, then we show that the sign pattern of $\hat{w}$ agrees on $\J$ with the one of $\w$  with probability tending to one exponentially fast in $n$ (Proposition~\ref{prop:lowdim-medium}), while for all sign patterns consistent on $\J$ with the one of $\w$, the probability of obtaining this pattern is tending to a limit in $(0,1)$ (in particular strictly positive); that is, all sign patterns consistent  with $\w$ on the relevant variables (i.e., the ones in $\J$) are possible with positive probability (Proposition~\ref{prop:lowdim-medium2}). The convergence of this probability follows a  rate of $n^{-1/2}$
(see proof in Appendix~\ref{app:proofs-lowdim-medium} and~\ref{app:proofs-lowdim-medium2}). Note the difference with earlier results~\cite{bolasso} obtained for random designs.

\begin{proposition}
\label{prop:prop1}
\label{prop:lowdim-medium}

Assume \hypreff{model}{inv}
and $\tmu \leqslant  \frac{ \minf{\tw} \tl }{2 p^{1/2}}$. Then for any sign pattern $s\in \{-1,0,1\}^p$ such that $s_\J = \sign(\w_\J)$,  there
exists  $f(s,n^{1/2}\mu p^{1/2}) \in (0,1)$, such that:
$$| \P( \sign(\hat{w}) = s ) - f(s,n^{1/2}\mu p^{1/2})| \leqslant 
    \frac{ 4  C^{\rm BE}_1 \ttau^3  }
 { \tl^{1/2}} \frac{p^{2}}{ n^{1/2}} + 
2p \exp\left( - \frac{\minf{\tw} \tl^2  }{8 \ttau^2  } \frac{n}{p}\right).
$$
\end{proposition}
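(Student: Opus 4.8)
The plan is to show that, with probability exponentially close to one, the (rescaled) Lasso problem coincides with a \emph{fixed}, deterministic quadratic program driven only by the rescaled noise vector $z := n^{1/2}q = n^{-1/2}X^\top\varepsilon$; to transfer the exact probability of each sign pattern of its solution onto a Gaussian surrogate for $z$ by a multivariate Berry--Esseen bound; and to define $f$ as that Gaussian probability. Concretely, I would substitute $w = \w + \mu\Delta$ in \eq{lasso-eq}. As soon as $\mu|\Delta_j| < |\w_j|$ for every $j\in\J$ the penalty equals $\mu\|\w\|_1 + \mu^2\big(\Delta_\J^\top\sign(\w_\J) + \|\Delta_{\J^c}\|_1\big)$ exactly, so after dividing the objective by $\mu^2$ and dropping the additive constant $\|\w\|_1/\mu$ the problem becomes the minimization over $\Delta\in\rb^p$ of
\[
g(\Delta,z) \ := \ \tfrac12\Delta^\top Q\Delta \ - \ \tfrac{1}{n^{1/2}\mu}\,z^\top\Delta \ + \ \Delta_\J^\top\sign(\w_\J) \ + \ \|\Delta_{\J^c}\|_1 .
\]
Let $\hat\Delta = \hat\Delta(z)$ be its unique minimizer; by the optimality conditions $\hat\Delta = Q^{-1}\big(z/(n^{1/2}\mu) - r\big)$ for some $r$ with $r_\J = \sign(\w_\J)$, $\|r\|_\infty\le 1$, so $\hat\Delta$ is a deterministic, piecewise-affine function of $z$ alone (the matrix $Q$, the vector $\sign(\w_\J)$ and the scalar $n^{1/2}\mu$ being fixed). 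Using $g(\hat\Delta,z)\le g(0,z)=0$ and the subgaussian tail bound of \hypref{var} for $\|z\|_\infty$ (each coordinate of $z$ being subgaussian with parameter at most $M^2\tau^2$ by \hypref{bounded}), together with $\tmu\le\minf{\tw}\tl/(2p^{1/2})$, one gets, exactly as in the proof of Proposition~\ref{prop:lowdim-high}, that the event $E := \{\,\mu|\hat\Delta_j(z)| < |\w_j| \text{ for all } j\in\J\,\}$ has probability at least $1 - 2p\exp\!\big(-\minf{\tw}\tl^2 n/(8\ttau^2 p)\big)$; and a short convexity argument --- the true rescaled objective equals $g$ plus a nonnegative remainder that vanishes exactly when the sign pattern on $\J$ is preserved --- shows $\hw = \w + \mu\hat\Delta(z)$ on $E$.

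On $E$ one has $\sign(\hw_j) = s_j$ for $j\in\J$ and $\sign(\hw_j) = \sign(\hat\Delta_j(z))$ for $j\in\J^c$, so for the fixed admissible $s$,
\[
\{\sign(\hw) = s\}\cap E \ = \ \{z\in S_s\}\cap E, \qquad S_s := \big\{z\in\rb^p : \sign(\hat\Delta_j(z)) = s_j \text{ for all } j\in\J^c\big\}.
\]
Fixing the active set $B = \J\cup\{j\in\J^c : s_j\neq 0\}$, the optimality system is a linear system for $\hat\Delta_B$ in terms of $z$ together with linear sign constraints on $\hat\Delta_B$ and linear $[-1,1]$ constraints on the remaining subgradient coordinates, so $S_s$ is a convex polyhedron; choosing $z$ so that all these constraints are strict shows it has nonempty interior, and likewise for its complement provided $\J^c\neq\emptyset$ (the case $\J^c=\emptyset$ being trivial). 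Hence $|\P(\sign(\hw) = s) - \P(z\in S_s)| \le 2\P(E^c)$.

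It remains to replace $z$ by a Gaussian. The vector $z = n^{-1/2}\sum_{i=1}^n x_i\varepsilon_i$ is a normalized sum of independent mean-zero vectors whose covariance $\Sigma := \frac1n\sum_i\mathrm{var}(\varepsilon_i)\,x_ix_i^\top$ obeys $\Sigma\succeq\sigma^2\lminQ\,\idm$ by \hypref{var} and \hypref{inv}, and whose third absolute moments are controlled by \hypref{bounded} and the moment bound of \myapp{quadratic}. Applying the multivariate Berry--Esseen inequality of \myapp{berry} to the convex set $S_s$ gives $|\P(z\in S_s) - \P(Z\in S_s)| \le 4C^{\rm BE}_1\ttau^3\tl^{-1/2}\,p^2 n^{-1/2}$, where $Z\sim\mathcal{N}(0,\Sigma)$. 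I would then set $f(s,n^{1/2}\mu\,p^{1/2}) := \P(Z\in S_s)$: the full support of $Z$ together with the nonempty interiors of $S_s$ and of its complement give $f\in(0,1)$, and the rescaling makes clear that $f$ depends on the problem only through $s$, the correlation structure of $Q$, and the displayed scalar. Combining the last two displays gives the claimed bound.

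The main obstacle is the reduction step: establishing, on an event whose complement has the stated exponentially small probability, that the \emph{random} Lasso problem coincides with the \emph{fixed} program $g(\cdot,z)$ --- that is, controlling $\hat\Delta$ tightly enough that the $\ell^1$-linearization on $\J$ is exact --- using only subgaussianity of the noise and the hypothesis on $\tmu$. A secondary point requiring care is the polyhedral description of $S_s$ (so that a single application of the convex-set Berry--Esseen bound suffices) and the verification that $0 < f < 1$, with attention to the coordinates $j\in\J^c$ for which $s_j = 0$, which carve out full-dimensional slabs in $z$-space rather than lower-dimensional sets.
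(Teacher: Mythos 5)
Your proposal is correct and follows essentially the same route as the paper: the paper likewise splits off the event that the signs on $\J$ are correct (via condition \eq{B} of Lemma~\ref{lemma:opt2}, with the same exponential bound), identifies the remaining conditions \eq{A} and \eq{C} as a convex polyhedron in the normalized noise vector, applies the convex-set Berry--Esseen inequality \eq{berry} to get the $p^2 n^{-1/2}$ term, and defines $f$ as the Gaussian probability of that set, which lies in $(0,1)$ because the set and its complement have nonempty interior and the covariance is full rank. Your detour through the rescaled local program $g(\Delta,z)$ is only a presentational variant of invoking Lemma~\ref{lemma:opt2} directly.
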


\begin{proposition}
\label{prop:prop2}
\label{prop:lowdim-medium2}
Assume \hypreff{model}{inv} and $\tmu \leqslant  \frac{ \minf{\tw} \tl }{2 p^{1/2}}$. Then,
 for any pattern $s\in \{-1,0,1\}^p$ such that  $s_\J \neq \sign(\w_\J)$, $$
 \P( \sign(\hat{w}) = s )  \leqslant
2p \exp\left( - \frac{\minf{\tw} \tl^2 }{8\ttau^2 }\frac{ n}{p}\right).
$$
\end{proposition}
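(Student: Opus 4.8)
The plan is to exploit the fact that the announced bound does not depend on the pattern $s$: every pattern with $s_\J\neq\sign(\w_\J)$ is contained in the \emph{single} event that the estimated signs are already wrong on the relevant set, since $\sign(\hw)=s$ forces $\sign(\hw_\J)=s_\J\neq\sign(\w_\J)$. Hence
$$
\bigcup_{s:\,s_\J\neq\sign(\w_\J)}\{\sign(\hw)=s\}\ \subseteq\ \{\sign(\hw_\J)\neq\sign(\w_\J)\},
$$
and it suffices to bound the probability of the right-hand side, which is exactly the first error term appearing in Proposition~\ref{prop:lowdim-high}. So the statement is essentially a by-product of the analysis behind that proposition; what is left is to turn ``wrong signs on $\J$'' into a norm deviation controllable by a concentration argument.

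For that, I would use the first-order optimality conditions of \eqref{eq:lasso-eq}, which is strongly convex by \hypref{inv}: there is a subgradient $z$ with $\|z\|_\infty\le1$ such that $Q(\hw-\w)=q-\mu z$, where $q=\tfrac1n X^\top\varepsilon$. Hence $\|Q(\hw-\w)\|_\infty\le\|q\|_\infty+\mu$, and since $\|Q^{-1}\|_2=1/\lminQ$ and $\|v\|_2\le\sqrt p\,\|v\|_\infty$,
$$
\|\hw-\w\|_2\ \le\ \frac{\sqrt p}{\lminQ}\bigl(\|q\|_\infty+\mu\bigr).
$$
The assumption $\tmu\le\minf{\tw}\,\tl/(2p^{1/2})$ reads, in unnormalised form, $\mu\le\minf{\w}\,\lminQ/(2\sqrt p)$; thus on the event $\{\|q\|_\infty<\minf{\w}\,\lminQ/(2\sqrt p)\}$ one has $\|\hw-\w\|_2<\minf{\w}$, so $|\hw_j-\w_j|<|\w_j|$ for every $j\in\J$, which forces $\sign(\hw_j)=\sign(\w_j)$; therefore $\sign(\hw_\J)=\sign(\w_\J)$, and no hinge-point caveat is needed here. (Equivalently, one may centre at $\w+\mu\Delta$ and quote the regular-consistency bound of Proposition~\ref{prop:lowdim-high}: the estimate $\|\Delta\|_2\le2\sqrt{|\J|}/\lminQ$, obtained by comparing the objective of \eqref{eq:AH} at $\Delta$ and at $0$, together with the hypothesis keeps $\mu\|\Delta\|_\infty$ strictly below $\minf{\w}$, so that $\{\sign(\hw_\J)\neq\sign(\w_\J)\}\subseteq\{\|\hw-\w-\mu\Delta\|_2\ge c\,\minf{\w}\}$ for a fixed $c>0$, to which Proposition~\ref{prop:lowdim-high} applies with $\beta=c\,\minf{\tw}$.)

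It remains to estimate $\P\bigl(\|q\|_\infty\ge\minf{\w}\,\lminQ/(2\sqrt p)\bigr)$. Each coordinate $q_j=\tfrac1n\sum_{i=1}^n x_{ij}\varepsilon_i$ is subgaussian: by independence and \hypref{var}, $\E e^{sq_j}\le\exp\!\bigl(\tfrac{\tau^2 s^2}{2n^2}\sum_{i=1}^n x_{ij}^2\bigr)$, and $\sum_{i=1}^n x_{ij}^2=nQ_{jj}\le nM^2$ by \hypref{bounded}, so $q_j$ has variance proxy $\tau^2M^2/n$ and $\P(|q_j|\ge t)\le 2\,e^{-nt^2/(2\tau^2M^2)}$. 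A union bound over $j\in\{1,\dots,p\}$, evaluated at $t=\minf{\w}\,\lminQ/(2\sqrt p)$ and rewritten with the normalised constants ($\lminQ=\tl M^2$, $\tau=\ttau\sigma$, $\minf{\w}=\minf{\tw}\,\sigma/M$), gives an exponential bound of the announced form, with prefactor $2p$ and rate of order $\tl^2\minf{\tw}^2\,n/(\ttau^2 p)$.

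The only genuinely delicate ingredient is upstream of this statement: the regular-consistency estimate behind Proposition~\ref{prop:lowdim-high} (and, in the variant argument, the control of the auxiliary minimiser $\Delta$ of \eqref{eq:AH}). Granting that, the present proposition presents no real obstacle; one needs only to verify that the prescribed decay of $\tmu$ keeps the regularisation bias $\mu\Delta$ — equivalently, keeps $\mu$ itself — safely below the signal level $\minf{\w}$, so that no relevant sign can be flipped no matter which irrelevant variables happen to enter the active set.
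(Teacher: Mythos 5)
Your argument is essentially the paper's own: the paper also reduces the event $\{\sign(\hw)=s\}$ with $s_\J\neq\sign(\w_\J)$ to $\|\hw-\w\|_2\geqslant \minf{\w}$, invokes the deviation bound $\|\hw-\w\|_2\leqslant(p^{1/2}\mu+\|q\|_2)/\lminQ$ from the optimality conditions (Lemma~\ref{lemma:bound}), and finishes with a subgaussian union bound on $q$, exactly as you do (your use of $\|q\|_\infty$ in place of $\|q\|_2$ is an immaterial variant, and the detour through Proposition~\ref{prop:lowdim-high} is unnecessary). Note that your computation yields $\minf{\tw}^2$ in the exponent rather than the $\minf{\tw}$ appearing in the statement; the paper's own proof gives the same square, so this is a discrepancy in the stated bound rather than a gap in your argument.
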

The positive real numbers $C^{\rm BE}_1 $ and $C^{\rm BE}_2$ are universal constants related to multivariate Berry-Esseen inequalities (see Appendix~\ref{app:berry} for more details). From the proof in
Appendix~\ref{app:proofs-lowdim-medium}, the constant $f(s,c)$ has specific behaviors when $c = \mu n^{1/2}p^{1/2}$ is small or large: on the one hand, if $c$ tends to infinity, then we tend to the bevahior of the previous section, that is, $f(s,c)$ tends to one if $s$ is the limiting pattern in Proposition~\ref{prop:lowdim-high} and zero otherwise. On the other hand, if $c$ tends to 0, $f(s,c)$ tends to one if $s$ has no zeros, and zero otherwise (see next section).

The last two propositions state that the relevant variables are \emph{stable}, i.e., we get all relevant variables with probability tending to one \emph{exponentially fast}, while we get exactly get all other patterns with probability tending to a limit
\emph{strictly} between zero and one. This stability of the relevant variables is the source of the intersection arguments outlined in \mysec{support}.

Note that Proposition~\ref{prop:lowdim-medium} makes non-trivial statements only for $n$ larger than $p^4$; the fourth power is due to the application of Berry-Esseen inequalities, and could be improved.

\subsection{Low regularization}
 If $\mu$ tends to zero faster than $n^{-1/2}$, then $\hat{w}$ is consistent (i.e., converges in probability to $\w$) but the support of $\hat{w}$ is equal to $\{1,\dots,p\}$ with probability tending to one (the signs of variables in $\J^c$ may then be arbitrarily negative or positive). That is, the $\ell^1$-norm has no sparsifying effect. We obtain two different bounds, with different scalings in $p$ and $n$
 (see proof in Appendix~\ref{app:proofs-lowdim-low}):
 
\begin{proposition}
\label{prop:lowdim-low}
Assume \hypreff{model}{inv} and $\tmu \leqslant  \frac{ \minf{\tw} \tl }{2 p^{1/2}}$. Then the probability of having at least one zero variable is smaller than
$  3^p \!
\left( \!    C^{\rm BE}_1    \frac{ 4  \ttau^3  }
 { \tl^{1/2}} \frac{p^{2}}{ n^{1/2}}   \!+  \! \frac{\tmu n^{1/2}}{\tl^{1/2}} \! \right)
$ and $ \frac{\tmu n^{1/2} p }{\tl^{1/2}}  +    \frac{10 C^{\rm BE}_2}{\ttau^3 \tl} \frac{p^{7/2}}{\tmu n}     +   C^{\rm BE}_2    \frac{ 4  \ttau^3  }
 { \tl^{1/2}} \frac{p^{3}}{ n^{1/2}}    + 2 |\J| \exp\left( - \frac{\minf{\tw} \tl^2 }{8\ttau^2 } \frac{n}{p}\right) $.
\end{proposition}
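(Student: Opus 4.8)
The plan is to bound $\P(\exists\, j:\ \hat{w}_j = 0)$ in two ways, matching the two displayed quantities. For the first, cruder bound I would decompose the event by the realised sign pattern: it is contained in $\bigcup_{s}\{\sign(\hat{w}) = s\}$, the union running over the at most $3^p$ patterns $s\in\{-1,0,1\}^p$ with at least one zero coordinate. For a pattern with $s_\J\neq\sign(\w_\J)$, Proposition~\ref{prop:lowdim-medium2} already gives a bound $2p\exp(-\minf{\tw}\tl^2 n/(8\ttau^2 p))$, which is of lower order. For a pattern with $s_\J = \sign(\w_\J)$ (so the zero sits in $\J^c$), Proposition~\ref{prop:lowdim-medium} gives $\P(\sign(\hat{w}) = s)\le f(s,c) + 4C^{\rm BE}_1\ttau^3\tl^{-1/2}p^2 n^{-1/2}$, and what remains is to make quantitative the already-noted fact that $f(s,c)\to 0$ as $c\to 0$ for patterns with a zero: from the explicit expression for $f(s,c)$ derived in Appendix~\ref{app:proofs-lowdim-medium} as a Gaussian probability, the zero coordinate must lie in a slab of width proportional to $c$ whose Gaussian measure is controlled by the conditional-variance lower bound $\gtrsim\tl$, giving $f(s,c)\le \tmu n^{1/2}\tl^{-1/2}$. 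Summing the at most $3^p$ per-pattern bounds yields the first bound.

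The second bound trades the $3^p$ factor for one extra exponential term, by isolating the coordinates in $\J^c$. First restrict to the event $G$ that every relevant variable is selected with the correct sign; a union bound over $j\in\J$ combined with the sub-Gaussian tail estimates already used for Propositions~\ref{prop:lowdim-high}--\ref{prop:lowdim-medium2} gives $\P(G^{c})\le 2|\J|\exp(-\minf{\tw}\tl^2 n/(8\ttau^2 p))$, the last term of the bound. On $G$ the stationarity condition for \eqref{eq:lasso-eq} rearranges to
\BEQ
\label{eq:low-kkt}
\hat{w} - \w = Q^{-1}( q - \mu z ), \qquad z_\J = \sign(\w_\J), \quad \| z\|_\infty \leqslant 1 ,
\EEQ
with $z$ a subgradient of $\|\cdot\|_1$ at $\hat{w}$. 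Hence, from \eqref{eq:low-kkt}, for $j\in\J^c$ (where $\w_j = 0$) we have $\hat{w}_j = [Q^{-1}q]_j - \mu [Q^{-1}z]_j$ with $|[Q^{-1}z]_j|\le \|Q^{-1}\|_2\,\|z\|_2\le p^{1/2}/(\tl M^2)$, while $[Q^{-1}q]_j = \frac1n\sum_{i=1}^n (Q^{-1}X^\top)_{ji}\varepsilon_i$ is a weighted sum of independent sub-Gaussian variables whose variance is bounded below, namely $(\sigma^2/n)(Q^{-1})_{jj}\ge \sigma^2/(nM^2)$. Thus $\hat{w}_j = 0$ forces the nondegenerate, nearly-Gaussian scalar $[Q^{-1}q]_j$ to fall in an interval around $0$ of half-width $\mu p^{1/2}/(\tl M^2)$, which — since $\tmu n^{1/2}$ is the small quantity of interest — is much narrower than its fluctuation scale $n^{-1/2}$.

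The anti-concentration step is then carried out by applying a (multivariate) Berry--Esseen estimate from Appendix~\ref{app:berry} to $n^{-1/2}X^\top\varepsilon$, replacing the law of $[Q^{-1}q]_j$ by its Gaussian counterpart, bounding the probability of landing in the above interval by the Gaussian density times the interval's width, and adding the Berry--Esseen remainder; a union bound over the at most $p$ indices $j\in\J^c$ then assembles the first three terms of the second displayed bound. Keeping track of the exact powers of $p$ and $\tl$, and in particular of the factor $\tmu^{-1}$ in the $C^{\rm BE}_2\,p^{7/2}/(\tmu n)$ term — which arises because the Berry--Esseen error, once re-expressed at the scale $\mu$ of the ``zero-window'' rather than at the intrinsic scale $n^{-1/2}$, is inflated by $(\mu n^{1/2})^{-1}$ — is the one genuinely delicate piece of bookkeeping. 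The main obstacle, however, is conceptual rather than computational: one cannot deduce $\hat{w}_j\neq 0$ from consistency of $\hat{w}$, since $\hat{w}_j\to 0$ for $j\in\J^c$ regardless; one must instead rule out the codimension-one coincidence $\hat{w}_j = 0$ via genuine anti-concentration of the noise, and since the only smoothness available comes from near-Gaussianity through Berry--Esseen, this is exactly where the requirement that a power of $p$ be small compared to $n$ enters, just as in Proposition~\ref{prop:lowdim-medium}.
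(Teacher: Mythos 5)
Your treatment of the first ($3^p$) bound is essentially the paper's argument: both reduce the event $\{\sign(\hat{w})=s\}$, for a pattern $s$ with support $J\subsetneq\{1,\dots,p\}$, to the requirement that the conditional residual $q_j-Q_{j,J}Q_{J,J}^{-1}q_J$ (for some $j\in J^c$) lie in a slab of width $2\mu$, whose Gaussian measure is at most $\tmu n^{1/2}/\tl^{1/2}$ because its variance $\sigma^2 Q_{j,j|J}/n$ is bounded below by $\sigma^2\lminQ/n$. The paper applies the optimality condition \eq{A00} and the convex-set Berry--Esseen bound \eq{berry} directly to \emph{every} pattern with a zero, whereas your detour through $f(s,c)$ and Propositions~\ref{prop:lowdim-medium}--\ref{prop:lowdim-medium2} reaches the same slab estimate but drags in a superfluous $3^p\cdot 2p\exp(-\cdot)$ term for the patterns inconsistent on $\J$; this is a harmless deviation. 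Likewise your event $G$ and its bound $2|\J|\exp(-\cdot)$ match the paper's treatment of $\J$ (Lemma~\ref{lemma:bound} plus sub-Gaussian concentration).

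For the second bound there is a genuine gap. You anti-concentrate $[Q^{-1}q]_j$ over the \emph{fixed} interval of half-width $\mu p^{1/2}/(\tl M^2)$ obtained from $|[Q^{-1}z]_j|\leqslant\|Q^{-1}\|_2\|z\|_2$, and your variance lower bound for $[Q^{-1}q]_j$ is of order $\sigma^2/(nM^2)$. The resulting per-variable slab probability is of order $\tmu n^{1/2}p^{1/2}/\tl$, hence $\tmu n^{1/2}p^{3/2}/\tl$ after the union bound over $\J^c$ --- larger than the stated leading term $\tmu n^{1/2}p/\tl^{1/2}$ by a factor of order $p^{1/2}/\tl^{1/2}$. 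The paper (via Proposition~\ref{prop:missingone} and Appendix~\ref{app:proofs-lowdim-missing}) instead anti-concentrates $q_j-Q_{j,j^c}Q_{j^c,j^c}^{-1}q_{j^c}$, for which the exclusion event is a slab of width exactly $2\mu$; the price is that the slab's center $\mu Q_{j,j^c}Q_{j^c,j^c}^{-1/2}\gamma^{j}_{\mu,Q}(q_{j^c})$ depends on the remaining coordinates, so the event is no longer convex in $q$ and one must use the Lipschitz version of Berry--Esseen \eq{berryL} on the soft indicator $f^{j}_{\mu,Q}$, whose Lipschitz constant is $O(\mu^{-1})$ by Lemma~\ref{lemma:lipschitz}. \emph{That} is the origin of the $p^{7/2}/(\tmu n)$ term; your explanation of it as a rescaling of the convex-set Berry--Esseen error is not correct --- in your fixed-slab formulation the event is convex, \eq{berry} applies with no $\tmu^{-1}$ inflation, and your argument neither needs nor produces that term, but it also cannot produce the stated leading term. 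So what you obtain is a valid but different bound (weaker in $p$ and $\tl$, with no $(\tmu n)^{-1}$ term), not the second bound as displayed.
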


The first bound simply requires that $\mu$ tends to zero faster than $n^{-1/2}$, but the constant is exponential in $p$, while the second bound required that $\mu$ does not tend to zero too fast, i.e., between $n^{-1/2}$ and $n^{-1}$ (with constants polynomial in $p$). As shown in Appendix~\ref{app:proofs-lowdim-high}, the two bounds correspond to two different applications of Berry-Esseen inequalities, one for all the possible $3^p$ sign patterns, one using a detailed analysis of the non-selection of a given variable (see \mysec{finer}). 
We are currently exploring the possibility of having a bound that shares the positive aspects of our two bounds---polynomial in $p$ and without the term $(\tmu n)^{-1}$.

\vspace*{.5cm}

Among the five previous regimes, the only ones with consistent estimates (in norm) and a sparsity-inducing effect are $\mu$ tending to zero and $\mu n^{1/2}$ tending to a finite or infinite limit. When $\mu n^{1/2}$ tends to infinity,
we can only hope for model consistent estimates if the consistency condition in \eq{cond} is satisfied. This somewhat disappointing result for the Lasso has led to various improvements on the Lasso to ensure model consistency even when \eq{cond} is not satisfied~\cite{yuanlin,zou,relaxedlasso}. Those are based on adaptive weights based on the non regularized least-square estimate or two-step procedures. We propose in \mysec{support}   alternative
 ways which are based on resampling.
 Before doing so, we derive in the next section finer results that allows to consider the presence or absence in the support set $\hat{J}$ of  a specific variable without considering all corresponding consistent sign patterns.

 \subsection{Probability of not selecting a given variable}
\label{sec:finer}
We can lower and upper bound the probability of not selecting a certain irrelevant variable in  $\J^c$
(see proof in Appendix~\ref{app:proofs-lowdim-missing})---see Proposition~\ref{prop:lowdim-medium2} for a related proposition for relevant variables in $\J$:

 \begin{proposition}
 \label{prop:missingone}
  \label{prop:lowdim-correct}
  Assume \hypreff{model}{inv} and $\tmu \leqslant  \frac{ \minf{\tw} \tl }{2 p^{1/2}}$.  Let ${j} \in \J^c$.
 We have:
 \BEAS
 \P( {j} \in \hat{J}) &\!\! \!\geqslant \!\!\! &  \frac{   \tmu n^{1/2} / 4  }{
1+  \tmu n^{1/2} / 2 \tl^{1/2}} 
 \exp \left( -  \frac{ 2 \tmu^2 }{\tl^2} n p    \right) -   \frac{10 C^{\rm BE}_2 }{\ttau^3 \tl^{1}} \frac{p^{5/2}}{\tmu n}    \! -\!   C^{\rm BE}_2    \frac{ 4  \ttau^3  }
 { \tl^{1/2}} \frac{p^{2}}{ n^{1/2}}   , \\
 \P( {j} \in \hat{J}) & \!\! \!\leqslant \! \!\!&   \frac{\tmu n^{1/2}}{\tl^{1/2}} + 
 \frac{8 C^{\rm BE}_2 }{\ttau^3 \tl} \frac{p^{5/2}}{\tmu n}    +   C^{\rm BE}_2    \frac{ 4  \ttau^3  }
 { \tl^{1/2}} \frac{p^{2}}{ n^{1/2}} .
    \EEAS
   \end{proposition}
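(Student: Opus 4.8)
The plan is to reduce the selection of $j$ to a one-dimensional, nearly Gaussian threshold event, transfer it to an exact Gaussian probability by a Berry--Esseen inequality, and finish with elementary Gaussian estimates.

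First I would exhibit a clean scalar characterisation of $\{j\in\hat J\}$. Let $\hw^{(j)}$ denote the solution of \eq{lasso-eq} when the coordinate $w_j$ is constrained to zero. By the Karush--Kuhn--Tucker conditions, $\hw$ equals $\hw^{(j)}$ (viewed in $\rb^p$ with a zero in position $j$) if and only if $|(q-Q(\hw^{(j)}-\w))_j|\leqslant\reg$; hence $\{j\in\hat J\}$ is exactly $\{|(q-Q(\hw^{(j)}-\w))_j|>\reg\}$. On the high-probability event that $\hw^{(j)}$ recovers $\J$ with the correct signs $\s_\J$---which, by the argument behind Proposition~\ref{prop:lowdim-medium2}, fails only with probability exponentially small in $n/p$---one has $\hw^{(j)}_\J-\w_\J=Q_{\J,\J}^{-1}(q_\J-\reg\s_\J)$, so that $(q-Q(\hw^{(j)}-\w))_j=g_j+\reg a_j$, where $g_j=q_j-Q_{j,\J}Q_{\J,\J}^{-1}q_\J$ is the partial residual correlation and $a_j=Q_{j,\J}Q_{\J,\J}^{-1}\s_\J$ is the consistency coefficient at $j$. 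Thus, up to an exponentially small slack dominated by the polynomial errors below, $\P(j\in\hat J)=\P(|g_j+\reg a_j|>\reg)$.

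Next I would rescale and Gaussianise. The variable $\sqrt n\,g_j$ is a normalised sum of $n$ independent zero-mean subgaussian terms with variance of order $\sigma^2 Q_{j,j|\J}$, which lies between $\sigma^2\lmin(Q)$ and $\sigma^2 M^2$, while both the threshold $\sqrt n\reg$ and the deterministic shift $\sqrt n\reg a_j$ are of order $\tmu\sqrt n\,M\sigma$. The selection event therefore compares a near-Gaussian of standardised scale in $[\tl^{1/2},1]$ to a band of half-width $\asymp\tmu\sqrt n/\tl^{1/2}$. I would approximate the law of the vector $\sqrt n\,q$ (so as to absorb the matrix factors $Q_{\J,\J}^{-1}$) by the corresponding Gaussian using the multivariate Berry--Esseen inequality of Appendix~\ref{app:berry}. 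One application controls the full cumulative distribution and yields the error $C^{\rm BE}_2\frac{4\ttau^3}{\tl^{1/2}}\frac{p^{2}}{n^{1/2}}$, while a second, finer application---transferring only the probability of the thin band of width $\asymp\sqrt n\reg$---yields the delicate term $\frac{C^{\rm BE}_2}{\ttau^3\tl}\frac{p^{5/2}}{\tmu n}$, the factor $(\tmu n)^{-1}$ arising from dividing a distribution-function discrepancy by the small band width. These are precisely the two Berry--Esseen applications alluded to after Proposition~\ref{prop:lowdim-low}.

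Finally I would estimate the resulting Gaussian probability from both sides. Writing $\theta$ for the standardised half-width of the band, worst-casing the shift through $|a_j|\leqslant\|Q_{j,\J}Q_{\J,\J}^{-1}\s_\J\|$ gives $\theta\lesssim 2\tmu\sqrt{np}/\tl$, the extra $\sqrt p/\tl^{1/2}$ coming from the factor $Q_{\J,\J}^{-1}\s_\J$. The two explicit leading terms are then produced by elementary Gaussian estimates: a tail lower bound of the form $\P(|Z|\geqslant\theta)\geqslant\frac{\theta}{1+\theta^2}\sqrt{2/\pi}\,e^{-\theta^2/2}$ gives, after squaring the threshold, the factor $\exp(-\frac{2\tmu^2}{\tl^2}np)$ together with the prefactor $\frac{\tmu n^{1/2}/4}{1+\tmu n^{1/2}/2\tl^{1/2}}$ of the lower bound, while a density estimate (half-width $\asymp\tmu\sqrt n$ times the maximal density $(2\pi\sigma^2\lmin(Q))^{-1/2}$) gives the leading term $\tmu n^{1/2}/\tl^{1/2}$ of the upper bound. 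The main obstacle is to make the step-two reduction quantitative: controlling the probability that $\hw^{(j)}$ fails to select exactly $\J$ (so that the clean formula $g_j+\reg a_j$ applies) and, above all, propagating the Berry--Esseen discrepancy through a band whose half-width $\asymp\sqrt n\reg$ shrinks with $n$. This is the source of the fragile $p^{5/2}/(\tmu n)$ term and of the requirement that a power of $p$ be small compared with $n$.
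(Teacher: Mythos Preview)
Your reduction has a genuine gap in the second paragraph. You claim that on the high-probability event from Proposition~\ref{prop:lowdim-medium2}, the constrained solution $\hw^{(j)}$ satisfies $\hw^{(j)}_\J-\w_\J=Q_{\J,\J}^{-1}(q_\J-\reg\s_\J)$, yielding the clean linear expression $g_j+\reg a_j$ for the optimality residual at $j$. But Proposition~\ref{prop:lowdim-medium2} only guarantees that $\sign(\hw^{(j)})_\J=\s_\J$, i.e.\ that $\J$ is \emph{contained} in the support with the correct signs; it does \emph{not} say the support equals $\J$. Indeed, by Proposition~\ref{prop:lowdim-medium}, every sign pattern consistent with $\s_\J$ on $\J$ occurs with strictly positive probability in the regime $\tmu n^{1/2}\asymp 1$, so the support of $\hw^{(j)}$ is genuinely random and your closed-form formula for the residual is in general false. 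The actual residual is $-q_j+Q_{j,J}Q_{J,J}^{-1}q_J-\reg Q_{j,J}Q_{J,J}^{-1}s_J$ for the \emph{random} support $J\supset\J$ and signs $s_J$ realised by $\hw^{(j)}$, and this cannot be replaced by a fixed $g_j+\reg a_j$ up to an exponentially small error.

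The paper deals with this by not assuming any particular support: it writes the constrained solution as $\hw^{(j)}_{j^c}=\w_{j^c}+Q_{j^c,j^c}^{-1}q_{j^c}+\reg\, Q_{j^c,j^c}^{-1/2}\gamma_{\mu,Q}^{j}(q_{j^c})$ for a function $\gamma$ encoding the $\ell^1$-subgradient, proves via Lemma~\ref{lemma:lipschitz} that $\gamma$ is $O(1/\reg)$-Lipschitz in $q_{j^c}$, and then applies the Lipschitz Berry--Esseen inequality \eq{berryL} to a soft (triangle-shaped) indicator of the band event $\{j\notin\hat J\}$. The term $p^{5/2}/(\tmu n)$ does not come from ``a second, finer Berry--Esseen application'' as you suggest, but from this single application whose Lipschitz constant scales like $1/\reg$: the two error pieces are simply the $M_1$ and $M_2$ contributions in \eq{berryL}. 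The Gaussian lower bound is then obtained by conditioning on $u_{j^c}$ and applying an elementary band estimate $\P(|u-\beta|\leqslant\alpha)\geqslant\frac{\alpha}{1+\alpha}e^{-\beta^2/2}$, with $\alpha\sim\tmu n^{1/2}$ and $\beta$ the (random) normalised centre bounded by $\tmu n^{1/2}p^{1/2}/\tl$; this is what produces the prefactor $\frac{\tmu n^{1/2}/4}{1+\tmu n^{1/2}/2\tl^{1/2}}$ together with the exponential $\exp(-2\tmu^2 np/\tl^2)$. Your Mills-ratio tail bound $\P(|Z|\geqslant\theta)\geqslant\frac{\theta}{1+\theta^2}\sqrt{2/\pi}\,e^{-\theta^2/2}$ cannot give this form, since the $\theta$ appearing in the prefactor and in the exponent would have to differ by a factor $p^{1/2}/\tl^{1/2}$.
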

 This novel proposition allows to consider ``marginal'' probabilities of selecting (or not selecting) a given variable, without considering all consistent sign patterns associated with the selection (or non-selection) of that variable). Note that it makes interesting claims only when $\mu n^{1/2}$ is bounded from above and below (for the lower bound) and when $\mu n^{1/2}$ tends to zero, while $\mu n$ tends to infinity (for the upper bound).
 
 \section{Support Estimation by Intersection}
 \label{sec:support}
 The results from \mysec{medium} exactly show that under suitable choices of the regularization parameter $\mu$, the relevant variables are stable while the irrelevant are unstable, leading to several intersecting arguments to keep only the relevant variables. We first consider the irrealistic situation where we have multiple independent copies, then we consider splitting a dataset in several pieces, and we finally present two usual types of bootstrap (pairs and residuals). Note that an alternative approach is to resample the columns of the design matrix instead of its rows, i.e., draw random weights for each variable from a well-chosen distribution~\cite{stability}.

 The analysis of support estimation is essentially the same for all methods and is based on the following argument: we consider $m$ ``replications'', and $\hat{J}^1,\dots, \hat{J}^m$ the associated active sets. The replications are assumed independent given the original data (i.e., the vector of noise $\varepsilon$). We let denote $\hat{J}^\cap = \bigcap_{i=1}^m \hat{J}^i$ the estimate of the active set (given the original data).
 Once the active set is found, the final estimate of $w$ is obtained by the unregularized least-square estimate, restricted to the estimated active set.

We can upper bound the probability of incorrect pattern selection as follows:
\BEAS
\P( \hat{J}^\cap \neq \J )
& \leqslant & \P( \J^c \cap  \hat{J}^\cap \neq \varnothing )
+
\P( \J \cap  (\hat{J}^\cap)^c \neq \varnothing ) , \\
& \leqslant & \sum_{j \in \J^c} 
\P(  \forall i \in \{1,\dots,m\}, j \in  \hat{J}^i )
+
\P\left(   \bigcup_{i=1}^m
\left[  (\hat{J}^i)^c \cup \J   \right] \neq \varnothing \right), \\
& \leqslant & 
 \sum_{j \in \J^c} \E ( 
\P(  j \in  \hat{J}^\ast | \varepsilon)^m )
+ m 
\P(      (\hat{J}^\ast)^c \cup \J   \neq \varnothing ),
\EEAS
where $ \hat{J}^\ast$ denotes a generic support obtained from one replication.
We now need to upper bound the probability
$\P(      (\hat{J}^\ast)^c \cup \J   \neq \varnothing ) $ of forgetting at least a relevant variable $j\in \J$, and also the probability 
$\P(  j \in  \hat{J}^\ast | \varepsilon)$ that a replication does not include a given irrelevant variable $j \in \J^c$ (given the original data). The first term will always drop as the number of replications gets larger, while the second term increases, leading to a natural trade-off for the choice of the number $m$ of replications. This is to be contrasted with usual applications of the boostrap where $m$ is taken as large as computationally feasible.

   \subsection{Multiple independent copies}
   \label{sec:lowdim-copies}
   Let us assume for a moment that we have $m$ independent copies of similar datasets, with potentially different fixed designs but same noise distribution. We then have $m$ different active sets and we
 denote by $\hat{J}^\cap$ the intersection of the $m$ active sets.
We have the following upper bound on the probability of non selecting the correct pattern
(see proof in Appendix~\ref{app:proofs-lowdim-copies})

 \begin{proposition}
  \label{prop:lowdim-copies}
  Assume \hypreff{model}{inv} for $m$ independent datasets with same noise distribution,
  and $\tmu \leqslant  \frac{ \minf{\tw} \tl }{2 p^{1/2}}$. If $c=\tmu n^{1/2} p^{1/2} >0 $, $\tmu \leqslant  \frac{ \minf{\tw} \tl }{2 p^{1/2}}$ and $n \geqslant p^6 g(c)$, then there exists 
  $f(c ) > 0 $ such that
    $$
  \P( \hat{J}^\cap \neq \J )
  \leqslant p e^{-f(c) m p^{-1/2}} + 2p m   \exp\left( - \frac{\minf{\tw} \tl^2 }{8\ttau^2 } \frac{n}{p}\right). 
  $$
    \end{proposition}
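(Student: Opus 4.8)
The plan is to build on the generic union bound displayed just before the statement, specialized to the case of genuinely independent replications. Since the $m$ datasets (each with its own noise vector) are independent, the event $\J^c \cap \hat{J}^\cap \neq \varnothing$ that some irrelevant variable survives in \emph{all} active sets has probability at most $\sum_{j \in \J^c}\prod_{i=1}^m \P(j \in \hat{J}^i)$, while the event $\J \cap (\hat{J}^\cap)^c \neq \varnothing$ that some relevant variable is missing from \emph{at least one} active set has probability at most $\sum_{i=1}^m \P(\J \not\subseteq \hat{J}^i)$. It therefore suffices to control (i) the per-replication probability of dropping a relevant variable, and (ii) the per-replication probability of selecting a fixed irrelevant variable, uniformly over the $m$ designs, all of which satisfy \hypreff{model}{inv} with the same constants.

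For (i), I would use that if $\sign(\hat{w})_\J = \sign(\w_\J)$ then every relevant variable is selected, so $\P(\J \not\subseteq \hat{J}^i) \leq \P(\sign(\hat{w}^i)_\J \neq \sign(\w_\J))$. This last quantity is bounded exactly as in the first term of the sign-pattern bound of Proposition~\ref{prop:lowdim-high} (the estimate that also underlies Proposition~\ref{prop:lowdim-medium2}): the local expansion $\hat{w} \approx \w + \mu\Delta$ together with $\tmu \leq \minf{\tw}\tl/(2p^{1/2})$ forces, on the bad event, one of the $|\J| \leq p$ subgaussian coordinates of $q = \tfrac1n X^\top \varepsilon$ to deviate by an amount comparable to $\minf{\w}$, so \hypref{var} and a union bound give $\P(\J \not\subseteq \hat{J}^i) \leq 2p\exp\left(-\frac{\minf{\tw}\tl^2}{8\ttau^2}\frac{n}{p}\right)$. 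Summing over the $m$ replications yields the second term in the claimed bound.

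For (ii), the point is that when $c = \tmu n^{1/2}p^{1/2}$ is held fixed, the marginal probability that a given $j \in \J^c$ is \emph{not} selected stays bounded away from zero. Concretely I would invoke the upper bound of Proposition~\ref{prop:lowdim-correct}, namely $\P(j \in \hat{J}^i) \leq \tmu n^{1/2}\tl^{-1/2} + 8 C^{\rm BE}_2 \ttau^{-3}\tl^{-1} p^{5/2}(\tmu n)^{-1} + 4 C^{\rm BE}_2 \ttau^3 \tl^{-1/2} p^2 n^{-1/2}$; for $n \geq p^6 g(c)$, with $g(c)$ chosen so that the two Berry--Esseen remainders are swamped, this is at most $1 - f(c)p^{-1/2}$ for an explicit $f(c)>0$ depending only on $c$ and the fixed constants, at least once $p$ exceeds a $c$-dependent threshold. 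In the remaining regime (where $c$ is large enough that the above is vacuous), I would instead use that the limiting selection probabilities $f(s,c)$ of Proposition~\ref{prop:lowdim-medium} are strictly below one on every pattern consistent with $\w$ on $\J$, which again gives $\P(j \in \hat{J}^i) \leq 1 - f(c)p^{-1/2}$ once $n \geq p^6 g(c)$. Then $\prod_{i=1}^m \P(j \in \hat{J}^i) \leq (1 - f(c)p^{-1/2})^m \leq e^{-f(c)mp^{-1/2}}$, and summing over the at most $p$ indices $j \in \J^c$ produces the first term $p\,e^{-f(c)mp^{-1/2}}$. Adding the two contributions gives the stated inequality.

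The main obstacle is step (ii): one must establish the lower bound $\P(j \notin \hat{J}^i) \geq f(c)p^{-1/2}$ \emph{uniformly} over all admissible designs $Q$ (those with $\lmin(Q)/M^2 \geq \tl$ and entries bounded by $M$), and one must keep enough control of the Berry--Esseen constants that the single polynomial requirement $n \geq p^6 g(c)$ dominates them. The delicate case is $c \to \infty$, where the marginal selection probability tends to $0$ or $1$ according to whether the consistency condition \eq{cond} holds; this is precisely where $f(c)$ must be allowed to degrade and where the factor $g(c)$ in the sample-size requirement becomes unavoidable.
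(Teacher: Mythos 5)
Your overall architecture is the paper's: the generic union-bound decomposition from \mysec{support} specialized to independent replications, the term $m\,\P((\hat{J}^\ast)^c\cap\J\neq\varnothing)$ controlled exactly as in Proposition~\ref{prop:lowdim-medium2} (giving the second term of the bound), and the term $\sum_{j\in\J^c}\P(j\in\hat{J}^\ast)^m$ controlled by showing the per-replication selection probability of an irrelevant variable is at most $1-f(c)p^{-1/2}$. Step (i) is fine and matches the paper.

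The gap is in step (ii), and it sits exactly where you flag ``the main obstacle.'' What is needed is a \emph{lower} bound $\P(j\notin\hat{J}^\ast)\geqslant f(c)p^{-1/2}$ that holds uniformly for \emph{every} value of $c=\tmu n^{1/2}p^{1/2}>0$. Your primary tool, the upper bound of Proposition~\ref{prop:lowdim-correct}, $\P(j\in\hat{J}^\ast)\leqslant \tmu n^{1/2}\tl^{-1/2}+\cdots = c\,p^{-1/2}\tl^{-1/2}+\cdots$, only yields this when $c\,p^{-1/2}$ is bounded away from $1$ --- a restriction ($p$ above a $c$-dependent threshold) that is not among the hypotheses, since the proposition only assumes $n\geqslant p^6 g(c)$. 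Your fallback for the complementary regime, appealing to Proposition~\ref{prop:lowdim-medium} and the fact that the limits $f(s,c)$ lie in $(0,1)$, is qualitative: those limits are Gaussian orthant probabilities depending on the design $Q$, and nothing in Proposition~\ref{prop:lowdim-medium} extracts from them a uniform, $p^{-1/2}$-scaled lower bound on $\P(j\notin\hat{J}^\ast)$ over all admissible designs. The paper closes this gap with the \emph{first} inequality of Proposition~\ref{prop:lowdim-correct} (which, as its proof in Appendix~\ref{app:proofs-lowdim-missing} makes clear, is a lower bound on the non-selection probability): the soft-indicator function $f^{j}_{\mu,Q}$, the Lipschitz Berry--Esseen bound \eq{berryL}, and the elementary Gaussian anti-concentration Lemma~\ref{lemma:special} give
$$
\P(j\notin\hat{J}^\ast)\;\geqslant\; \frac{\tmu n^{1/2}/4}{1+\tmu n^{1/2}/2\tl^{1/2}}\exp\left(-\frac{2\tmu^2}{\tl^2}np\right)\;-\;\text{(Berry--Esseen remainders)},
$$
whose leading term is of order $h(c)p^{-1/2}$ for \emph{all} $c>0$ (it degrades like $e^{-2c^2/\tl^2}$ as $c\to\infty$, which is why $f(c)\to0$ there), and whose remainders are absorbed precisely by $n\geqslant p^6 g(c)$. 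With that estimate in hand, $(1-h(c)p^{-1/2})^m\leqslant e^{-h(c)mp^{-1/2}}$ finishes the argument; without it, your proof does not cover the large-$c$ (equivalently, small-$p$ relative to $c^2$) regime.
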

    From the proof of Proposition~\ref{prop:lowdim-copies} in Appendix~\ref{app:proofs-lowdim-copies}, we can get the detailed behavior of $f(c)$ around $c=0$ and $c=\infty$: it goes to zero in both cases, i.e., we actually need (in the bound) a regularization parameter that is proportional to $n^{-1/2}$.

Moreover, we get an exponential convergence rate in $n$ and $m$, where we have two parts: one that states that the number of copies should be as large as possible to remove irrelevant variables (left part), and one that states that $m$ should not be too large, otherwise, some relevant variables would start to disappear (right part). Note that best scaling (for the bound) is $m\approx n$, leading to a probability of incorrect selection that goes to zero exponentially fast in $n$.

Of course, in practice, one is not given multiple  independent copies of the same datasets, but a single one. One strategy is to split it in different pieces, as described in \mysec{splitting}; this however relies on having enough data to get a large number of pieces, which is unlikely to happen in practice.  Our main goal is this paper is to show that by using the bootstrap, we can mimic the availability of having multiple copies. This will come at a price, namely an overall convergence rate of $n^{-1/2}$ instead of exponential in $n$

\subsection{Splitting into pieces}
\label{sec:splitting}
We can cut the dataset into $m$ pieces of the same size, a procedure reminiscent of cross-validation. However, it requires extra-assumption on the design, i.e.,  we  need to assume that the smallest eigenvalues of the data matrices of length $n/m$ are still strictly positive  (see proof in Appendix~\ref{app:proofs-lowdim-cutting}):
 
\begin{proposition}
  \label{prop:lowdim-cutting}
  Assume \hypreff{model}{inv} for $m$ disjoint subdatasets of the original dataset, and $\tmu \leqslant  \frac{ \minf{\tw} \tl }{2 p^{1/2}}$. If $c = \tmu n^{1/2} m^{-1/2} p^{1/2}> 0$, there exists $f(c),a(c)>0$ such that:
  $$
  \P( \hat{J}^\cap \neq \J )
  \leqslant   p \left( 
1 - e^{-f(c) p^{-1/2}}\!\!\!  + h(c) \frac{ p^{5/2} m^{1/2}}{ n^{1/2}}  \right)^m \!\!\!
 +2p m   \exp\left( - \frac{\minf{\tw} \tl^2 }{8\ttau^2 } \frac{n}{mp} \right). $$
    \end{proposition}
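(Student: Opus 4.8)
The plan is to run the generic intersection argument of \mysec{support} with the $m$ ``replications'' being the Lasso estimates $\hat{w}^{(1)},\dots,\hat{w}^{(m)}$ (with supports $\hat{J}^{(1)},\dots,\hat{J}^{(m)}$) obtained on the $m$ disjoint subdatasets of size $n/m$. The key structural remark is that, since these subdatasets use disjoint blocks of observations and $\varepsilon$ has independent components, the $\hat{w}^{(i)}$ are mutually \emph{independent} (not merely independent given $\varepsilon$, as in the bootstrap cases). Hence, starting from
$$\P(\hat{J}^\cap\neq\J)\leqslant\P(\J^c\cap\hat{J}^\cap\neq\varnothing)+\P(\J\cap(\hat{J}^\cap)^c\neq\varnothing),$$
I would bound the first term by a union bound over $j\in\J^c$ followed by independence, $\P(\J^c\cap\hat{J}^\cap\neq\varnothing)\leqslant\sum_{j\in\J^c}\prod_{i=1}^m\P(j\in\hat{J}^{(i)})$, and the second by $\P(\J\cap(\hat{J}^\cap)^c\neq\varnothing)\leqslant\sum_{i=1}^m\P(\J\cap(\hat{J}^{(i)})^c\neq\varnothing)\leqslant m\max_i\P(\J\cap(\hat{J}^{(i)})^c\neq\varnothing)$. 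Throughout, $\tl$ is read as a common lower bound on the normalized smallest eigenvalues $\lambda_{\min}(Q^{(i)})/M^2$ of the $m$ subdesigns --- exactly the extra positivity assumption on the length-$n/m$ data matrices flagged before the statement --- while $\tmu$, $\minf{\tw}$ and $\ttau$ are unchanged since they do not depend on the sample size.

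For the ``omitting a relevant variable'' term, I would view each piece as a standalone dataset of size $n/m$ satisfying \hypreff{model}{inv} and $\tmu\leqslant\minf{\tw}\tl/(2p^{1/2})$, and apply the sign-consistency-on-$\J$ estimate that already controls this event inside Propositions~\ref{prop:lowdim-high}--\ref{prop:lowdim-medium2}: $\P(\J\cap(\hat{J}^{(i)})^c\neq\varnothing)\leqslant 2p\exp(-\minf{\tw}\tl^2(n/m)/(8\ttau^2 p))$. Summing over $i$ produces the summand $2pm\exp(-\minf{\tw}\tl^2 n/(8\ttau^2 mp))$.

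For the ``keeping an irrelevant variable'' term, I would apply, again per piece with $n$ replaced by $n/m$, the upper bound of Proposition~\ref{prop:lowdim-correct} on $\P(j\in\hat{J})$. Writing $c=\tmu n^{1/2}m^{-1/2}p^{1/2}$ gives $\tmu(n/m)^{1/2}=c\,p^{-1/2}$ and $\tmu(n/m)\to\infty$, so this bound rearranges into
$$\P(j\in\hat{J}^{(i)})\leqslant\frac{c}{\tl^{1/2}}\,p^{-1/2}+h(c)\,\frac{p^{5/2}m^{1/2}}{n^{1/2}}$$
for a suitable $h(c)>0$ that collects the (vanishing) Berry--Esseen error terms up to $c$-dependent factors. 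Choosing $f(c)>0$ so that $c\,\tl^{-1/2}p^{-1/2}\leqslant 1-e^{-f(c)p^{-1/2}}$ --- which holds, e.g., with $f(c)=2c/\tl^{1/2}$ via $1-e^{-x}\geqslant x/2$ on $[0,1]$, in the range of $c$ where the resulting bound is informative --- turns this into $\P(j\in\hat{J}^{(i)})\leqslant 1-e^{-f(c)p^{-1/2}}+h(c)p^{5/2}m^{1/2}/n^{1/2}$. Multiplying over the $m$ independent pieces and summing over the at most $p$ indices $j\in\J^c$ gives the first summand $p(1-e^{-f(c)p^{-1/2}}+h(c)p^{5/2}m^{1/2}/n^{1/2})^m$, and adding the two contributions yields the claimed bound.

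The main obstacle I anticipate lies in this last step: propagating the effective sample size $n/m$ through Proposition~\ref{prop:lowdim-correct}, cleanly isolating the single term $\tmu(n/m)^{1/2}/\tl^{1/2}$ that does not vanish as $n\to\infty$ and must be absorbed into $1-e^{-f(c)p^{-1/2}}$, collecting all remaining error into one $h(c)\,p^{5/2}m^{1/2}/n^{1/2}$ (keeping careful track of the power of $p$, which is again limited by the Berry--Esseen step and could presumably be sharpened), and verifying that $f(c),h(c)$ are strictly positive and finite for every $c>0$ with the expected degenerate behavior near $c=0$ and $c=\infty$ (so that, analogously to Proposition~\ref{prop:lowdim-copies}, a regularization parameter proportional to $(n/m)^{-1/2}$ is needed). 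A secondary, more routine point is the uniformity of all per-piece constants, which is handled by the common eigenvalue lower bound $\tl$ and by checking that each invocation of Propositions~\ref{prop:lowdim-high}--\ref{prop:lowdim-correct} only uses a lower bound on $\lambda_{\min}(Q^{(i)})$.
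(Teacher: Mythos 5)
Your overall skeleton---the intersection argument of \mysec{support}, full independence of the $m$ disjoint pieces, the per-piece application of the low-dimensional results with $n$ replaced by $n/m$ and $\tl$ reinterpreted as a uniform eigenvalue lower bound for the sub-designs---is exactly what the paper intends (its own proof is a one-line reduction to the proof of Proposition~\ref{prop:lowdim-copies}), and your treatment of the relevant-variable term via Proposition~\ref{prop:lowdim-medium2} applied with sample size $n/m$ matches the paper's.

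The gap is in the irrelevant-variable term. You invoke the second display of Proposition~\ref{prop:lowdim-correct} as an upper bound on $\P(j\in\hat{J})$, giving $\P(j\in\hat{J}^{(i)})\leqslant c\,p^{-1/2}\tl^{-1/2}+\cdots$. But that display is in fact an upper bound on $\P(j\notin\hat{J})$ (the statement of that proposition carries a sign typo): this is how it is derived in Appendix~\ref{app:proofs-lowdim-missing}, where the controlled quantity satisfies $\E f_{\mu,Q}^{j}(q)\leqslant\P(j\notin\hat{J})$, and how it is used in the proof of Proposition~\ref{prop:lowdim-low}. Read your way, it would say that for $\mu n^{1/2}\to 0$ an irrelevant variable is almost never selected, contradicting Proposition~\ref{prop:lowdim-low}, which shows all variables are then selected with probability tending to one. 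So the step $\P(j\in\hat{J}^{(i)})\leqslant 1-e^{-f(c)p^{-1/2}}+h(c)p^{5/2}m^{1/2}n^{-1/2}$ is not established; moreover, even taken at face value your route is vacuous once $c\geqslant\tl^{1/2}p^{1/2}$, whereas the statement is claimed for every $c>0$. What the paper actually uses is the \emph{first} display of Proposition~\ref{prop:lowdim-correct}, i.e., the \emph{lower} bound on the probability of non-selection: per piece, with $\tmu (n/m)^{1/2}=c\,p^{-1/2}$ and $\tmu^2 (n/m) p=c^2$, it yields $\P(j\notin\hat{J}^{(i)})\geqslant 2h(c)p^{-1/2}-(\mbox{Berry--Esseen errors})$ with $h(c)=\frac{1}{2}\frac{c/4}{1+c/2\tl^{1/2}}\exp(-2c^2/\tl^2)>0$ for all $c>0$, hence $\P(j\in\hat{J}^{(i)})\leqslant 1-2h(c)p^{-1/2}+(\mbox{errors})$. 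This is the source of the factor $e^{-f(c)p^{-1/2}}$ with $f(c)=-\log(1-h(c))$ and of the validity of the bound for every $c>0$; your decomposition goes through verbatim once this substitution is made.
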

  
  The proposition above requires that $m/n$ tends to zero, i.e., there should not be too many pieces (which is also required to allow invertibility of the sub-designs).
  Note that several independent partitions could be considered, and would lead to results similar to the ones for the bootstrap presented in the next two sections~\cite{stability}.

 \subsection{Random pair bootstrap}
\label{sec:bootstrap}
 Given the $n$  observations $(x_i,y_i) \in \rb^p \times \rb$, $i=1,\dots,n$, put together into matrices
$X \in \rb^{ n \times p}$ and $y \in \rb^n$, we consider $m$ \emph{bootstrap} replications of the $n$ data points~\cite{efron}; that is, for $k=1,\dots,m$, we consider a \emph{ghost sample}
$(x^{k}_i,y^k_i)\in \rb^p \times \rb$, $i=1,\dots,n$, given by matrices
$X^k \in \rb^{ n \times p}$ and $y^k \in \rb^n$. For each $k \in \{1,\dots,m\}$, the $n$ pairs $(x^{k}_i,y^k_i)$,
$i=1,\dots,n$, are sampled uniformly and independently at random \emph{with replacement} from the $n$ original pairs in $(X,y)$. Some pairs $(x_i,y_i)$ are not selected, some selected once, some selected twice, and so on.
Note   that we could consider bootstrap replications with more or less points than $n$, but for simplicity, we keep it the same as the original number of data points. 

 The following proposition shows that we obtain a consistent model estimate by intersecting the active sets $\hat{J}^1,\dots, \hat{J}^m$ obtained from running the Lasso on each bootstrap sample
 $(X^1,y^1),\dots,(X^m,y^m)$, a procedure we refer to as the \emph{Bolasso} (see proof in Appendix~\ref{app:proofs-bolasso}):
 \begin{proposition}
  \label{prop:lowdim-bolasso-pairs}
  Assume \hypreff{model}{inv}.  If $c = \tmu n^{1/2}  p^{1/2}> 0$, there exists strictly positive constants $A_0,\dots,A_7$ that may depend on $c$ such that if  $n p^{-6}\geqslant A_6$ and $mp^{-1} \geqslant A_7$, we have, for boostrapping \emph{pairs}:
\begin{multline*}
 \P( \hat{J}^\cap \neq \J )
\leqslant
m p \exp \left( - A_0 \frac{n^{1/2}}{p^{1/2}} \right) 
+  A_4  \left(  A_3  \frac{p^{3}}{ n^{ 1/2}} +  \frac{ \log m }{m} \right)^{
1 +  A_5 \left( 2 \log 
 \left(  A_3 \frac{p^{3}}{ n^{ 1/2}} +  \frac{ \log m }{m}  \right)\right)^{-1/2}} \!\!\!\!.
\end{multline*}
    \end{proposition}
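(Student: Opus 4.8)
The plan is to follow the union bound displayed in \mysec{support}: writing $\hat{J}^\ast$ for the active set of a single bootstrap replication and conditioning on the original noise $\varepsilon$,
\[
\P( \hat{J}^\cap \neq \J )
\;\leqslant\;
\sum_{j \in \J^c} \E\!\big( \P( j \in \hat{J}^\ast \mid \varepsilon)^m \big)
\;+\; m\, \P\big( \J \not\subseteq \hat{J}^\ast \big) ,
\]
and to bound the two terms separately. The second term (``some replication drops a relevant variable'') will produce the contribution $m p\exp(-A_0 n^{1/2}p^{-1/2})$; the first term (``some irrelevant variable survives in all $m$ replications'') will produce the remaining, slowly decaying contribution, and is the heart of the argument.

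For the second term, I would transfer the stability of the relevant variables established in \mysec{medium} to the bootstrap problem, conditionally on $\varepsilon$. The resampled design $Q^k=\tfrac1n(X^k)^\top X^k$ automatically obeys \hypref{bounded} (its rows are rows of $X$), and writing $Q^k-Q=\tfrac1n\sum_i (N_i^k-1)\,x_i x_i^\top$ with $(N_i^k)$ the multinomial resampling counts, a Bernstein bound based on $\|x_i\|_\infty\leqslant M$ shows that $Q^k$ is invertible with condition number comparable to $\tl$ and that the resampled noise correlation stays close to $q$, off an event of probability at most $p\exp(-A_0 n^{1/2}p^{-1/2})$. On the complementary event the conditional problem is again a bounded, well-conditioned Lasso with sub-Gaussian effective noise, so the analysis of \mysec{medium} applied conditionally gives $\J\subseteq\hat J^k$ with high conditional probability; a union bound over $k=1,\dots,m$ gives the stated first term (the $n^{1/2}/p^{1/2}$ rate is weaker than what this step can achieve, but it suffices and it absorbs the $n^{-1/2}$-scale size of the bootstrap perturbation).

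For the first term, fix $j\in\J^c$. Conditionally on $\varepsilon$ and on the (conditionally overwhelming) event $\J\subseteq\hat J^k$, whether $j$ enters $\hat J^k$ is governed by a reduced optimality condition comparing a reduced gradient $g_j^k$ with $\mu$, and $g_j^k=g_j(\varepsilon)-Z^k$ where $g_j(\varepsilon)$ is deterministic given $\varepsilon$ and $Z^k$ is the bootstrap fluctuation, which conditionally is a normalized sum of $n$ i.i.d.\ bounded terms with zero mean and variance of order $n^{-1}$. The multivariate Berry--Esseen inequality of Appendix~\ref{app:berry}, applicable because \hypref{bounded} and the sub-Gaussian tails \hypref{var} control the relevant moments, then yields (up to a negligible correction from the event $\J\not\subseteq\hat J^k$)
\[
\P( j\in\hat J^\ast\mid\varepsilon)\;\leqslant\; 1-G(\varepsilon)+A_3\,p^3 n^{-1/2} ,
\]
where $G(\varepsilon)\geqslant0$ is, up to constants, $\mu n^{1/2}$ times a standard Gaussian density evaluated at a rescaling of $n^{1/2}g_j(\varepsilon)$; hence $G(\varepsilon)\geqslant g_0(T)\asymp \mu n^{1/2}\,e^{-cT^2}>0$ whenever $|n^{1/2}g_j(\varepsilon)|\leqslant T$, while by \hypref{var}, $n^{1/2}g_j(\varepsilon)$ being a sub-Gaussian linear functional of $\varepsilon$, the complementary event has probability at most $2\exp(-c'T^2)$. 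Splitting the expectation over these two events and bounding the conditional probability by $1$ on the second gives
\[
\E\!\big( \P( j\in\hat J^\ast\mid\varepsilon)^m \big)\;\leqslant\;\big(1-g_0(T)+A_3 p^3 n^{-1/2}\big)^m+2\exp(-c'T^2) ,
\]
and optimizing over the free threshold $T$ — balancing $\exp(-m\,g_0(T))$ against $\exp(-c'T^2)$, with the Berry--Esseen floor $A_3 p^3 n^{-1/2}$ folded in — produces the term $A_4\big(A_3 p^3 n^{-1/2}+\tfrac{\log m}{m}\big)^{1+A_5(\cdots)^{-1/2}}$ after summing over the at most $p$ indices $j\in\J^c$; the $\log m/m$ rate and the sub-unit correction to the exponent are artifacts of this optimization, the logarithm arising from $g_0(T)\asymp e^{-cT^2}$. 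The hypothesis $np^{-6}\geqslant A_6$ is what forces the Berry--Esseen error to stay below $g_0(T)$ at the optimal $T$ (and keeps all such errors below $1$), while $mp^{-1}\geqslant A_7$ puts $m$ in the range where the optimized exponent exceeds $1$ so the bound genuinely decays; this is also why $m$ cannot be taken arbitrarily large here, in contrast with the exponential-in-$n$ rate of Proposition~\ref{prop:lowdim-copies}.

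The main obstacle is the first term, and within it two points require care. First, the Berry--Esseen step is for the \emph{resampling} distribution conditionally on $\varepsilon$: one must verify that the bootstrap summands have bounded second and third moments — which forces one to control empirical noise quantities such as $\tfrac1n\sum_i|\varepsilon_i|^3$ and the empirical residual variance (so the latter can be replaced by $\sigma^2$-type constants inside $G(\varepsilon)$) — and to track how the approximating Gaussian quantity depends on $\varepsilon$ through the sample's empirical moments. Second, the passage from the conditional inclusion probability to its $m$-th moment cannot use $\E(\P^m)\leqslant(\sup_\varepsilon\P)^m$, since $\P(j\in\hat J^\ast\mid\varepsilon)$ can be arbitrarily close to $1$ on exceptional noise realizations; the split-at-threshold-and-optimize device is therefore unavoidable, and it is the source both of the non-standard exponent in the statement and of the fact that bootstrapping pairs achieves only the polynomial $n^{-1/2}$ rate rather than the exponential one available with truly independent copies.
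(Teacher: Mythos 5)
Your proposal follows essentially the same route as the paper's proof in Appendix~\ref{app:proofs-bolasso}: the same decomposition into $m\,\P(\J \not\subseteq \hat{J}^\ast)$ plus $\sum_{j\in\J^c}\E(\P(j\in\hat{J}^\ast\mid\varepsilon)^m)$, the same treatment of the first part via concentration of $Q^\ast$ and $q^\ast$ around $Q$ and $q$ (with the $n^{1/2}/p^{1/2}$ rate forced, as in the paper, by truncating $\|\varepsilon\|_\infty$), and the same treatment of the second part via a conditional Berry--Esseen bound giving $\P(j\in\hat{J}^\ast\mid\varepsilon)\leqslant 1-G(\varepsilon)+A_3 p^3 n^{-1/2}$, followed by a split at a threshold on the normalized reduced gradient and an optimization of that threshold against the $(1-g_0)^m$ term, which is exactly the source of the paper's choice of $z$ and of the exotic exponent. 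Your identification of the two genuine difficulties (controlling the empirical moments of $\varepsilon$ entering the conditional Berry--Esseen step, and the failure of $\E(\P^m)\leqslant(\sup_\varepsilon\P)^m$) matches the paper's handling, so the outline is correct.
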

Note that in Proposition~\ref{prop:lowdim-bolasso-pairs}, for any $\eta>0$, if $n$ and $m$ are large enough, then we get an upper bound on the probability of incorrect model selection of the form
$  B_1 m   e^{-B_2 n^{1/2}}
+  \left( \frac{B_3}{ n^{1/2}} + B_4 \frac{ \log m }{m} \right)^{1+\eta}
$, where $B_1,\dots,B_4$ are positive constants. Note that in~\cite{bolasso}, we have derived a bound with better behavior in $n$, i.e., with $\eta = 0$. However, the bound in~\cite{bolasso} holds for random designs and has constants which scale \emph{exponentially} in~$p$ and not polynomially. We are currently trying to improve on the bound in Proposition~\ref{prop:lowdim-bolasso-pairs} to remove the extra factor $\eta >0$. 

As before, the number of replications should be as large as possible to remove irrelevant variables, and   $m$ should not be too large, otherwise, some relevant variables would start to disappear from the intersection. Note that best scaling (for the bound) is $m\approx n^{1/2}$, leading to an overall probability of incorrect model selection that tends to zero at rate $n^{-1/2}$, instead of the exponential rate for the irrealistic situation of having multiple copies (\mysec{lowdim-copies}).

We have not explored yet the optimality (in the minimax sense) of the bound given in Proposition~\ref{prop:lowdim-bolasso-pairs}. While we believe that a rate of $n^{-1/2}$ cannot be improved upon, the rate $p^6$ should be improved with further research.

Finally, we have explored in~\cite{bolasso} the possibility of considering softer ways of performing the intersection, i.e., by keeping all variables that appear in a certain proportion of the active sets corresponding to the various replications. This is important in cases where the decay of the loading vectors does not have sharp threshold as assumed in most analyses (this paper included). However, it adds an extra hyper-parameter and the theoretical analysis of such schemes is out of the scope of this paper.

\subsection{Boostrapping residuals}
An alternative to resampling pairs $(x_i,y_i)$ is to resample only the estimated centered residuals~\cite{efron,freedman}. This is well adapted to fixed-design assumptions, in particular because the design matrix $X$ remains the same for all replications. Note however, that the consistency of this resampling scheme usually relies more heavily on the homoscedasticity assumption \hypref{var} that we make in this paper~\cite{freedman}. Moreover, since the Lasso estimate is biased, the behavior differs slightly from bootstrapping pairs, as shown empirically in \mysec{simulations}.

Bootstrapping residuals works as follows; we let denote $\tilde{\varepsilon}_i = y_i - \hat{w}^\top x_i
= \varepsilon_i - ( \hat{w} - \w )^\top x_i$ the vector of estimated residuals, and $\hate_i$ the centered residuals equal to $\hate_i = \tilde{\varepsilon}_i - \frac{1}{n} \sum_{k=1}^n \tilde{\varepsilon}_k$ . When bootstrapping residuals, for each $i\in \{1,\dots,n\}$, we keep $x_i$ unchanged and we use as data $y_i^\ast = 
\hat{w}^\top x_i +  \hate_{i^\ast}$, where $i^\ast$ is a random index in $\{1,\dots,n\}$---the sampling is uniform and the $n$ indices are drawn independently.

   We obtain a similar bound than when bootstrapping pairs  (see proof in Appendix~\ref{app:bolasso-residuals}):
   \begin{proposition}
   \label{prop:bolasso-residual}
   \label{prop:lowdim-bolassoresiduals}
    Assume \hypreff{model}{inv}.  If $c = \tmu n^{1/2}  p^{1/2}> 0$, there exists strictly positive constants $A_0,\dots,A_7$ that may depend on $c$ such that if  $n p^{-6}\geqslant A_6$ and $mp^{-1} \geqslant A_7$, we have, for boostrapping \emph{residuals}:
\begin{multline*}
 \P( \hat{J}^\cap \neq \J )
\leqslant
m p \exp \left( - A_0 \frac{n^{1/2}}{p^{1/2}} \right) 
+  A_4  \left(  A_3  \frac{p^{3}}{ n^{ 1/2}} +  \frac{ \log m }{m} \right)^{
1 +  A_5 \left( 2 \log 
 \left(  A_3 \frac{p^{3}}{ n^{ 1/2}} +  \frac{ \log m }{m}  \right)\right)^{-1/2}} \!\!\!\!.
\end{multline*}    \end{proposition}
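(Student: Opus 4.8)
The plan is to follow the proof of Proposition~\ref{prop:lowdim-bolasso-pairs}, exploiting the fact that resampling residuals leaves the design matrix $X$ — and hence $Q = \frac1n X^\top X$ — unchanged. We start from the bound on $\P(\hat J^\cap \neq \J)$ derived in \mysec{support}, which reduces the problem to controlling, for a generic replication with active set $\hat J^\ast$, (i) the probability that $\hat J^\ast$ omits a relevant variable, and (ii) the quantity $\E(\P(j \in \hat J^\ast \mid \varepsilon)^m)$ for each irrelevant index $j \in \J^c$. The key observation is that, conditionally on the original sample, solving the Lasso on $(X, y^\ast)$ with $y^\ast_i = \hat w^\top x_i + \hate_{i^\ast}$ is exactly solving \eq{lasso-eq} with the same $Q$, with ``population vector'' $\hat w$ in place of $\w$, and with ``noise'' $q^\ast = \frac1n X^\top \hate^\ast$, where $\hate^\ast = (\hate_{i^\ast})_i$ has, given the data, i.i.d.\ centered bounded components with empirical variance equal to that of the centered residuals; so the whole low-dimensional analysis of \mysec{lowdim} can be rerun \emph{conditionally on} $\varepsilon$.

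First I would introduce a good event $\Omega_0$ for the original sample, on which: (a) by Proposition~\ref{prop:lowdim-high}, $\hat w$ has the correct signs on $\J$ with $\minf{\hat w_\J}$ bounded below, and $\|\hat w_{\J^c}\|_\infty$ is of order $n^{-1/2}$ up to logarithmic factors; (b) the centered residuals are uniformly bounded by $O(\tau\sqrt{\log n})$ — using that $\varepsilon$ is subgaussian under \hypref{var} and that the $(\hat w - \w)^\top x_i$ are small — with empirical variance pinched between fixed multiples of $\sigma^2$, which is where the homoscedasticity part of \hypref{var} enters. The sign statements in (a) and the residual bounds in (b) fail only on an event of probability at most $2p\exp(-c_0 n/p)$ (by the concentration estimates behind Propositions~\ref{prop:lowdim-medium}--\ref{prop:lowdim-correct} and subgaussian maximal inequalities), whereas the sharper size control on $\hat w_{\J^c}$ fails only polynomially often in $n$ — of the same order as the Berry-Esseen error appearing below, into which it is absorbed. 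On $\Omega_0$, the conditional Lasso has the correct signs on $\J$, off-support bias of size $O(n^{-1/2})$, and noise of subgaussian parameter $O(\tau^2\log n)$; feeding this into the stability argument for the relevant variables (Propositions~\ref{prop:lowdim-medium} and~\ref{prop:lowdim-medium2}) gives that one replication omits a relevant variable with conditional probability at most $2p\exp(-c_1 n/(p\log n))$. Combining with $\P(\Omega_0^c)$ and absorbing the logarithmic loss into a slightly degraded stretched-exponential rate yields the first term $m p \exp(-A_0 n^{1/2}/p^{1/2})$ of the bound.

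The substance of the proof is the sum over $j \in \J^c$, and this is where resampling residuals genuinely differs from resampling pairs: because the Lasso is biased, the bootstrap target $\hat w$ is not $\w$; in particular $\hat w_j$ for $j \in \J^c$ is of order $n^{-1/2}$ — the same order (up to powers of $p$ and logarithms) as $\mu$ and as the conditional standard deviation of $q^\ast_j$ — so, whether or not $\hat w_j$ vanishes, the conditional bootstrap noise can push coordinate $j$ out of the active set with conditional probability bounded below; equivalently, $\pi_j := \P(j \in \hat J^\ast \mid \varepsilon)$ is, on $\Omega_0$, bounded away from $1$, which is exactly the instability that the intersection exploits. Applying the multivariate Berry-Esseen inequalities of \myapp{berry} to the conditional law of $q^\ast$ — whose conditional covariance is $(\sigma^2/n)Q$ up to a relative $O(n^{-1/2})$ error on $\Omega_0$ — as in the upper bound of Proposition~\ref{prop:lowdim-correct}, I would obtain $\pi_j \leqslant \bar\pi + A_3 p^3 n^{-1/2}$ on $\Omega_0$, with $\bar\pi = \bar\pi(c) < 1$ an explicit Gaussian-tail quantity, together with a matching control of the probability that $\pi_j$ is anomalously close to $1$. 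Raising to the power $m$, taking expectations, splitting over $\Omega_0$ and its complement, optimizing the truncation level used to pass from $\bar\pi^m$ to the stated shape, and summing over the at most $p$ indices of $\J^c$ produces the second term in the statement; this last manipulation is identical to the corresponding one in the proof of Proposition~\ref{prop:lowdim-bolasso-pairs}, and it is there that the extra exponent $\eta > 0$ mentioned after that proposition is forced by the technique.

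The main obstacle is this last step: one needs a bound on $\pi_j$ that is simultaneously uniform over a high-probability set of original samples, explicit in the conditional Berry-Esseen error $p^3 n^{-1/2}$, and sharp enough near $1$ that raising it to the $m$-th power and averaging does not reintroduce the polynomial-in-$p$ factors. This is precisely the difficulty already met for bootstrapping pairs; here the design is deterministic, so only the randomness of $q^\ast$ and of the residual empirical distribution must be tracked, the price being a heavier reliance on \hypref{var} to ensure that the resampled residuals reproduce the noise scale $\sigma^2$. Once this conditional Berry-Esseen control is in place, the remainder of the argument transports essentially verbatim from the proof of Proposition~\ref{prop:lowdim-bolasso-pairs}.
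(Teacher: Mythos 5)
Your proposal is correct and follows essentially the same route as the paper's proof: condition on the original sample, exploit that the design (hence $Q$) is unchanged so the conditional covariance of $q^\ast$ is proportional to $Q$ up to a scalar concentrating near one, lower-bound the conditional exclusion probability of each $j\in\J^c$ via a soft indicator, conditional Berry--Esseen and a Gaussian small-ball estimate, control the loss of relevant variables by norm bounds after truncating $\|\varepsilon\|_\infty$, and finish with the same optimization over the truncation level $z$ as for bootstrapped pairs. The only (cosmetic) difference is your handling of the Lasso bias through a good event controlling $\sign(\hat w_\J)$ and $\|\hat w_{\J^c}\|_\infty$, whereas the paper keeps $\w$ as the center and absorbs the bias into the conditional mean $\E(q^\ast\mid\varepsilon)=q+\mu Q\hat\alpha$ with $\|Q\hat\alpha\|_\infty\leqslant 1$, which avoids any sign analysis of $\hat w$.
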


The bound in Proposition~\ref{prop:lowdim-bolassoresiduals} is the same as bootstrapping pairs, but as shown in Appendix~\ref{app:bolasso-residuals}, the constants are slightly better). However, as shown in \mysec{experiments-high}, the behaviors of the two methods differ notably: random-pair bootstrap does not lead to good selection performance in high-dimensional settings, while residual bootstrap does.
While we are currently unable to proof the consistency of bootstrapping residuals in high-dimensional settings, we prove in \mysec{highdim} the model consistency of a related two-step procedure, where the bootstrap replications are performed within the support of the Lasso estimate on the full data.

\section{High-Dimensional Analysis}

\label{sec:highdim}
In   high-dimensional settings, i.e.,  when $p$ may be larger than $n$, we need to change   assumption $\hypref{inv}$ regarding the invertibility of the empirical second order moment, which cannot hold. Various assumptions have been used for the Lasso, based on low correlations~\cite{lounici}, sparse eigenvalues~\cite{yuinfinite} or more general conditions~\cite{tsyb,cohen}. In this paper, we introduce a novel assumption, which not only allows us to consider that the support of the Lasso estimate has a bounded size, but also implies that we obtain the same sign pattern with high probability. The analysis carried out in low-dimensional settings in \mysec{high} is thus also valid in high-dimensional settings.

\subsection{High-dimensional assumptions}
Our analysis relies on the analysis carried out in \mysec{high} for ``high'' regularization, i.e., when $\mu$ tends to zero slower than $n^{-1/2}$. 
 In this setting, we have shown that the Lasso estimate asymptotically behaves as $\w + \mu \Delta$, where $\Delta$ is the unique minimum of
\BEQ
\label{eq:local}
\min_{\Delta \in \rb^p}  \frac{1}{2} \Delta^\top Q \Delta + \Delta_\J^\top \sign(\w_\J) + \| \Delta_{\J^c}\|_1.
\EEQ
We let denote $\K \subset \J^c$ the ``extended'' support of a solution $\Delta_{\J^c}$ of \eq{local-eq} and $\L = \J \cap \K$: that is, we not only keep all indices corresponding to  non zero elements of $\Delta_{\J^c}$, but also the ones for which the optimality condition in \eq{A00} is an equality (i.e., if we are at a hinge point of the regularization path, we take all involved variables)

 We consider the vector $\t \in \{-1,0,+1\}^p$ defined by $t_\J = \sign(\w_\J)$ and $\t_{\J^c} = \sign(\Delta_{\J^c}) $. If we assume that $\lmin(Q_{\L, \L}) >0$, then the solution to \eq{local} is unique~\cite{fuchs}, and is such that $\Delta_\L = 
  -Q_{\L,\L }^{-1} \t_\L$ and the optimality conditions for \eq{local} are simply
$$
\sign(  - [ Q_{\L,\L }^{-1} \t_\L]_\K ) =  \t_\K
\mbox{ and } 
\| Q_{\L^c \L}   Q_{\L,\L }^{-1} \t_\L  \|_\infty \leqslant 1.
$$

We make the following assumptions (note that \hypref{inv-highdim} is essentially equivalent to the lack of hinge point which is also made in Proposition~\ref{prop:lowdim-high}):
\begin{hyp}
\label{hyp:unicity-highdim}
\emph{Unicity of local noiseless problem}: the matrix $\Q_{\L,\L}$ is invertible.
\end{hyp}
\begin{hyp}
\label{hyp:inv-highdim}
\emph{Stability of local noiseless problem}: $\| Q_{\L^c \L}   Q_{\L,\L }^{-1} \t_\L  \|_\infty < 1$.
\end{hyp}
We let denote 
\BEQ \boldsymbol{\theta} = \min \left\{
 1 - \|Q_{\L^c \L}   Q_{\L,\L }^{-1} \t_\L\|_\infty ,
\min_{ k \in \K }  | ( Q_{\L, \L}^{-1} \t_\L)_k Q_{k,k}  |
\right\},
\EEQ
the quantity that will characterize the \emph{stability} of the local noiseless problem; if \hypreff{unicity-highdim}{inv-highdim} are satisfied, then $\boldsymbol{\theta} >0$.
 As shown in Proposition~\ref{prop:highdim}, the quantity $\boldsymbol{\theta}$ dictates the speed of convergence of the probability of not getting $\t$ as a sign pattern for the Lasso problem in \eq{lasso} or \eq{lasso-eq}.

\paragraph{Comparison with consistency condition}
We now relate \hypref{inv-highdim} with the consistency condition for the Lasso in \eq{cond}: if \eq{cond} satisfied,  then $\K = \varnothing$ and the condition \hypref{inv-highdim} simply becomes:
$$ \| Q_{\J^c ,\J} Q_{\J ,\J}^{-1} \sign(\w_\J) \|_\infty < 1, $$
 which is exactly a strict version of \eq{cond}---an assumption commonly made for high-dimensional analysis of the Lasso~\cite{Zhaoyu,martin}. Note that we then have the simplified expression
 $\boldsymbol{\theta} = 1 -   \|Q_{\J^c \J}   Q_{\J,\J }^{-1} \sign(\w_\J)\|_\infty$.

The main goal of this paper is to design a consistent procedure even when \eq{cond} is not satisfied. As we have seen, \hypref{inv-highdim} is weaker than the usual assumptions made for the Lasso consistency; in \myfig{condition} (left and middle), we compare empirically the two conditions for random i.i.d. Gaussian designs, showing that our set of assumptions is weaker, but of course breaks down when $n$ is too small (too few observations) or the cardinal of $\J$ is too large (too many relevant variables). We are currently exploring theoretical proofs of this behavior, extending the current analysis of \cite{martin} for \eq{cond}; in particular, we aim at determining the various scalings between $p$, $n$ and the number of relevant variables for which a Gaussian ensemble leads to consistent variable selection with high probability (according to our assumptions which are weaker than in \cite{martin}). Moreover, in the right plot of 
\myfig{condition}, we show values of $\log \boldsymbol{\theta} $ for various $n$ and $|\J|$, which characterize the convergence rate of our bound. Relying on $\boldsymbol{\theta} $ which is bounded from below is clearly a weakness of our approach to high-dimensional estimation; we are currently exploring refined conditions where we relax the stability, i.e., we allow several (but not too many) patterns to be selected with overwhelming probability.

\begin{figure}
\hspace*{.25cm}
\includegraphics[scale=.385]{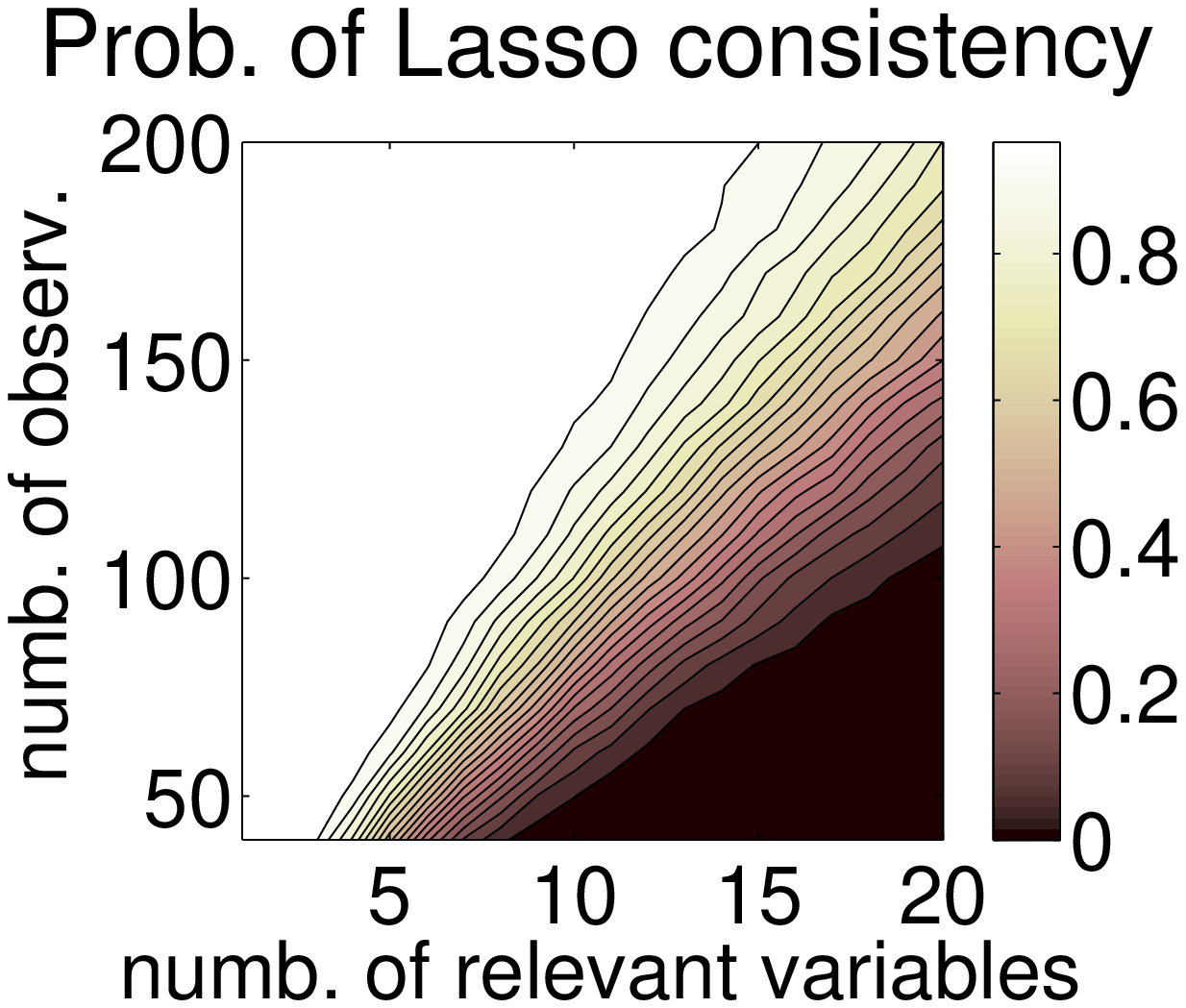} \hspace*{.3cm}
\includegraphics[scale=.385]{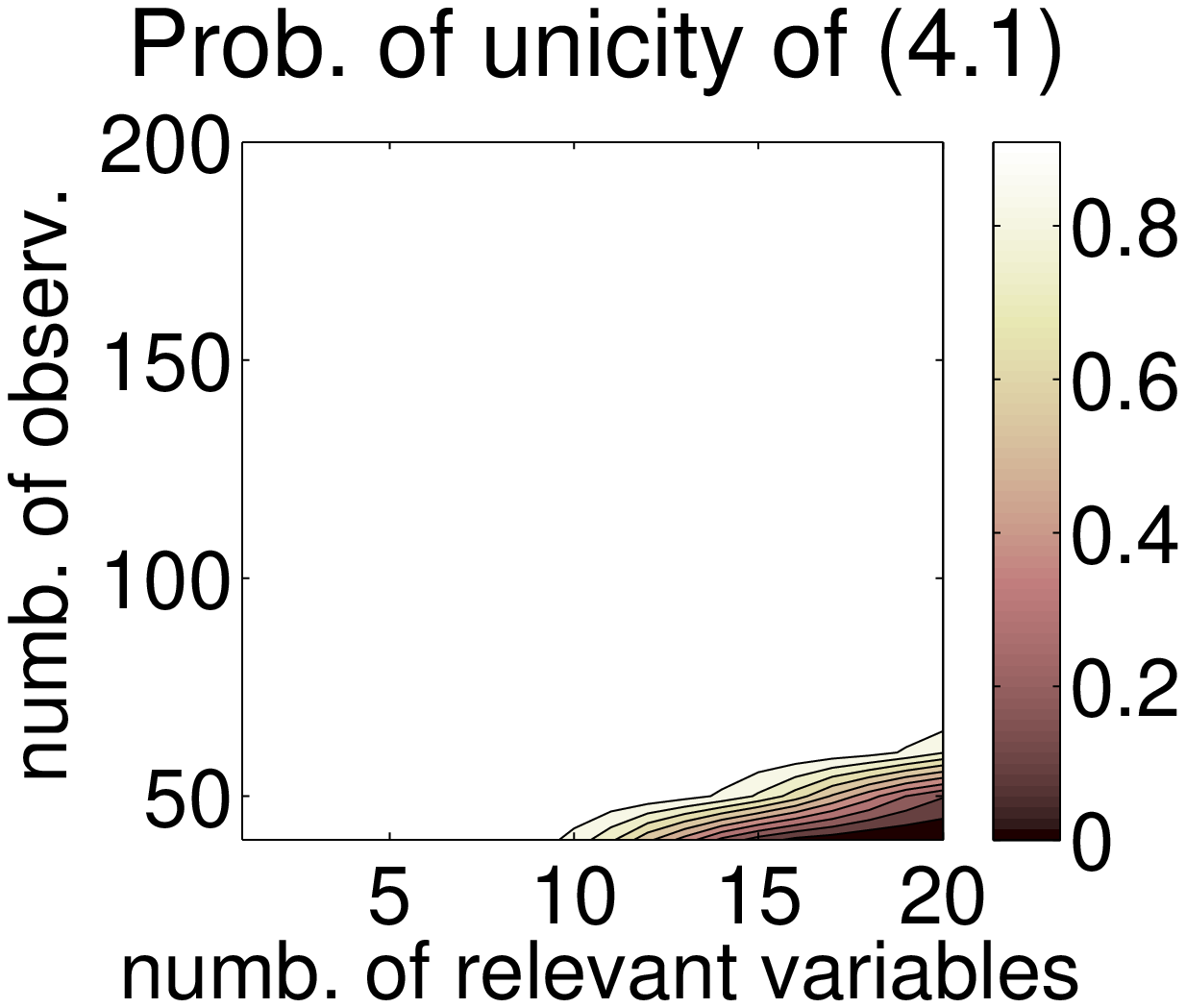} \hspace*{.3cm}
\includegraphics[scale=.385]{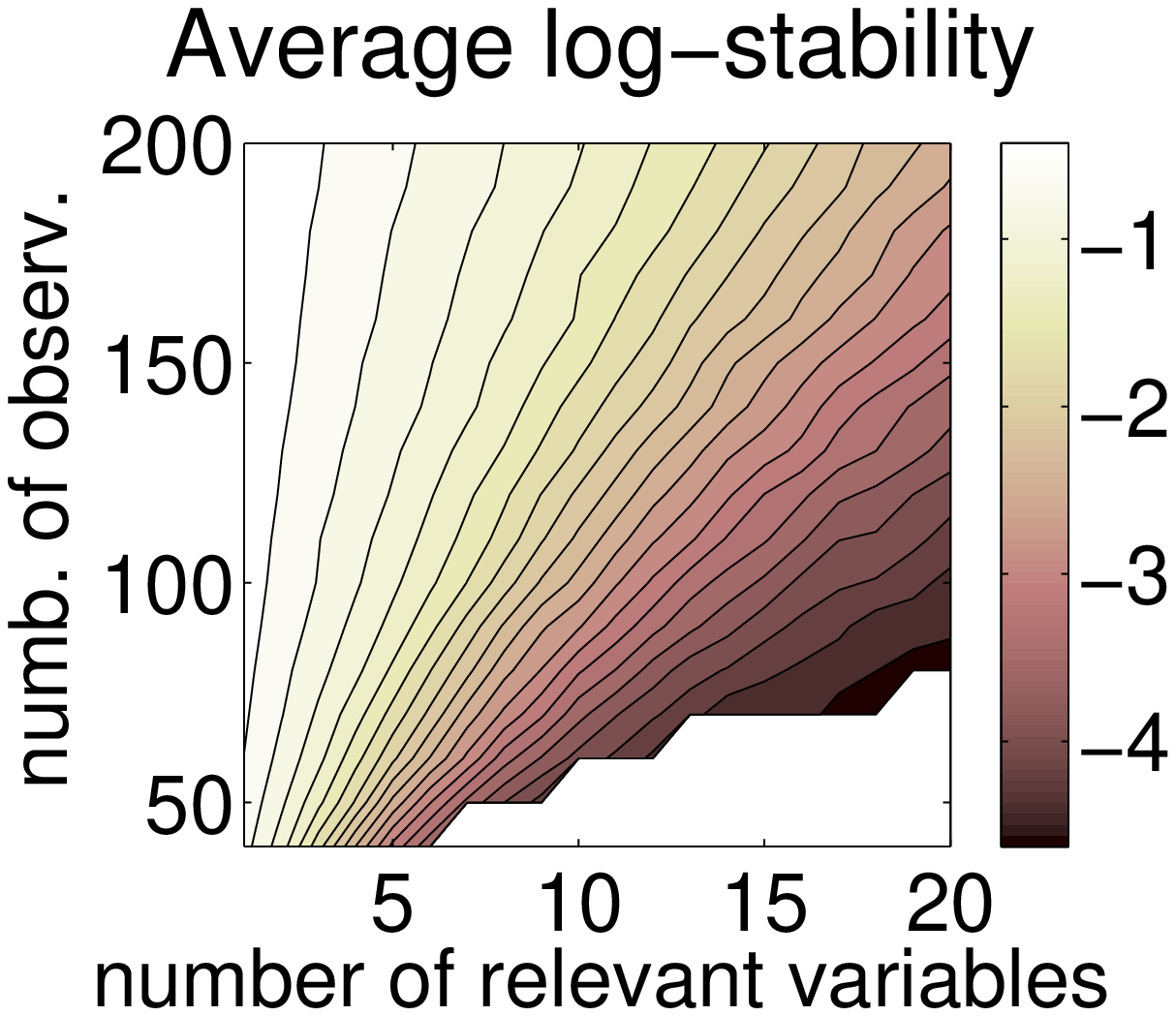} 
\hspace*{-.25cm}
 \caption{Consistency conditions for random Gaussian designs, $p=128$, $n$ from $40$ to $200$ and $| \J |$ from 1 to 20 (all probabilities and averages obtained from 1000 replications). Left: probability that \eq{cond} is satisfied. Middle: probability that \hypref{inv-highdim} is satisfied. Right: expectation of $\log \boldsymbol{\theta}$ (plotted only for the ones for which the local problem is unique with  high probability).
  }
 \label{fig:condition}
\end{figure}

\paragraph{Checking assumptions~\hypreff{unicity-highdim}{inv-highdim}} In \eq{local}, we can optimize in closed form with respect to $\Delta_\J$ as
$\Delta_\J = Q_{\J,\J}^{-1}( - \sign( \w_\J) - Q_{\J , \J^c} \Delta_{\J^c})$, leading to an optimization problem for $\Delta_{\J^c}$:
\BEQ
\label{eq:local-eq}
\min_{\Delta \in \rb^p}  \frac{1}{2} \Delta_{\J^c}^\top Q_{\J^c,\J^c | \J} \Delta_{\J^c} -
\Delta_{\J^c}^\top
Q_{\J^c ,\J } 
Q_{\J,\J}^{-1} \sign( \w_\J) 
+ \| \Delta_{\J^c}\|_1,
\EEQ
which can be solved using existing code for the Lasso. We are currently working on deriving sufficient conditions which do not depend on the sign pattern of the population loading $\w$ (but only on the sparsity pattern, or even its cardinality), as usually done for the consistency condition in \eq{cond}~\cite{Zhaoyu,yuanlin}.

\subsection{Stability of sign selection}
With assumptions \hypref{unicity-highdim} and \hypref{inv-highdim}, we can show that with high-probability, when the regularization parameter is asymptotically greater than $n^{-1/2}$, then the sign of the  Lasso estimate is exactly $\t$ (see proof in Appendix~\ref{app:proofs-highdim}):

\begin{proposition}
\label{prop:highdim}
Assume \hypreff{model}{bounded}, \hypreff{unicity-highdim}{inv-highdim}, and
$
\tmu \leqslant \frac{\tl_\L \minf{\tw}}{ 2 | \L |^{1/2}}
$. Then: 
\BEQ
\label{eq:highdim-lasso}
\P( \sign(\hat{w}) \neq \t)  \! \leqslant  \!
2 p \exp \left( - \frac{ n \tmu^2  \boldsymbol{ \theta}^2 \tl_\L }
{ 8 \ttau^2 |\L|} \right) +  2 |\J| \exp\left(\! - \frac{ n  \minf{\tw}^2 \tl^2_\L }
{ 4 \ttau^2 |\L|} \!
\right).
\EEQ
\end{proposition}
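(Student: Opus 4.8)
\emph{Proof approach.} The plan is a primal--dual witness argument localised to the set $\L = \J \cup \K$. Since $Q_{\L,\L}$ is invertible by \hypref{unicity-highdim}, the Lasso restricted to the coordinates in $\L$,
$$
\min_{w_\L} \tfrac12 (w_\L - \w_\L)^\top Q_{\L,\L}(w_\L - \w_\L) - q_\L^\top (w_\L - \w_\L) + \mu \|w_\L\|_1 ,
$$
with $q = \frac{1}{n} X^\top \varepsilon$, is strongly convex; call its unique solution $\hw^\L$, extend it by zero on $\L^c$, and call this $\tilde w$. By the optimality conditions for \eq{lasso-eq}, $\tilde w$ is \emph{the} minimiser of the full problem and $\sign(\tilde w)=\t$ as soon as three events hold: (i) $\sign(\hw^\L_j)=\sign(\w_j)$ for $j\in\J$; (ii) $\sign(\hw^\L_k)=\sign(\Delta_k)=t_k$ for $k\in\K$ (together these say $\hw^\L$ has full support $\L$ with sign $\t_\L$); and (iii) strict dual feasibility on $\L^c$, i.e.\ $\|Q_{\L^c,\L}(\hw^\L_\L-\w_\L)-q_{\L^c}\|_\infty < \mu$. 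Hence $\P(\sign(\hw)\neq\t)$ is at most the sum of the three failure probabilities, and it remains to bound each.

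On the event that $\hw^\L$ has full support $\L$ with sign $\t_\L$, the restricted optimality condition yields the closed form $\hw^\L_\L-\w_\L = Q_{\L,\L}^{-1}q_\L - \mu Q_{\L,\L}^{-1}\t_\L = Q_{\L,\L}^{-1}q_\L + \mu\Delta_\L$, using $\Delta_\L = -Q_{\L,\L}^{-1}\t_\L$ from \eq{local}. I would then translate each bad event into a deviation bound on $q$: event (i) fails only if $\|Q_{\L,\L}^{-1}q_\L\|_\infty + \mu\|\Delta_\L\|_\infty \geqslant \minf{\w}$, and since $\|\Delta_\L\|_\infty \leqslant |\L|^{1/2}/\lmin(Q_{\L,\L})$, the hypothesis $\tmu \leqslant \tl_\L\minf{\tw}/(2|\L|^{1/2})$ makes the deterministic part at most $\minf{\w}/2$; event (ii) fails only if $\|Q_{\L,\L}^{-1}q_\L\|_\infty \geqslant \mu\min_{k\in\K}|\Delta_k|$, and $|\Delta_k| = |(Q_{\L,\L}^{-1}\t_\L)_k| \geqslant \boldsymbol{\theta}/Q_{k,k} \geqslant \boldsymbol{\theta}/M^2$ by definition of $\boldsymbol{\theta}$; event (iii) fails only if $\|Q_{\L^c,\L}Q_{\L,\L}^{-1}q_\L - q_{\L^c}\|_\infty \geqslant \mu\boldsymbol{\theta}$, because the deterministic part $-\mu Q_{\L^c,\L}Q_{\L,\L}^{-1}\t_\L$ has sup-norm $\leqslant \mu(1-\boldsymbol{\theta})$ by \hypref{inv-highdim}. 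The self-consistency of the ``full support $\L$, sign $\t_\L$'' assumption is automatic on these events, since the perturbation $\|Q_{\L,\L}^{-1}q_\L\|_\infty$ is then dominated by the signal $\minf{\w}$ on $\J$ and $\mu\boldsymbol{\theta}/M^2$ on $\K$.

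For the concentration step, every relevant quantity is a linear functional $a^\top q = \frac1n\sum_{i=1}^n (Xa)_i\varepsilon_i$ of the noise; by the subgaussianity in \hypref{var} it satisfies $\P(|a^\top q|\geqslant t) \leqslant 2\exp(-nt^2/(2\tau^2\, a^\top Q a))$. For event (iii) the vector $a$ has $a^\top Q a = Q_{j,j|\L} \leqslant M^2$, so a union bound over $\L^c$ contributes a $2p\exp(-\,\cdot\,)$ term in $\boldsymbol{\theta}^2$. For events (i)--(ii), I would control $\|Q_{\L,\L}^{-1}q_\L\|_\infty$ via $\|Q_{\L,\L}^{-1}q_\L\|_2 \leqslant |\L|^{1/2}\|q_\L\|_\infty/\lmin(Q_{\L,\L})$ together with $\P(|q_j|\geqslant t)\leqslant 2\exp(-nt^2/(2\tau^2M^2))$ coordinate-wise; the $\boldsymbol{\theta}/M^2$ threshold then yields, after a union bound over $\K\subseteq\J^c$, the $2p\exp(-\,\cdot\,)$ term in $\boldsymbol{\theta}^2\tl_\L/|\L|$, while the $\minf{\w}$ threshold on $\J$ yields the $2|\J|\exp(-\,\cdot\,)$ term in $\minf{\tw}^2\tl_\L^2/|\L|$. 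Collecting the three contributions and rewriting with $\tmu$, $\ttau$ and $\tl_\L = \lmin(Q_{\L,\L})/M^2$ gives the stated bound.

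The main obstacle is the witness step itself: in the high-dimensional regime \eq{lasso-eq} is only convex, so one must argue, as in \cite{fuchs}, that strict dual feasibility on $\L^c$ \emph{together with} invertibility of $Q_{\L,\L}$ forces $\tilde w$ to be the \emph{unique} minimiser supported in $\L$ --- this is precisely where the argument leaves the low-dimensional setting of \mysec{high}. The remaining difficulty is bookkeeping: making the ``full support $\L$ / sign $\t_\L$'' assumption non-circular, and tracking the $\ell^2\to\ell^\infty$ losses so that the exponents come out as claimed.
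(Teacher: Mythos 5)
Your proposal is correct and follows essentially the same route as the paper: the paper's proof applies Lemma~\ref{lemma:opt2} with $J=\L$ to reduce $\{\sign(\hat{w})=\t\}$ to exactly your three events (strict dual feasibility on $\L^c$ with slack $\mu\boldsymbol{\theta}$, sign preservation on $\J$ via the $\tmu \leqslant \tl_\L \minf{\tw}/2|\L|^{1/2}$ hypothesis, and sign preservation on $\K$ via the $\mu\boldsymbol{\theta}Q_{kk}^{-1}$ threshold), then concludes by subgaussian concentration and a union bound. Your explicit attention to the Fuchs-type uniqueness step is a point the paper's proof leaves implicit, and the minor constant/prefactor discrepancies in your sketch are at the same level of looseness as the paper's own.
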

Note that if $\boldsymbol{ \theta}$ is bounded away from zero, then we simply need that $\log p = o(n)$ for our result to hold. Moreover, in \eq{highdim-lasso}, we can see that $\boldsymbol{ \theta}$ dictates the asymptotic behavior of our bound. If it is too small, then in order to have a meaningful bound for this design matrix, we would need to consider sign patterns which are close to $\t$ and show that the sign pattern of the Lasso estimate $\hat{w}$ is with high probability within these sign patterns.  

\subsection{High-dimensional Bolasso}

Proposition~\ref{prop:highdim} suggests to run the Lasso once with a larger regularization parameter (i.e., multiplied by $\log p $) and run the various resampling schemes within the active set of the original Lasso estimation (which is very likely to be the support associated with $\t$).  More precisely, we have the proposition   (see proof in Appendix~\ref{app:proofs-highdim}):

\begin{proposition}
\label{prop:bolasso-highdim}
Assume \hypreff{model}{bounded} and \hypreff{unicity-highdim}{inv-highdim}. 
  If $c = \tmu n^{1/2}  |\L|^{1/2}> 0$, there exists strictly positive constants $A_0,\dots,A_7$ that may depend on $c$ such that if  $n |\L|^{-6}\geqslant A_6$ and $m|\L|^{-1} \geqslant A_7$, we have, for boostrapping \emph{residuals}:
\begin{multline*}
 \P( \hat{J}^\cap \neq \J )
\leqslant
  2 p \exp \left( - \frac{c^2 (\log p)^2  \boldsymbol{ \theta}^2 \tl_\L }
{ 8 \ttau^2 |\L|^2} \right) +  2 |\J| \exp\left(\! - \frac{ n  \minf{\tw}^2 \tl^2_\L }
{ 4 \ttau^2 |\L|}  \right)  + \\
m p \exp \left( - A_0 \frac{n^{1/2}}{|\L|^{1/2}} \right) 
+  A_4  \left(  A_3  \frac{|\L|^{3}}{ n^{ 1/2}} +  \frac{ \log m }{m} \right)^{
1 +  A_5 \left( 2 \log 
 \left(  A_3 \frac{|\L|^{3}}{ n^{ 1/2}} +  \frac{ \log m }{m}  \right)\right)^{-1/2}} \!\!\!\!.
\end{multline*}   
\end{proposition}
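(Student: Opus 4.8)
The plan is to combine Proposition~\ref{prop:highdim} (which controls the first Lasso run) with Proposition~\ref{prop:lowdim-bolassoresiduals} (which controls the bootstrap-residual step once the active set has been reduced to a low-dimensional problem), via a union bound over the relevant events. First I would introduce the event $\mathcal{E}$ that the first Lasso estimate $\hat w^{(1)}$, run with regularization parameter enlarged by a factor proportional to $\log p$ so that $\mu n^{1/2}$ is of order $(\log p)$, has sign pattern exactly $\t$; by Proposition~\ref{prop:highdim} applied with this choice of $\mu$ (and using $c=\tmu n^{1/2}|\L|^{1/2}$ so that $\tmu n^{1/2}\asymp \log p/|\L|^{1/2}$), the complement $\mathcal{E}^c$ has probability bounded by the first two terms of the claimed bound, i.e.\ $2p\exp(-c^2(\log p)^2\boldsymbol\theta^2\tl_\L/(8\ttau^2|\L|^2)) + 2|\J|\exp(-n\minf{\tw}^2\tl_\L^2/(4\ttau^2|\L|))$. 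One has to check that the hypothesis $\tmu\le \tl_\L\minf{\tw}/(2|\L|^{1/2})$ of Proposition~\ref{prop:highdim} is compatible with $\tmu n^{1/2}\asymp\log p$, which holds as soon as $n$ is large enough relative to $|\L|$ — this is absorbed into the constant $A_6$.

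Next, on the event $\mathcal{E}$, the support of $\hat w^{(1)}$ is exactly $\J\cup\K$, whose cardinality is $|\L|$ (recall $\L=\J\cup\K$ in the notation of the high-dimensional section, with $|\J|\le|\L|$). Restricting the design to these columns, assumption \hypref{unicity-highdim} ($\Q_{\L,\L}$ invertible) guarantees that the submatrix $Q_{\L,\L}$ is invertible, so the restricted problem satisfies \hypref{inv} with $p$ replaced by $|\L|$, smallest eigenvalue $\tl_\L$, and the same subgaussian-noise and bounded-design constants. I would then apply Proposition~\ref{prop:lowdim-bolassoresiduals} verbatim to this restricted problem: bootstrapping residuals within the active set $\L$, intersecting the $m$ Lasso supports $\hat J^1,\dots,\hat J^m$, yields $\P(\hat J^\cap\neq \J \mid \mathcal{E})$ bounded by $mp\exp(-A_0 n^{1/2}/|\L|^{1/2})$ plus the power-type term $A_4(A_3|\L|^3/n^{1/2}+\log m/m)^{1+A_5(2\log(A_3|\L|^3/n^{1/2}+\log m/m))^{-1/2}}$, provided $n|\L|^{-6}\ge A_6$ and $m|\L|^{-1}\ge A_7$. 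Combining $\P(\hat J^\cap\neq\J)\le \P(\mathcal{E}^c)+\P(\hat J^\cap\neq\J\mid\mathcal{E})$ gives exactly the four-term bound in the statement.

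The main obstacle — and the point that needs genuine care rather than bookkeeping — is the reduction in the previous paragraph: one must verify that conditioning on $\mathcal{E}$ does not destroy the probabilistic structure needed by Proposition~\ref{prop:lowdim-bolassoresiduals}. The residuals $\te_i = \varepsilon_i - (\hat w^{(1)}-\w)^\top x_i$ used in the bootstrap depend on the same noise vector $\varepsilon$ that determines whether $\mathcal{E}$ holds, so the conditioning is not innocuous; the clean way around it is to note that the analysis of Proposition~\ref{prop:lowdim-bolassoresiduals} in Appendix~\ref{app:bolasso-residuals} already works conditionally on $\varepsilon$ (the bootstrap randomness being the resampling indices $i^\ast$, independent of everything), and that the only property of the first-stage estimate it uses is that its support equals $\L\supseteq\J$ and its error $\|\hat w^{(1)}-\w\|$ is small — both of which are guaranteed on $\mathcal{E}$ by the norm-consistency half of Proposition~\ref{prop:highdim} (one extracts $\|\hat w^{(1)}-\w\|$ small on the same event, at the cost of possibly shrinking constants). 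One then repeats the Appendix~\ref{app:bolasso-residuals} argument with $p\rightsquigarrow|\L|$, $\tl\rightsquigarrow\tl_\L$, and the slightly perturbed noise distribution (still subgaussian with parameter controlled by $\ttau$ and the bound $M$, since $(\hat w^{(1)}-\w)^\top x_i$ is bounded on $\mathcal{E}$), and the constants $A_0,\dots,A_7$ are then the maxima of the constants arising in Proposition~\ref{prop:highdim} and in the restricted version of Proposition~\ref{prop:lowdim-bolassoresiduals}, all of which may depend on $c$.
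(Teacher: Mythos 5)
Your proposal is correct and follows essentially the same route as the paper, whose own proof is literally a one-line appeal to Proposition~\ref{prop:highdim} combined with the low-dimensional bootstrap propositions via a union bound. In fact you supply more than the paper does: the decomposition over the event $\mathcal{E}$, the matching of the $\log p$ scaling with the first exponential term, and the caveat about conditioning on $\mathcal{E}$ while the residuals reuse the same noise vector $\varepsilon$ are all details the paper silently omits.
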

Note that the constants depend polynomially on $|\L|$ and $\lmin(Q_{\L,\L})$, and do not depend on $p$. This is thus a high-dimensional result where $p$ may grow large compared to $n$. If we relax \hypref{inv-highdim}, then the original Lasso estimate would have a small set of allowed patterns with high probability (instead of simply one), and a union bound considering all those would need be considered.

\section{Algorithms and Simulations}
\label{sec:simulations}

In this section, we describe efficient algorithms for the boostrapped versions of the Lasso that we present in this paper and we illustrate the various consistency results obtained in previous sections, in low-dimensional and high-dimensional settings. 

\subsection{Efficient Path Algorithms}
\label{sec:algorithms}
We first consider efficient algorithms for the boostrapping procedures, based on homotopy methods~\cite{osborne,lars,garrigues}. Similar developments could be made for first-order methods~\cite{shooting,descent}. For the regular Lasso, one can find the solutions of \eq{lasso} for all values of the regularization parameter $\mu$ that correspond to less than $k$ selected covariates in time which is empirically
$
O(pn + k^2 n )
$: indeed,  computing $\frac{1}{n}X^\top y $ once is $O(pn)$, while computing the relevant elements of $Q
=\frac{1}{n} X^\top X$ and updating various quantities is $O(k^2n)$. Note that our analysis suggests to stop the path when the solution of the problem is not unique anymore, i.e., when the design matrix of selected variables become rank-deficient.

\paragraph{Bootstrapping pairs}

When bootstrapping pairs, we require $m$ applications of the regular Lasso procedure with different design matrices, so we get a complexity of $O(mpn + mk^2n)$, and since the designs are different, there is no immediate possibility of sharing computations between different bootstrap replications

\paragraph{Bootstrapping residuals}

When bootstrapping residuals, we first run the Lasso once, with complexity $
O(pn + k^2 n )
$. Then, for all values of the regularization parameter, naively, we would have to run the Lasso $m$ times. In order to avoid running the Lasso as many times as $m$ times the number of values of $\mu$ we want to consider, one can first notice that there are at most $O(k)$ break points in the original Lasso estimation, and that between break points, one has to minimize an objective function which is composed of a $\ell^1$-penalty, a quadratic term and a linear term whose coefficients depend affinely in $\mu$. This implies that the path is also piecewise linear within this segment and can be followed using an homotopy algorithm very similar to the one for the regular Lasso. Thus it makes $O(mpn + mk^2n)$ per segments when restarting an homotopy method for this segment, i.e., an overall complexity of
 $O(mkpn + mk^3n)$. This can be put down by computing a joint path that goes through all $O(k)$ segments sequentially instead of in parallel, in total time $O(mkpn + mk^2 n)$. Moreover, since  when bootstrapping residuals, the design matrix is the same for all replications and computations of submatrices of $Q$ may be cached, to obtain a complexity of
 $O(mkpn + k^2 n)$.
 
 Similarly, when bootstrapping after projections onto the active set of a single global Lasso run, one can get even get a lower complexity of $O( pn +  mk^2n)$, i.e., one Lasso followed by $m$ Lasso on a reduced data set. This requires however updates (when the first Lasso estimation switches active sets) such as the ones proposed in~\cite{garrigues}.

\begin{figure}
\begin{center}

\vspace*{-.05cm}

\includegraphics[scale=.55]{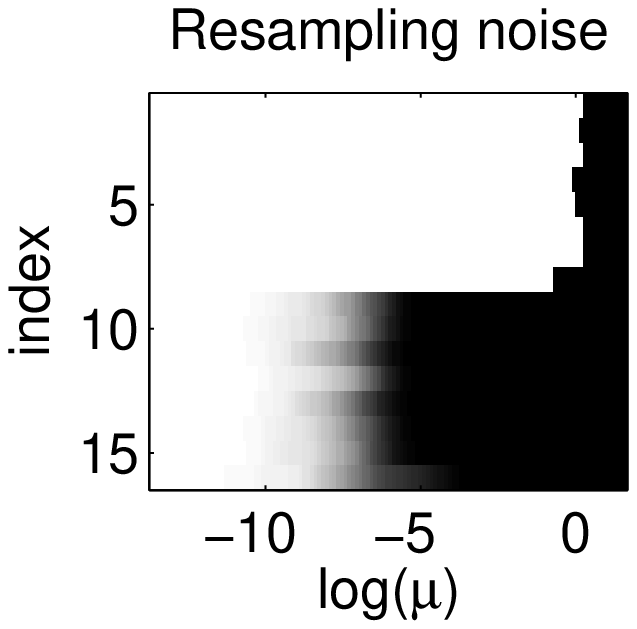}
 \hspace*{.5cm}
 \includegraphics[scale=.55]{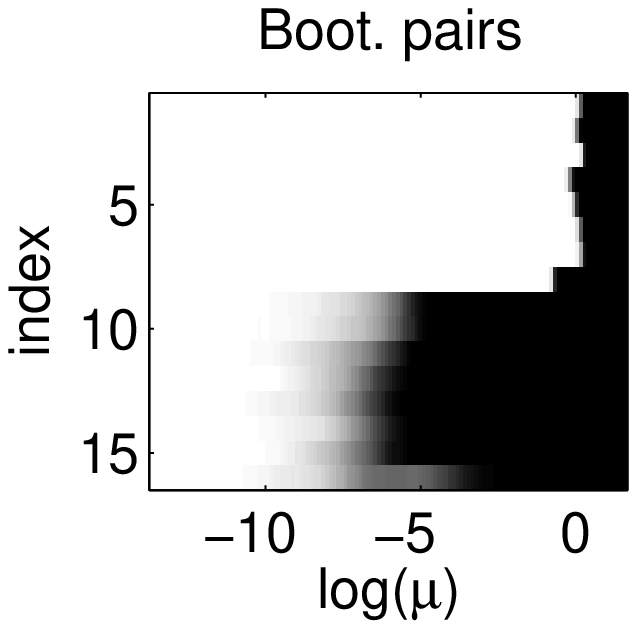}
 \hspace*{.5cm}
 \includegraphics[scale=.55]{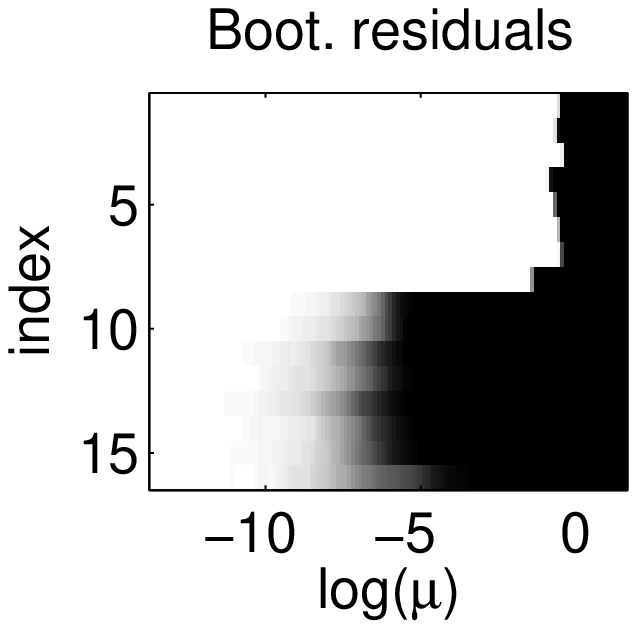}

\vspace*{.15cm}

\includegraphics[scale=.55]{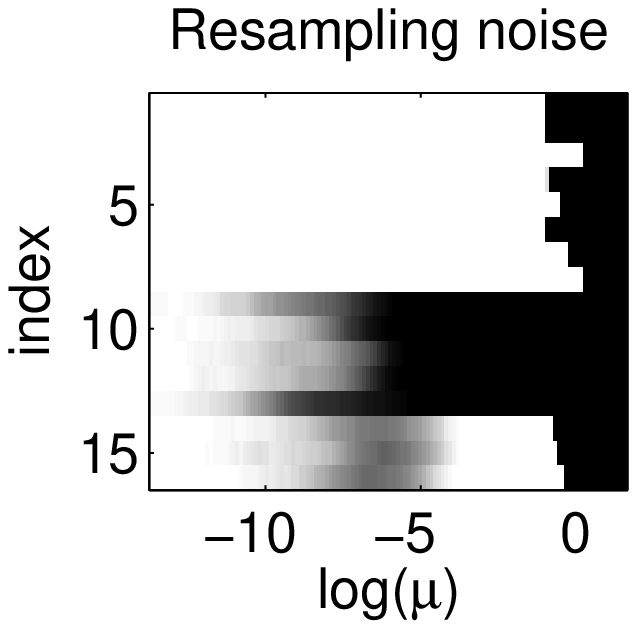}  \hspace*{.5cm}
\includegraphics[scale=.55]{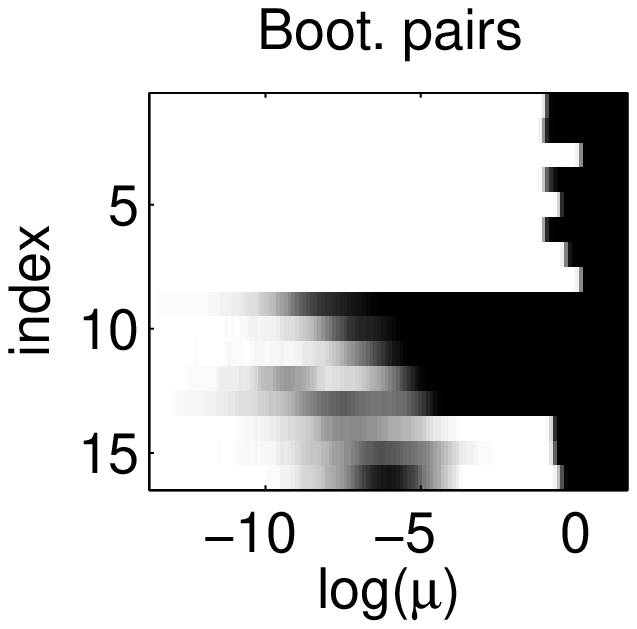}  \hspace*{.5cm}
\includegraphics[scale=.55]{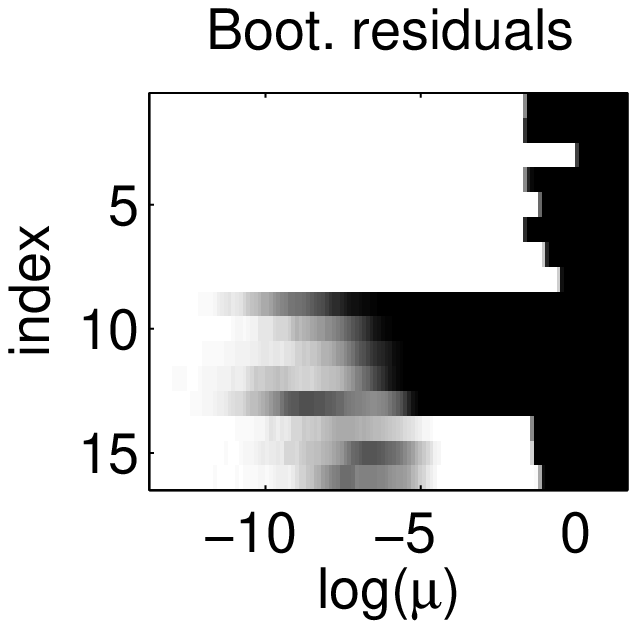}

\vspace*{-.45cm}

\end{center}
\caption{Probability of selecting each variable vs. regularization parameter $\mu$ (low-dimensional setting) for various resampling schemes, \emph{before intersecting}. White values correspond to probability equal to one, and black values correspond to probability equal to zero (model consistency corresponds to while on the top 8 variables and black on the rest). Top: consistency condition of the Lasso is satisfied, Bottom: consistency condition not satistfied. Note the similar behavior of resampling noise (which requires knowing the generating distribution) and the two forms of bootstrapping (which do not). See text for details.}
\label{fig:lowdim}

\end{figure}

\subsection{Experiments - Low-Dimensional Settings}

\label{sec:experiments-low}

We first consider a low-dimensional design matrix, with $p=16$, $n=1024$ and $8$ relevant variables (i.e., $\J
=\{1,\dots,8\}$). The design is sampled from a normal distribution with independent rows, sampled i.i.d. from a fixed covariance matrix. We consider two covariance matrices, one that leads to design matrices which do not satisfy the consistency condition of the Lasso in \eq{cond}, and one that leads to Lasso-consistent design matrices.

 In \myfig{lowdim}, we plot the marginal probabilities (computed from 512 independent replications) of selecting any given of the $p=16$ variables for all values of the regularization parameter $\mu$ and for the various resampling schemes (resampling noise, bootstrapping pairs or bootstrapping  residuals), \emph{without intersecting} (i.e., we are just reporting counts from 512 replications from a single dataset). Note that the left column (resampling noise) exactly corresponds to the various regimes of the Lasso presented in \mysec{lowdim} (these require full knowledge of the generating distributions and are only displayed for illustration purposes): for large values of $\mu$, no variable is selected (Proposition~\ref{prop:lowdim-heavy}), then a fixed pattern is selected ($\mu$ tending to zero faster than $n^{-1/2}$, Proposition~\ref{prop:lowdim-fixed}), then all patterns including the relevant variables ($\mu$ of order $n^{-1/2}$, Propositions~\ref{prop:lowdim-medium} and ~\ref{prop:lowdim-medium2}), and finally, for small values of $\mu$, all variables are selected (Proposition~\ref{prop:lowdim-low}). Note that in the top plots, as expected (since \eq{cond} is not satisfied), some portions of the regularization paths lead to the correct pattern, while in the bottom plots, as expected (since \eq{cond} is satisfied), there is no consistent model selection. It is important to note that using the bootstrap leads to similar behavior than resampling the noise, but does not require extra knowledge (i.e., a single dataset is needed). Note finally, that bootstrapping residuals does alter slightly the regularization paths---because of the bias of the Lasso estimate---and the selected patterns (see other evidence of this behavior in \myfig{lowdim-intersection} and \myfig{lowdim-effectofm}).

\begin{figure}
\begin{center}

\vspace*{-.25cm}

\includegraphics[scale=.55]{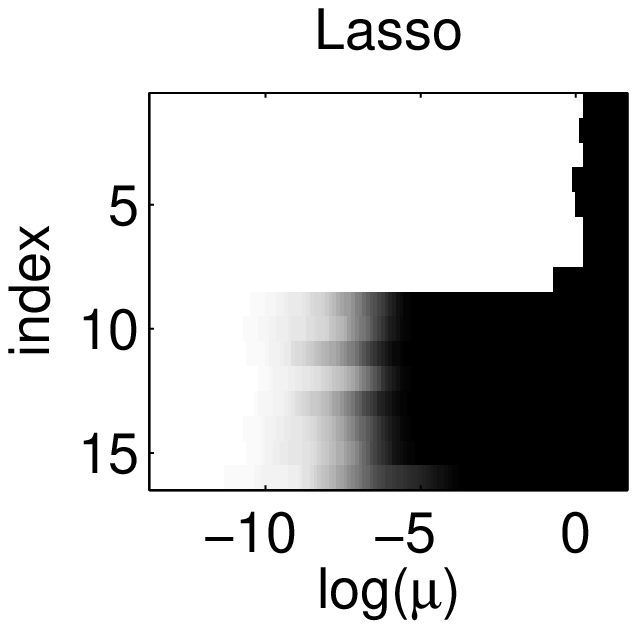}
 \hspace*{.5cm}
 \includegraphics[scale=.55]{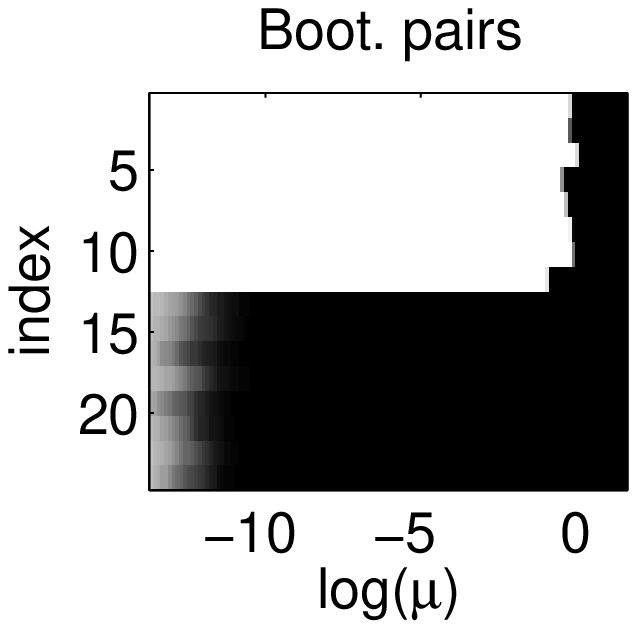}
 \hspace*{.5cm}
 \includegraphics[scale=.55]{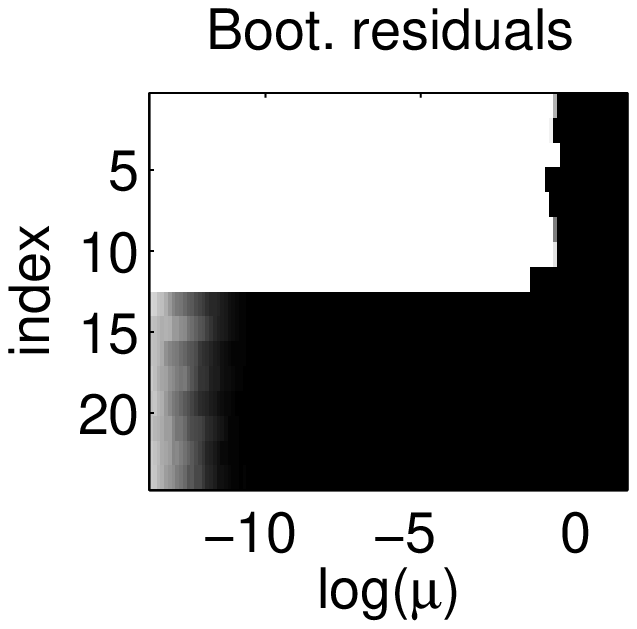}

\vspace*{.05cm}

\includegraphics[scale=.55]{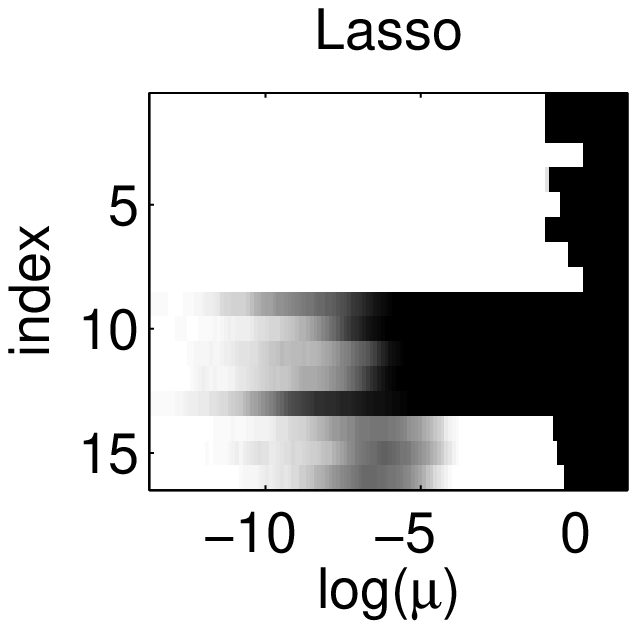}  \hspace*{.5cm}
\includegraphics[scale=.55]{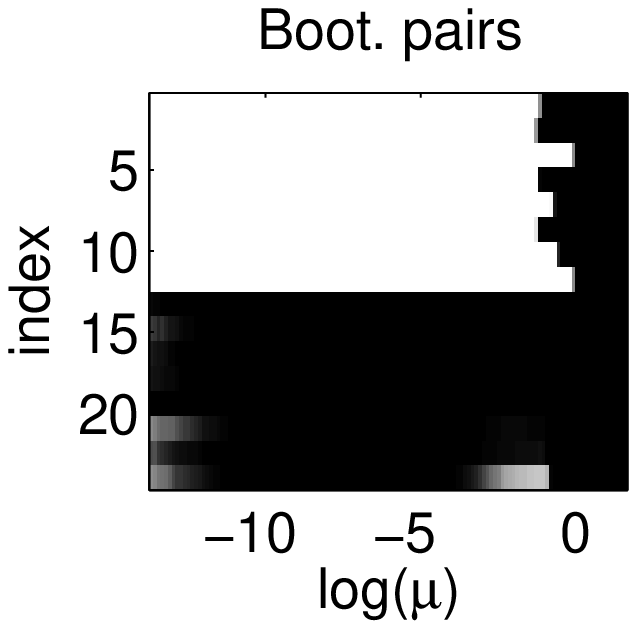}  \hspace*{.5cm}
\includegraphics[scale=.55]{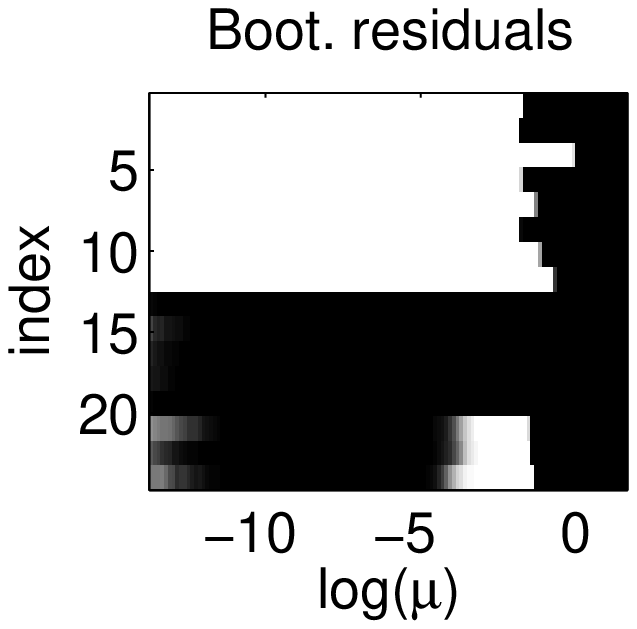}

\vspace*{-.55cm}

\end{center}
\caption{Probability of selecting each variable vs. regularization parameter  $\mu$ (low-dimensional setting) for the Lasso (left column) and the Bolasso (middle and right columns). White values correspond to probability equal to one, and black values correspond to probability equal to zero  (model consistency corresponds to while on the top 8 variables and black on the rest). Top: consistency condition of the Lasso is satisfied, Bottom: consistency condition not satistfied. See text for details.}
\label{fig:lowdim-intersection}

\end{figure}

\begin{figure}
\begin{center}

\vspace*{-.125cm}

\includegraphics[scale=.55]{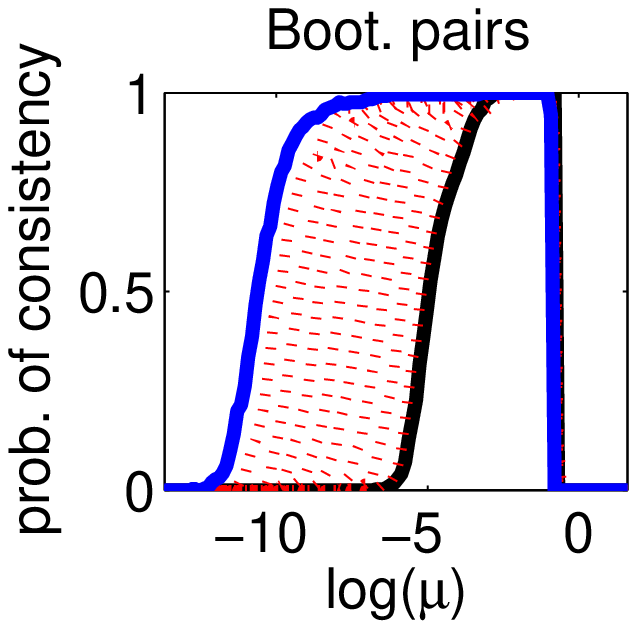} \hspace*{1cm}
\includegraphics[scale=.55]{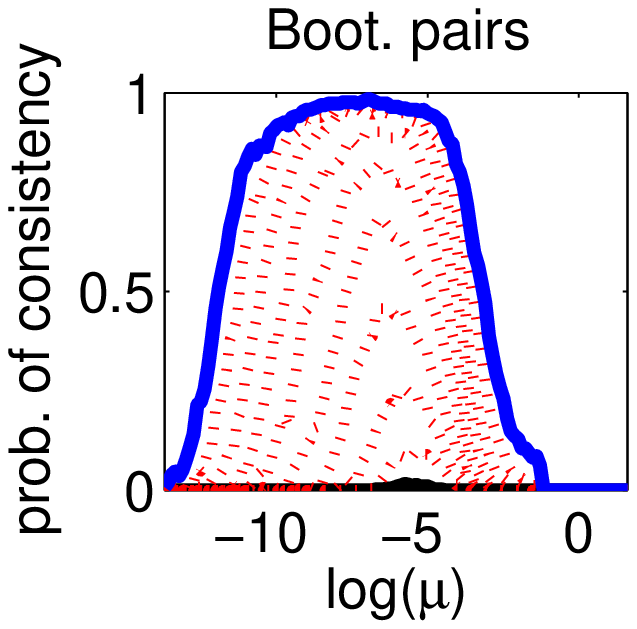}

\vspace*{.05cm}

\includegraphics[scale=.55]{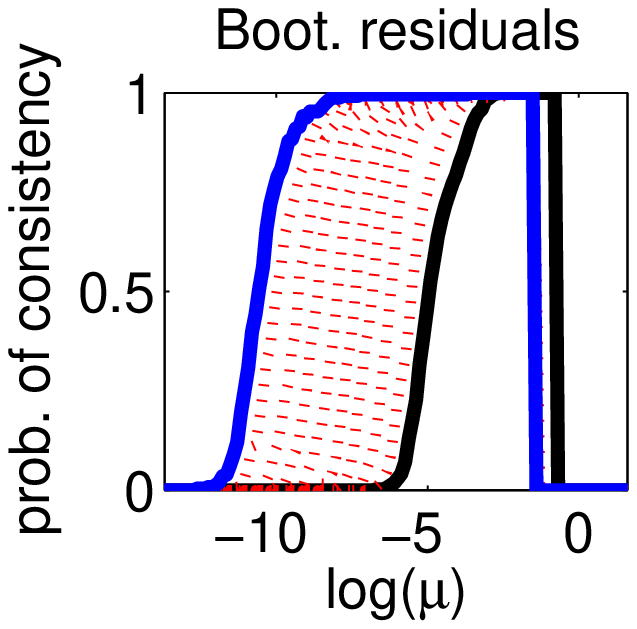} \hspace*{1cm}
\includegraphics[scale=.55]{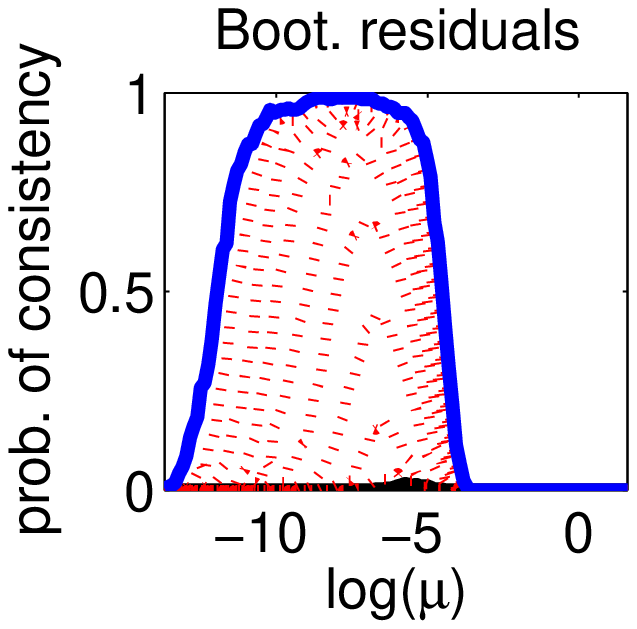}

\end{center}

\vspace*{-.65cm}

\caption{Probability of correct pattern selection with various   numbers $m$ of replications in 
$\{1 \mbox{ in plain black},2,4,8,16,32,64,128,256, \mbox{ all in dashed red}, 512 \mbox{ in plain blue}\}$  (low-dimensional setting). Top: consistency condition of the Lasso is satisfied, Bottom: consistency condition not satistfied. Note that only one replication (plain black) is very similar to the regular Lasso.}
\label{fig:lowdim-effectofm}

\end{figure}

In \myfig{lowdim-intersection}, we compute the marginal probability of selecting the variables for the Lasso (left column) and the various ways of using the Bolasso (boostrapping pairs or residuals), i.e., \emph{after intersecting}. Those are obtained by running the Bolasso with 512 replications, 128 times on the same design but with different noisy observations (thus, a total of $512 \times 128$ Lasso runs are used for each of the plots on the middle and right columns of \myfig{lowdim-intersection}). On the top plots, the Lasso consistency condition in \eq{cond} is satisfied and the two versions of the Bolasso increase the width of the consistency region of the Lasso, while on the bottom plots, it is not, and the Bolasso creates a consistency region. Note that bootstrapping residuals modifies the early parts of the regularization path (i.e., large values of $\mu$), illustrating the effect of the bias of the Lasso when bootstrapping residuals.

In \myfig{lowdim-effectofm}, we consider the effect of the number $m$ of bootstrap replications, in the same two situations. Increasing $m$ seems always beneficial. Note that (1) when $m=1$ (essentially the Lasso), we get some strictly positive probabilities of good pattern selection even in the inconsistent case, illustrating Proposition~\ref{prop:lowdim-medium}, and (2) if $m$ was too large, some of the relevant variables would start to leave the intersection of active sets (but this has not happened in our simulations with only 512 replications).

\begin{figure}
\begin{center}

\vspace*{-.05cm}

\includegraphics[scale=.55]{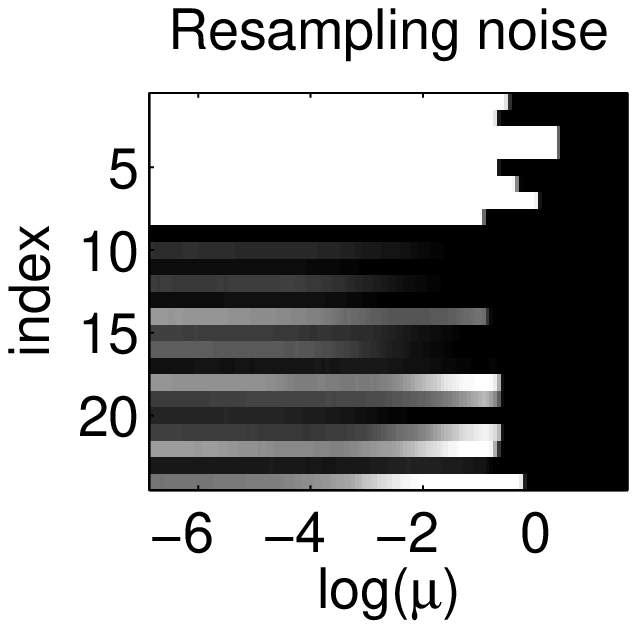}
 \hspace*{.5cm}
 \includegraphics[scale=.55]{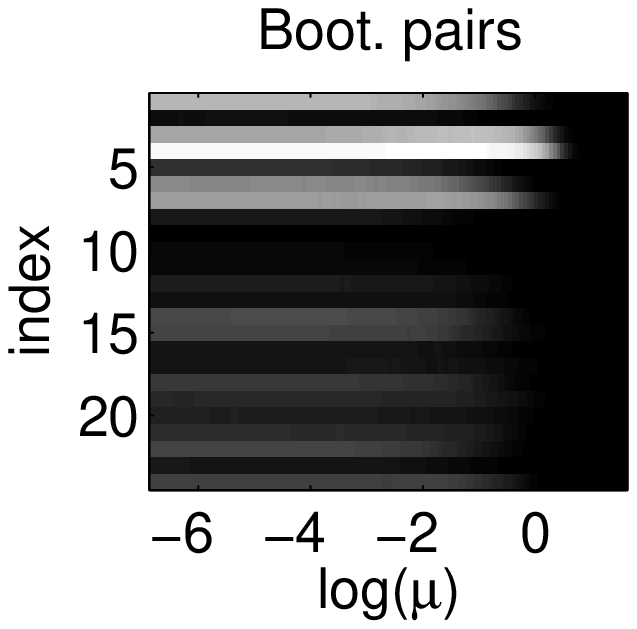}
 \hspace*{.5cm}
 \includegraphics[scale=.55]{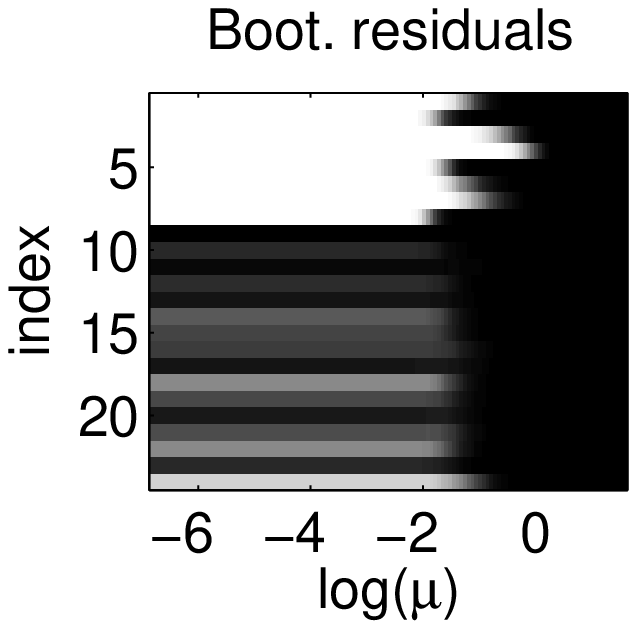}

\vspace*{.15cm}

\includegraphics[scale=.55]{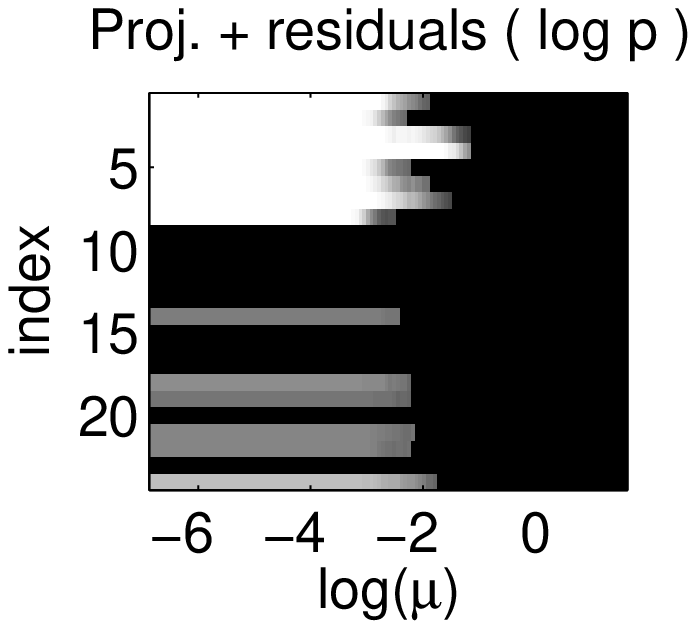}  \hspace*{.5cm}
\includegraphics[scale=.55]{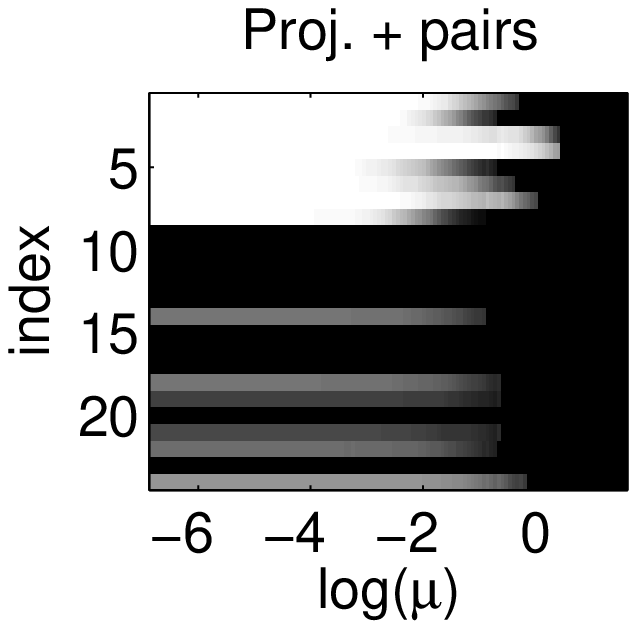}  \hspace*{.5cm}
\includegraphics[scale=.55]{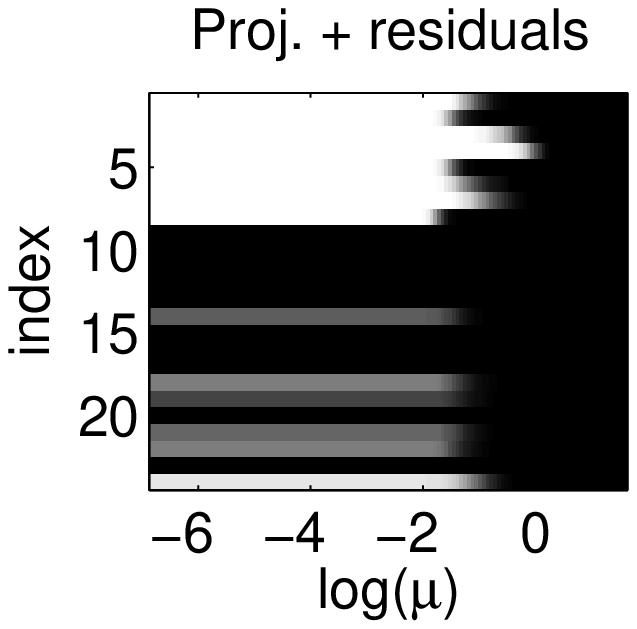}

\vspace*{-.45cm}

\end{center}

 \caption{
 Probability of selecting each variable vs. regularization parameter  $\mu$ (high-dimensional setting) for various resampling schemes, \emph{before intersecting}. Only the first 8 variables and the 16 variables which violates condition in \eq{cond} the most are plotted.
 White values correspond to probability equal to one, and black values correspond to probability equal to zero (model consistency corresponds to while on the top 8 variables and black on the rest). Note the similar behavior of resampling noise (which requires knowing the generating distribution) and all forms of bootstrapping (except for bootsrapping pairs, in the top-middle plot).}
\label{fig:highdim}

\end{figure}

\subsection{Experiments - High-Dimensional Settings}

\label{sec:experiments-high}

We now consider a ``high-dimensional'' design matrix (i.e. such that $p>n$), with $p=128$, $n=64$ and $8$ relevant variables (i.e., $\J = \{1,\dots,8\}$). The design matrix is sampled from a normal distribution  with i.i.d.~elements.
For the sampled design matrix, the condition in \eq{cond} is not satisfied, as for most designs with such $p$, $n$ and $|\J|$, as shown in \myfig{condition} in \mysec{highdim}, but assumptions \hypreff{unicity-highdim}{inv-highdim} are.

We performed the same simulations than in \mysec{experiments-low}, with additional bootstrapping procedures, namely after projecting into the original Lasso estimate, with the same regularization parameter (no consistency result) or with a parameter multiplied by $\log p$ (consistency result in Proposition~\ref{prop:bolasso-highdim}).

In \myfig{highdim}, we consider marginal probabilities \emph{before intersection}, to study the general behavior of various resampling schemes. We see that bootstrapping procedures behave rather differently than resampling the noise (unlike in low-dimensional settings), and that boostrapping pairs does lose some of the relevant variables while boostrapping residuals does not. After projection, all resampling procedures behave correctly. In \myfig{highdim-intersection}, we compare the Lasso and the Bolasso (for several ways of performing the bootstrap): boostrapping residuals consistently leads to better performance. Note that while the top right plot behaves correctly, we currently have no proofs for it. In \myfig{highdim-effectofm}, we consider the effect of various numbers of replications. Note that in the bottom-right plot, 512 replications are indeed too many (i.e., when too many replications are used, we start to  lose some of the relevant variables).

\section{Conclusion} 

 We have presented a detailed analysis of the variable selection properties of a boostrapped version of the Lasso. The model estimation procedure, referred
  to as the Bolasso, is provably consistent under general assumptions, in low-dimensional and high-dimensional settings. We have considered the two types of bootstrap for linear regression, and have shown empirically and theoretically better properties for the bootstrap of residuals.
  This work brings to light that poor variable selection results of the Lasso may be easily enhanced  thanks to a simple parameter-free resampling procedure. 
Our contribution also suggests that the use of bootstrap samples by L. Breiman
in Bagging/Arcing/Random Forests~\cite{arcing} may have been so far slightly overlooked and considered a minor feature, while using boostrap samples may actually be 
a key computational feature in such algorithms for good model selection performances, and eventually good prediction performances on real datasets.

\begin{figure}
\begin{center}

\vspace*{-.05cm}

\includegraphics[scale=.55]{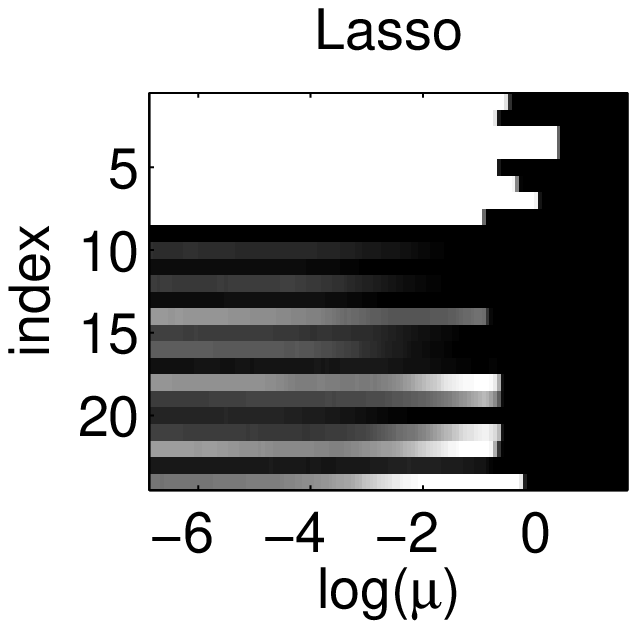}
 \hspace*{.5cm}
 \includegraphics[scale=.55]{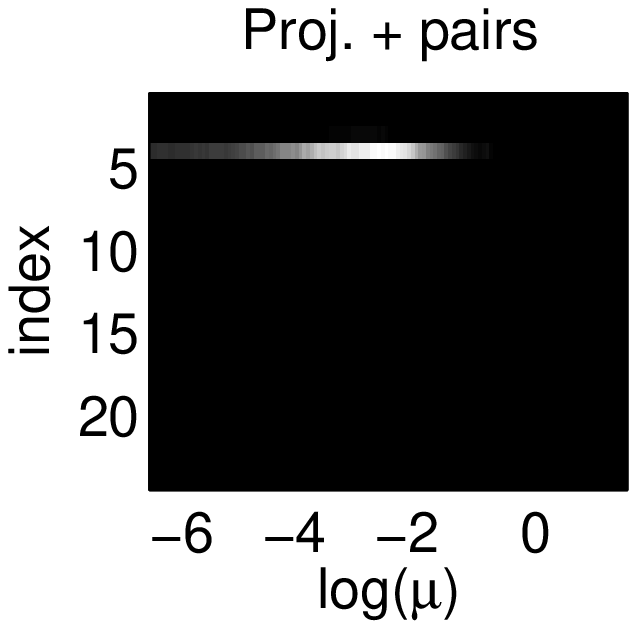}
 \hspace*{.5cm}
 \includegraphics[scale=.55]{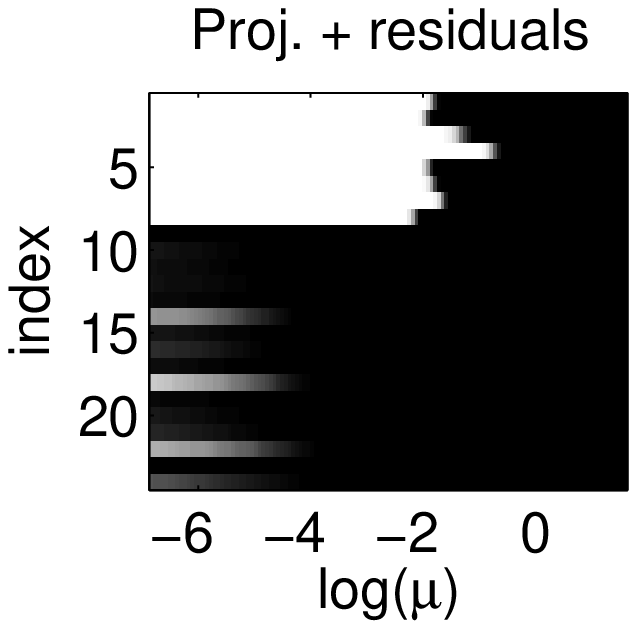}

\vspace*{.05cm}

\includegraphics[scale=.55]{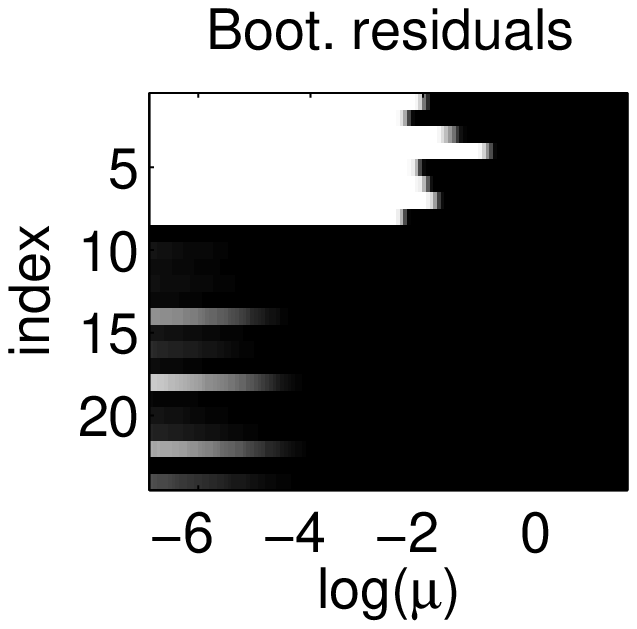}  \hspace*{.5cm}
\includegraphics[scale=.55]{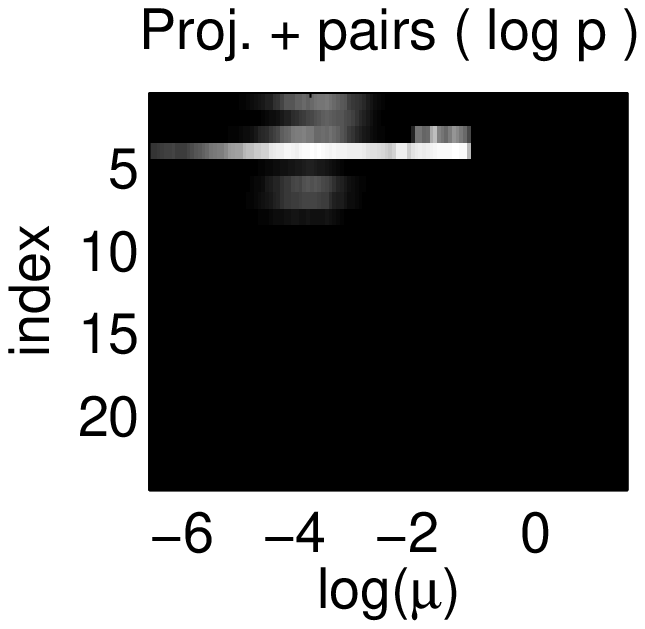}  \hspace*{.5cm}
\includegraphics[scale=.55]{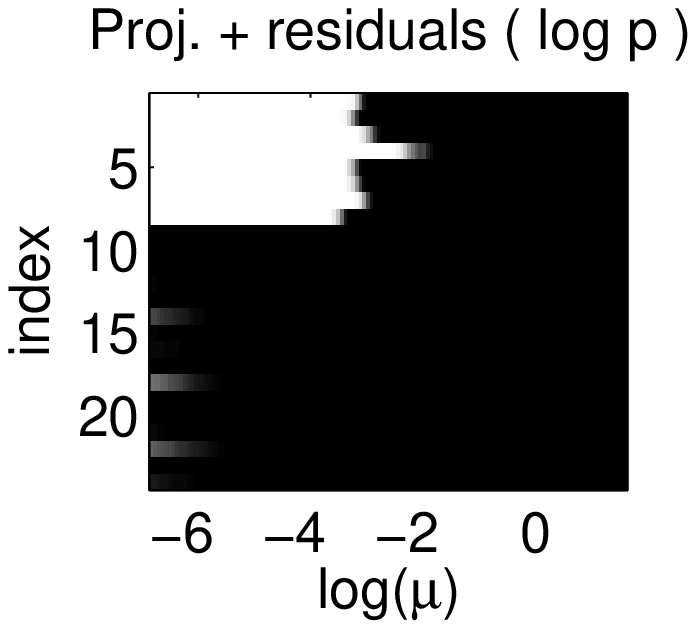}

\vspace*{-.45cm}

\end{center}
\caption{Probability of selecting each variable vs. regularization parameter  $\mu$ (high-dimensional setting) for the Lasso (left column) and the Bolasso (middle and right columns). White values correspond to probability equal to one, and black values correspond to probability equal to zero  (model consistency corresponds to while on the top 8 variables and black on the rest). See text for details.}
\label{fig:highdim-intersection}

\end{figure}

\begin{figure}
\begin{center}

\vspace*{-.05cm}
\includegraphics[scale=.55]{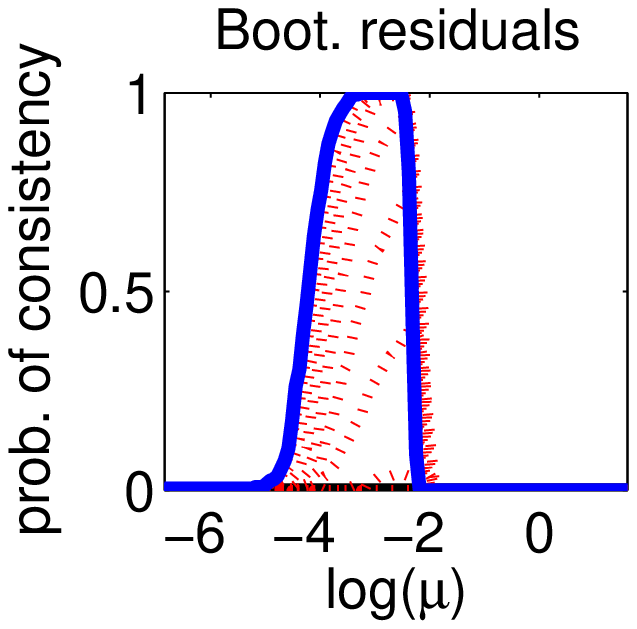} \hspace*{1cm}
\includegraphics[scale=.55]{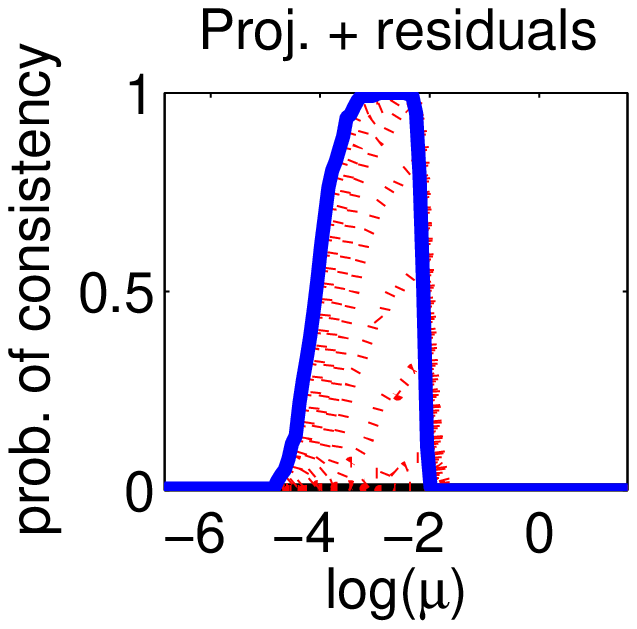}

\vspace*{.05cm}

\includegraphics[scale=.55]{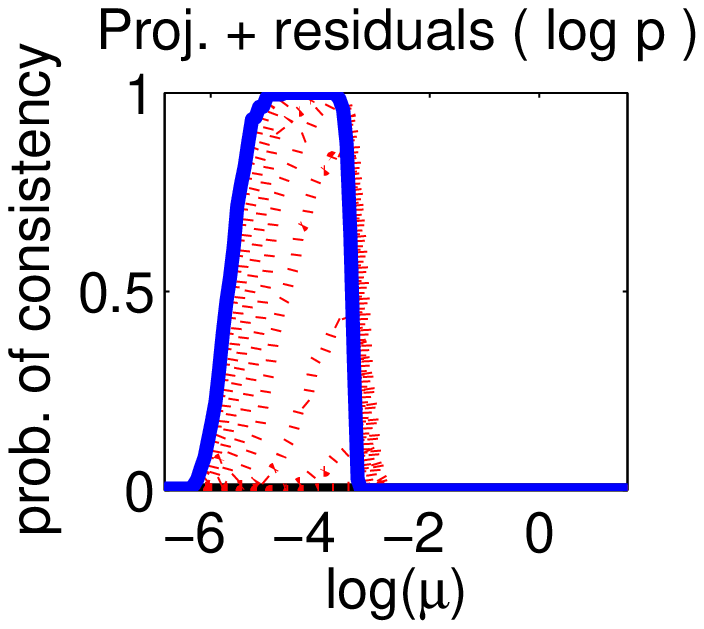} \hspace*{1cm}
\includegraphics[scale=.55]{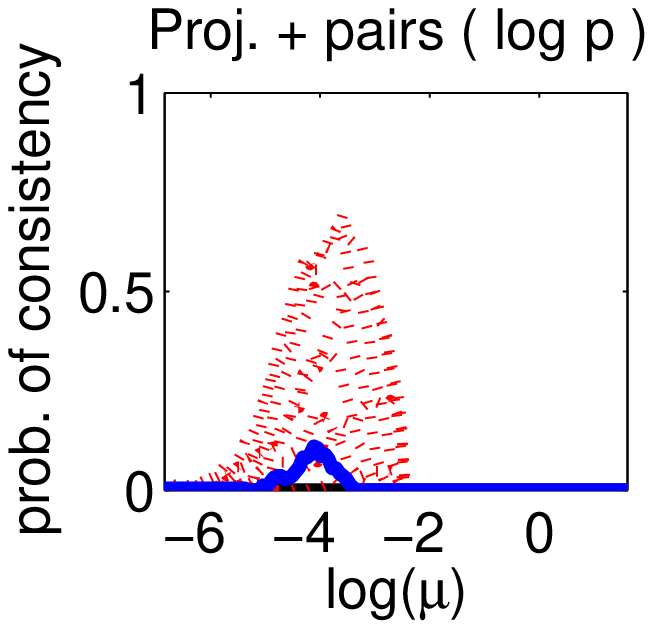}

\vspace*{-.35cm}

\end{center}
\caption{Probability of correct pattern selection with various   numbers $m$ of replications in 
$\{1 \mbox{ in plain black},2,4,8,16,32,64,128,256,  \mbox{ all in dashed red}, 512 \mbox{ in plain blue}\}$ (high-dimensional setting). Top: consistency condition of the Lasso is satisfied, Bottom: consistency condition not satistfied. Note that only one replication (plain black) is very similar to the regular Lasso.}
\label{fig:highdim-effectofm}

\end{figure}

  The current work could be extended in various ways: 
  first, we have not proved yet that bootstrapping residuals, while giving nice empirical performance, is consistent in terms of model selection. Second, a similar analysis could be applied to    other settings than least-square regression with the $\ell^1$-norm, namely regularization by block $\ell^1$-norms~\cite{grouped}, multiple kernel learning~\cite{grouped}, more general hierarchical norms~\cite{cap,hkl}, and other losses such as general convex classification losses; in particular, an extension of our results to well-specified generalized linear models is straightforward, as they are locally equivalent to a problem like in \eq{lasso-eq}, i.e., locally they are equivalent to  minimizing $ \frac{1}{2} (w -\w)^\top Q (w -\w)- q^\top(w-\w) + \reg  \| w\|_1$, with $q$ being random and having as covariance matrix a multiple of $Q$.  
  
  Moreover, extensions to general misspecified models or models with heteroscedastic additive noise could be carried through. Also, theoretical and practical connections could be made with other work on resampling methods and boosting~\cite{boosting}. In particular, using the bootstrap to both select the model and estimate the regularization parameter is clearly of interest.
  Finally, applications of such resampling techniques for signal processing and compressed sensing~\cite{cs1,cs2} remain to be explored, both in the context of basis pursuit ($\ell^1$-norm regularization, \cite{chen}) and matching pursuit (greedy selection, \cite{Mallat93matching}).

\iftrue
\appendix

\section{Probability results}
\label{app:proba}

In this appendix, we review concentration inequalities that we will need throughout the proofs.

\subsection{Multivariate Berry-Esseen Inequalities}
\label{app:berry}
If $X_1,\dots,X_n \in \rb^p$ are $n$ independent (but not indentically distributed) random vectors, with finite third-order moments, and normalized second-order moments, i.e., such that ${\rm var}(n^{-1/2} \sum_{i=1}^n X_i ) = \idm$, then for all convex sets $C$, we have the multivariate Berry-Esseen inequality~\cite{bentkus,gotze}:
 \BEQ
\label{eq:berry}
\left| \P \!\left( \frac{1}{n^{1/2}}   \sum_{i=1}^n \displaystyle X_i \in C \right) \! -\! \P( u \in C) \right| 
 \leqslant C^{\rm BE}_1 \frac{p^{1/2}}{  n^{1/2}} \left( \frac{1}{n} \sum_{i=1}^n \E  \| X_i \|_2^3 \right),
\EEQ
where $u$ is a standard normal random vector and $C^{\rm BE}_1 $ is a universal constant.

We can also derive from \cite{gotze} another version for expectation of bounded Lipschitz functions, i.e, if $f(x)$ is bounded by $M_1$ and Lipschitz, with Lipschitz constant $M_2$, then, we have:
\BEQ
\label{eq:berryL}
\left| \E f \! \left(\! n^{-1/2}   \sum_{i=1}^n \displaystyle X_i  \! \right) \! - \! \E f(u) \right| \leqslant C^{\rm BE}_2   ( M_1 \!+ \! M_2 )   \frac{p^{1/2}}{  n^{1/2}}  \left( \frac{1}{n} \sum_{i=1}^n \E  \| X_i \|_2^3 \right),
\EEQ
where $ C^{\rm BE}_2$ is a universal constant. Note that better bounds (with better scalings in $p$) exist in the i.i.d. case~\cite{bentkus}.  Any improvement on Berry-Esseen inequalities would lead to an improvement of our results.

In this paper, we will consider convex sets corresponding to selecting a given sign pattern (among the $3^p$ available ones), making use of \eq{berry}. When considering leaving out a given variable (like in Appendix~\ref{app:proofs-lowdim-missing}), we will design a specific Lipschitz function and apply \eq{berryL}.
 
 \subsection{Concentration Inequalities for Subgaussian Variables}
 \label{app:quadratic} 
 We consider $n$ independent real random variables  $Y_1,\dots,Y_n$, which are subgaussian with zero mean and uniform subgaussian constant, i.e., there exists $\tau >0 $ such that for all $i \in \{1,\dots,n\}$ and all $s\in \rb$,
$\E( e^{sY_i}) \leqslant e^{s^2 {\tsig}^2/2}$. Then, we have~\cite{massart-concentration,concentration}:
\BEQ
\label{eq:concentration}
\P\left( \frac{1}{n} \sum_{i=1}^n Y_i \geqslant t \right) \leqslant e^{-n t^2/2\tsig^2}.
\EEQ
Note  that the variance of $Y$ is always then less than $\tsig^2$ (with equality if and only if $Y$ is normally distributed). We will also use Hoeffding inequality for bounded variables, which amounts to use the fact that if $|Y|\leqslant M$, then $Y$ is subgaussian with constant $\tau^2 = M^2/4$~\cite{concentration,massart-concentration}.

We will also use concentration inequalities for quadratic forms~\cite{wright} in independent random subgaussian variables, with universal  strictly positive constants $C_1^{\rm q}$, $C_2^{\rm q}$, $C_3^{\rm q}$: for all symmetric matrices $A$, if $|A|$ denotes the matrix of absolute values of elements of $A$, then
\BEQ
\label{eq:quadratic}
\P \!\left(  Y^\top A Y \! -\! \E ( Y^\top A Y ) \! \geqslant t \right)
\! \leqslant \! C_1^{\rm q} \exp \!\left(\! -\! \min \! \left\{ \!\frac{C_2^{\rm q} t \tau^{-2}}{  \| |A| \|_2},
 \frac{C_3^{\rm q} t^2 \tau^{-4}}{ \| A\|_F^2 } \right\} \! \right) \! .
\EEQ

\section{Perturbation of positive matrices}
\label{app:perturbation}
In this appendix, we review known results of perturbation of positive matrices. Let $Q$ and $R$ be two positive matrices of size $p$, $A$ and $B$ two disjoint subsets of $\{1,\dots,p\}$ such that $A\cup B = \{1,\dots,p\}$. We have~\cite{horn}:
\BEAS
\| Q^{-1} - R^{-1} \|_{\rm 2} & \!\!\leqslant \!\! & \frac{1}{\lmin(Q) \lmin(R)} \| Q- R \|_{ 2 } ,\\
\| Q^{1/2} - R^{1/2} \|_{\rm 2} & \!\!\leqslant \!\! & \frac{ \max \{\lmax(Q),\lmax(R)\}^{1/2}}{
2 \max\{\lmin(Q),\lmin(R)\}}
\| Q- R \|_{ 2 }, \\
\| Q^{-1/2} - R^{-1/2} \|_{\rm 2} & \!\!\leqslant \!\! & \frac{ 1 }{
2\max\{\lmin(Q),\lmin(R)\}^{3/2}}
\| Q- R \|_{ 2 } ,
\EEAS
\BEAS
 \| Q_{A,B}Q_{B,B}^{-1}\! - \! R_{A,B}R_{B,B}^{-1} \|_{\rm 2} & \!\!\leqslant \!\! & \! \!\!
\frac{ \| Q_{A,B}\! -\! R_{A,B} \|_{ 2 }}{ \lmin(Q_{B,B})} 
\! + \! \frac{ \lmax(R_{A,A})^{1/2}}{ \lmin(R_{B,B})^{3/2}} \| Q_{B,B}\!-\! R_{B,B} \|_{ 2 }. 
\EEAS

\section{Optimization lemmas}

 The following three lemmas give error bounds on the Lasso estimates and conditions for a sign pattern $s \in \{-1,0,1\}^p$ to be the one of the unique solution  $\hat{w}$ to \eq{lasso} or \eq{lasso-eq}.

\begin{lemma}
\label{lemma:opt}
Assume \hypref{model} and  \hypref{inv}.
Let $s \in \{0,-1,1\}^p$  and $J = \{ j, s_j \neq 0 \}$. Then $s$ is selected (i.e., $\sign( \hat{w}) = s$) if and only if:
\BEA
  \label{eq:A00}  
 &&   \|   Q_{J^c,J}Q_{J, J}^{-1} q_J-  q_{J^c} - Q_{J^c, J^c} \w_{J^c}      - \reg Q_{J^c,J} Q_{J, J}^{-1}  s_J
   \  \|_\infty \leqslant \reg , \\
\label{eq:B00}  &&      \sign  (  \w_J + Q_{J,J}^{-1} q_J - \reg Q_{J, J}^{-1}  s_J   ) = s_J.
\EEA 
The solution then satisfies $\hat{w}_J = \w_J + Q_{J,J}^{-1}( q_J - \mu s_J)$.
\end{lemma}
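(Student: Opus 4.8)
The plan is the classical argument for Lasso support recovery: write the subgradient optimality condition for the strongly convex problem \eqref{eq:lasso-eq}, then eliminate the active coordinates using invertibility of the corresponding principal block of $Q$. Under \hypref{inv} the matrix $Q$ is positive definite, so the objective of \eqref{eq:lasso-eq} is strongly convex; hence it has a \emph{unique} global minimizer $\hat{w}$, and, the problem being convex, a vector $\hat{w}$ is that minimizer if and only if there is a $z\in\partial\|\hat{w}\|_1$ (i.e.\ $z_j=\sign(\hat{w}_j)$ wherever $\hat{w}_j\neq 0$ and $z_j\in[-1,1]$ wherever $\hat{w}_j=0$) with $Q(\hat{w}-\w)-q+\reg z=0$. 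As this condition is at once necessary and sufficient, it is enough to show that it can be met with $\sign(\hat{w})=s$ exactly when \eqref{eq:A00} and \eqref{eq:B00} hold, and to identify $\hat{w}_J$ in that case.

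For the forward implication, suppose $\sign(\hat{w})=s$ and put $J=\{j:s_j\neq 0\}$, so $\hat{w}_{J^c}=0$ and $z_J=s_J$. Split the stationarity equation along the partition $\{J,J^c\}$. The $J$-block involves only the unknown $\hat{w}_J$ (since $\hat{w}_{J^c}=0$), and because $Q_{J,J}$ is a principal submatrix of the positive definite $Q$ it is invertible, so this block determines $\hat{w}_J$ in closed form, as recorded in the lemma; the requirement that this vector have sign pattern exactly $s_J$ (no zero coordinate, correct signs) is precisely \eqref{eq:B00}. Substituting this $\hat{w}_J$ into the $J^c$-block expresses $\reg z_{J^c}$ as an explicit affine function of $q$ (and of $Q$, $\w$, $s$), and the subgradient bound $\|z_{J^c}\|_\infty\leqslant 1$ then becomes, verbatim, \eqref{eq:A00}.

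For the converse, assume \eqref{eq:A00} and \eqref{eq:B00}; define $\hat{w}$ by the closed-form value on $J$ and $\hat{w}_{J^c}=0$, set $z_J=s_J$, and let $z_{J^c}$ be the unique vector for which the $J^c$-block of the stationarity equation holds. Then the stationarity equation holds by construction; \eqref{eq:B00} guarantees $\sign(\hat{w}_J)=s_J$, so that $z$ is an admissible subgradient of $\|\cdot\|_1$ at $\hat{w}$ on $J$; and \eqref{eq:A00} guarantees $\|z_{J^c}\|_\infty\leqslant 1$, so that $z_{J^c}$ is admissible on $J^c$ where $\hat{w}$ vanishes. Thus $(\hat{w},z)$ satisfies the optimality condition, and by uniqueness of the minimizer, $\hat{w}$ is the Lasso solution, with $\sign(\hat{w})=s$.

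There is no genuinely hard step. The point needing care is that the statement is an \emph{equivalence}, which is exactly where \hypref{inv} is used: strong convexity makes the minimizer unique---so that ``$s$ is selected'' is well defined---and makes the subgradient condition sufficient and not merely necessary. The remaining effort is the block algebra, namely carrying the terms in $\w_{J^c}$, $Q_{J,J^c}$, $Q_{J^c,J}$ and $Q_{J^c,J^c}$ correctly through the elimination of $\hat{w}_J$ so as to land on the precise form of \eqref{eq:A00}; the degenerate case $J=\varnothing$ (where \eqref{eq:B00} is vacuous and \eqref{eq:A00} reduces, up to an overall sign, to $\|q-Q\w\|_\infty\leqslant\reg$) should be noted separately.
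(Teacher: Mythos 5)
Your argument is correct and is essentially the paper's own proof: the subgradient optimality condition for the strongly convex problem \eq{lasso-eq}, followed by elimination of $\hat{w}_J$ through the invertible block $Q_{J,J}$, with \eq{B00} encoding the sign constraint on $J$ and \eq{A00} encoding the bound $\| z_{J^c}\|_\infty \leqslant 1$ on the subgradient over the inactive set. One caveat on the final algebra you defer to "carrying the terms through": doing it exactly yields $Q_{J^c,J^c|J}\,\w_{J^c}$ in place of $Q_{J^c,J^c}\w_{J^c}$ in \eq{A00} and an extra $Q_{J,J}^{-1}Q_{J,J^c}\w_{J^c}$ in the closed form for $\hat{w}_J$, so the conditions match the lemma verbatim only when $\w_{J^c}=0$ (i.e.\ $\J \subset J$) --- an imprecision already present in the lemma's statement (and harmless in its later uses), not one introduced by your proof.
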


\begin{proof}
Following standard results in non-smooth convex optimization~\cite{boyd,bonnans}, $w$ is optimal for \eq{lasso} or \eq{lasso-eq}, if and only if, for all $j \in \{1,\dots,p\}$ such that $w_j \neq 0$, then,
$ [Q(w - \w)]_j- q_j + \mu \sign( w_j) = 0$, and for all other $j$, $|[Q(w - \w)]_j- q_j | \leqslant \mu$. We thus get $\hat{w}_J = \w_J +  Q_{J,J}^{-1}( q_J - \mu s_J)$, and the result follows from expressing that $\hat{w}_J$ should have the right sign on $J$---\eq{B00}---and that the directional derivatives along other directions are positive---\eq{A00}.
\end{proof}

When the sign pattern is consistent on $\J$ with $\w$, then we can further refine the conditions of  Lemma~\ref{lemma:opt}:

\begin{lemma}
\label{lemma:opt2}
Assume \hypref{model} and  \hypref{inv}.
Let $s \in \{0,-1,1\}^p$ such that $s_{\J} = \sign( \w_\J)$ and let $J = \{ j, s_j \neq 0 \} \supset \J$. Then $s$ is selected if and only if:
\BEA
  \label{eq:A}  
 &&   \|   Q_{J^c,J}Q_{J, J}^{-1} q_J-  q_{J^c}     - \reg Q_{J^c,J} Q_{J, J}^{-1}  s_J
   \  \|_\infty \leqslant \reg , \\
\label{eq:B}  &&      \sign  (  \w_\J + ( Q_{J,J}^{-1} q_J - \reg Q_{J, J}^{-1}  s_J)_\J  ) = \sign( \w_\J),
\\
\label{eq:C}   &&      \sign  (  Q_{J,J}^{-1} q_J - \reg Q_{J, J}^{-1}  s_J)_{J \backslash \J}   = s_{J \backslash \J}.
\EEA 
The solution then satisfies $w_J = \w_J + Q_{J,J}^{-1}( q_J - \mu s_J)$.

\end{lemma}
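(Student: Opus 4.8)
The plan is simply to specialize Lemma~\ref{lemma:opt} to the present situation. The key observation is that the extra hypothesis $s_\J = \sign(\w_\J)$ forces $\J \subseteq J$ (each coordinate $j \in \J$ has $\sign(\w_j) \neq 0$ by the definition of $\J$), and hence $J^c \subseteq \J^c$. First I would apply Lemma~\ref{lemma:opt} to the given pattern $s$, whose support is exactly $J$: it asserts that $\sign(\hat w) = s$ if and only if \eq{A00} and \eq{B00} hold, and that in that case $\hat w_J = \w_J + Q_{J,J}^{-1}(q_J - \reg s_J)$. The displayed formula for $\hat w_J$ is already the one claimed here, so it only remains to rewrite the pair of conditions \eq{A00}--\eq{B00} as the triple \eq{A}--\eq{C}.

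For \eq{A00}: since $\w$ is supported on $\J$ by \hypref{model} and $J^c \subseteq \J^c$, we have $\w_{J^c} = 0$, so $Q_{J^c, J^c}\w_{J^c} = 0$ and that term drops out, turning \eq{A00} into exactly \eq{A}. For the sign condition \eq{B00}, write $\delta = Q_{J,J}^{-1} q_J - \reg Q_{J,J}^{-1} s_J$, so that \eq{B00} reads $\sign(\w_J + \delta) = s_J$, and split the coordinates of $J$ into those lying in $\J$ and those lying in $J \backslash \J \subseteq \J^c$. On $\J$ we have $(\w_J)_\J = \w_\J$ and $s_\J = \sign(\w_\J)$, so the restricted constraint becomes $\sign(\w_\J + \delta_\J) = \sign(\w_\J)$, i.e.\ \eq{B}. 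On $J \backslash \J$ the vector $\w$ vanishes, so $(\w_J)_{J \backslash \J} = 0$ and the restricted constraint becomes $\sign(\delta_{J \backslash \J}) = s_{J \backslash \J}$, i.e.\ \eq{C}. Conversely, \eq{B} and \eq{C} together recover \eq{B00} coordinate by coordinate, since $\J$ and $J \backslash \J$ partition $J$. Thus \eq{A00}--\eq{B00} is equivalent to \eq{A}--\eq{C}, which proves the lemma.

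There is no genuine obstacle here beyond bookkeeping; the only points that deserve an explicit line are that $s_\J = \sign(\w_\J)$ indeed implies $\J \subseteq J$, that $\w_{J^c} = 0$ so the covariance term in \eq{A00} disappears, and that the split of the sign constraint along $\J$ and $J \backslash \J$ is an exact partition of the coordinates of $J$, so that the equivalence holds in both directions.
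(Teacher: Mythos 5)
Your proof is correct and matches the paper's intent exactly: the paper states Lemma~\ref{lemma:opt2} without proof as an immediate refinement of Lemma~\ref{lemma:opt}, and your argument supplies precisely the bookkeeping that is implicit there (the vanishing of $Q_{J^c,J^c}\w_{J^c}$ since $J^c \subseteq \J^c$, and the splitting of the sign condition \eq{B00} over the partition $J = \J \cup (J\setminus\J)$ to yield \eq{B} and \eq{C}). Nothing further is needed.
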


\begin{lemma}
\label{lemma:bound}
Assume \hypref{model} and \hypref{inv}. We have
$ \|  \hw - \w
\|_2 \leqslant  \frac{p^{1/2} \mu + \| q\|_2 }{\lminQ} $ and
$ \| Q^{1/2}(\hw - \w)
\|_2 \leqslant  \frac{p^{1/2} \mu + \| q\|_2 }{\lminQ^{1/2}} $.
\end{lemma}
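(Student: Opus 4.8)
The plan is to read off the result directly from the first-order optimality conditions for the strongly convex problem \eq{lasso-eq}, in the same vein as the proof of Lemma~\ref{lemma:opt}, rather than from a comparison of objective values (which would only yield the weaker constant $2(p^{1/2}\mu+\|q\|_2)/\lminQ$). First I would observe that, since $\hw$ minimizes $\tfrac12(w-\w)^\top Q(w-\w)-q^\top(w-\w)+\mu\|w\|_1$, there is a subgradient $g\in\partial\|\hw\|_1$ with
\[
Q(\hw-\w)-q+\mu g=0,\qquad\text{i.e.}\qquad Q(\hw-\w)=q-\mu g .
\]
Every coordinate of $g$ lies in $[-1,1]$, so $\|g\|_\infty\le 1$ and hence $\|g\|_2\le p^{1/2}$, which gives the scalar bound $\|q-\mu g\|_2\le\|q\|_2+\mu\|g\|_2\le p^{1/2}\mu+\|q\|_2$.

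Next, using \hypref{inv} so that $Q$ is invertible, I would rewrite the optimality identity as $\hw-\w=Q^{-1}(q-\mu g)$ and $Q^{1/2}(\hw-\w)=Q^{-1/2}(q-\mu g)$. Since $Q$ is symmetric positive definite, $\|Q^{-1}\|_2=\lminQ^{-1}$ and $\|Q^{-1/2}\|_2=\lminQ^{-1/2}$, so combining with the bound on $\|q-\mu g\|_2$ from the previous step yields
\[
\|\hw-\w\|_2\le\frac{p^{1/2}\mu+\|q\|_2}{\lminQ},\qquad
\|Q^{1/2}(\hw-\w)\|_2\le\frac{p^{1/2}\mu+\|q\|_2}{\lminQ^{1/2}},
\]
which are exactly the two claimed inequalities.

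There is essentially no hard step here: the only point requiring a little care is to extract the optimality condition in the precise vector form $Q(\hw-\w)=q-\mu g$ with $\|g\|_\infty\le 1$ — which is already implicit in the proof of Lemma~\ref{lemma:opt} — and, correspondingly, to resist the cruder ``plug $\w$ into the objective'' argument, which loses a factor of $2$.
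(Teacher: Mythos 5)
Your proof is correct and is essentially identical to the paper's: the paper likewise extracts the optimality condition in the form $\|Q(\hw-\w)-q\|_\infty\leqslant\mu$ (equivalently $Q(\hw-\w)=q-\mu g$ with $\|g\|_\infty\leqslant 1$), converts to an $\ell^2$ bound via $\|a\|_2\leqslant p^{1/2}\|a\|_\infty$, and then applies $Q^{-1}$ and $Q^{-1/2}$ with their operator norms $\lminQ^{-1}$ and $\lminQ^{-1/2}$. No meaningful difference in approach.
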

\begin{proof}
From optimality conditions, we have $\| Q ( \hw - \w) - q \|_\infty \leqslant \mu$, from which we get $
\| Q^{1/2}(\hw - \w)
\|_2 \leqslant  \lminQ^{-1/2} \left(
\| Q(\hw - \w) - q
\|_2 + \|   q\|_2\right)$.
The results follow from the identity $\|a\|_2 \leqslant p^{1/2} \|a\|_\infty$ for any $a \in \rb^p$.\end{proof}

The following lemma relates the solutions of \eq{lasso-eq} for different values of $Q$ and $q$. This will be used in Appendix~\ref{app:proofs-lowdim-missing} to prove the Lipschitz continuity of the solution of \eq{lasso-eq} as a function of $q$.

\begin{lemma}
\label{lemma:lipschitz}
If $\hat{w}$ is solution of \eq{lasso-eq} for $Q,q$, and $\hat{w}'$ is solution for $Q',q'$, then we have:
$$
\|Q^{1/2}(\hat{w} - \hat{w}') \|_2
\!\leqslant \!  2 \| Q^{-1}(q - q')\|_2  
+\frac{2 \| (Q')^{-1/2}( Q - Q')Q^{-1/2}\|_{ 2 }}{ \lambda_{\min}(Q')^{1/2}}
 \! \left[ p^{1/2} \mu\! + \!\|q'\|_2
\right] .
$$
Let $\gamma = \frac{Q^{1/2} ( \hat{w} - \w - Q^{-1} q)}{\mu}$, then if $Q=Q'$,
$
\|\gamma - \gamma'\|_2 \leqslant \frac{ 3 \| Q^{-1}(q - q')\|_2 }{\mu }
$, and if $q = q'$, then
$
\|Q^{-1/2}\gamma - (Q')^{-1/2} \gamma'\|_2 \leqslant \frac{2 \| (Q')^{-1/2}( Q - Q')Q^{-1/2}\|_{ 2 }}
{\mu \lambda_{\min}(Q)^{1/2}\lambda_{\min}(Q')^{1/2}}  \left[ p^{1/2} \mu + \|q'\|_2
\right] 
$.
\end{lemma}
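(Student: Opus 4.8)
The plan is to work entirely from the first-order optimality conditions of \eqref{eq:lasso-eq} together with the monotonicity of the subdifferential of $\|\cdot\|_1$. As in the proof of Lemma~\ref{lemma:opt}, $\hat{w}$ is characterized by the existence of $s\in\partial\|\hat{w}\|_1$ with $Q(\hat{w}-\w)=q-\mu s$, i.e.\ $\hat{w}-\w=Q^{-1}q-\mu Q^{-1}s$, so that $\gamma=Q^{1/2}(\hat{w}-\w-Q^{-1}q)/\mu=-Q^{-1/2}s$; writing $s'\in\partial\|\hat{w}'\|_1$ for the multiplier of the $(Q',q')$ problem we likewise have $\hat{w}'-\w=(Q')^{-1}q'-\mu(Q')^{-1}s'$ and $\gamma'=-(Q')^{-1/2}s'$. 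All three bounds will be read off from these identities.

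First I would prove the norm estimate. Subtracting the two stationarity equations and rearranging to isolate $\hat{w}-\hat{w}'$, while using the resolvent identity $Q^{-1}-(Q')^{-1}=-(Q')^{-1}(Q-Q')Q^{-1}$ and the fact that $q'-\mu s'=Q'(\hat{w}'-\w)$ to recast the part stemming from $Q\neq Q'$, yields an identity of the form $Q(\hat{w}-\hat{w}')=(q-q')-(Q-Q')(\hat{w}'-\w)-\mu(s-s')$. Taking the inner product with $\hat{w}-\hat{w}'$ and dropping the nonnegative term $\mu\langle s-s',\hat{w}-\hat{w}'\rangle\geqslant 0$ (monotonicity of $\partial\|\cdot\|_1$) bounds $\|Q^{1/2}(\hat{w}-\hat{w}')\|_2^{2}$ by two inner products; estimating these by Cauchy--Schwarz, factoring $(Q-Q')$ through $(Q')^{-1/2}(Q-Q')Q^{-1/2}$, controlling $\|(Q')^{1/2}(\hat{w}'-\w)\|_2\leqslant(p^{1/2}\mu+\|q'\|_2)/\lambda_{\min}(Q')^{1/2}$ by Lemma~\ref{lemma:bound} applied to the $(Q',q')$ problem, and dividing by $\|Q^{1/2}(\hat{w}-\hat{w}')\|_2$ produces the claimed inequality (the constant $2$ leaves room for the elementary norm comparisons and, alternatively, for interposing the minimizer $\hat{w}''$ of the $(Q,q')$ problem: the step $\hat{w}\mapsto\hat{w}''$ is the proximal map of $\mu\|\cdot\|_1$ in the $Q$-metric, hence nonexpansive in it, and the step $\hat{w}''\mapsto\hat{w}'$ only changes $Q$).

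For the remaining two estimates I would specialize. When $Q=Q'$, subtracting the stationarity equations gives $\mu(s-s')=(q-q')-Q(\hat{w}-\hat{w}')$, whence $\gamma-\gamma'=-Q^{-1/2}(s-s')=\mu^{-1}\big(Q^{1/2}(\hat{w}-\hat{w}')-Q^{-1/2}(q-q')\big)$, and the two summands are bounded by the first estimate (with $Q=Q'$) and a norm comparison, which collects the constant $3$. When $q=q'$, note $Q^{-1/2}\gamma=\mu^{-1}(\hat{w}-\w-Q^{-1}q)=-Q^{-1}s$, so that $Q^{-1/2}\gamma-(Q')^{-1/2}\gamma'=(Q')^{-1}s'-Q^{-1}s=\mu^{-1}\big((\hat{w}-\hat{w}')-(Q^{-1}-(Q')^{-1})q\big)$; here $\|\hat{w}-\hat{w}'\|_2\leqslant\lambda_{\min}(Q)^{-1/2}\|Q^{1/2}(\hat{w}-\hat{w}')\|_2$ is bounded by the first estimate, while $\|(Q^{-1}-(Q')^{-1})q\|_2$ is bounded by writing $Q^{-1}-(Q')^{-1}=-(Q')^{-1/2}\big[(Q')^{-1/2}(Q-Q')Q^{-1/2}\big]Q^{-1/2}$ and using $\|Q^{-1/2}q\|_2\leqslant\lambda_{\min}(Q)^{-1/2}\|q\|_2\leqslant\lambda_{\min}(Q)^{-1/2}(p^{1/2}\mu+\|q'\|_2)$; adding the two gives the stated bound.

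The hard part will be the first (simultaneous-perturbation) estimate: one must separate the effect of perturbing $q$ from that of perturbing $Q$ cleanly enough that only $\|(Q')^{-1/2}(Q-Q')Q^{-1/2}\|_2$ and $\lambda_{\min}(Q')$ appear, rather than a coarser quantity involving $\lambda_{\max}$, and this relies on recognizing $q'-\mu s'=Q'(\hat{w}'-\w)$ so that Lemma~\ref{lemma:bound} can be invoked at the right place; the monotonicity step that eliminates $s-s'$ and the subsequent bookkeeping are routine.
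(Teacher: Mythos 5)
Your proof is correct, but it takes a genuinely different route from the paper's. The paper works with objective values: it lower-bounds $J(\hat{w}'+\alpha z)-J(\hat{w}')$ for $z$ normalized in the $Q$-metric by $\frac{\alpha^2}{2}-\alpha\bigl(\|Q^{-1/2}(q-q')\|_2+\|(Q')^{-1/2}(Q-Q')Q^{-1/2}\|_2\,\|(Q')^{1/2}(\hat{w}'-\w)\|_2\bigr)$ (using the subgradient optimality of $\hat{w}'$ for the $(Q',q')$ objective to control the $\ell^1$ increment), and then concludes by convexity that the minimizer $\hat{w}$ lies in the $Q$-metric ball of radius equal to the positive root of this quadratic --- whence the factor $2$. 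You instead subtract the two stationarity conditions, pair with $\hat{w}-\hat{w}'$, and discard $\mu\langle s-s',\hat{w}-\hat{w}'\rangle\geqslant 0$ by monotonicity of $\partial\|\cdot\|_1$; this is the monotone-operator version of the same strong-convexity mechanism, it reaches the same two error terms (both proofs invoke Lemma~\ref{lemma:bound} on the $(Q',q')$ problem at the same spot), and it even saves the factor $2$. Your explicit identities $\gamma=-Q^{-1/2}s$ and $Q^{-1/2}\gamma=-Q^{-1}s$ also make the second and third claims transparent, where the paper only says they follow from Appendix~\ref{app:perturbation}. One shared caveat: both your argument and the paper's naturally produce $\|Q^{-1/2}(q-q')\|_2$ in the first bound, not the $\|Q^{-1}(q-q')\|_2$ that appears in the statement (these are not comparable up to a factor $2$ in general); the $Q^{-1/2}$ reading is the one consistent with the Lipschitz constant $3\mu^{-1}\lambda_{\min}(Q)^{-1/2}$ used later in Appendix~\ref{app:proofs-lowdim-missing}, so this appears to be a typo in the statement rather than a gap in your proof.
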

\begin{proof}
  We let denote $J(w) = \frac{1}{2} (w - \w)^\top Q ( w - \w) - q^\top (w - \w) + \mu \| w\|_1$ the Lasso cost function. A short calculation shows that for all $z$ such that $z^\top Q z = 1$, $J(\hat{w}' + \alpha z)- J(\hat{w}') $ is larger than
$$
\frac{\alpha^2 }{2} - \left( \| Q^{-1/2} ( q - q')\|_2   +
\| (Q')^{1/2} ( \hat{w}' - \w) \|_2 \| (Q')^{-1/2}( Q - Q')Q^{-1/2}\|_{ 2 } \right) \alpha.
$$
If the last expression is nonnegative,
since $J$ is convex, the (unique, because $Q$ is invertible) minimum $\hat{w}$ of $J$ must occur within the convex set $\{ w, \| Q^{1/2}(w - w') \|_2 \leqslant \alpha\}$.
The first result follows, using Lemma~\ref{lemma:bound}. Other results are direct consequences of using results from Appendix~\ref{app:perturbation}.
\end{proof}

\section{Proofs for low-dimensional results}
 \label{app:proofs-lowdim}

Note that assumption \hypref{bounded} implies a bound on the largest eigenvalue of the matrix $Q$, i.e., 
$\lmaxQ \leqslant p\| Q \|_\infty \leqslant p M^2$. Moreover, we have
 for all $J \subset \{1,\dots,p\}$, $k \in \{1,\dots,n\}$ and $j \in J^c$:
$$
|x_{kj} - Q_{j,J}Q_{J,J}^{-1} x_{kJ}|
 \leqslant  M + \frac{ M }{\lminQ^{1/2}} \times |J|^{1/2} M \leqslant  \frac{ 2M |J|^{1/2}}{\tl^{1/2}} ,
$$
which leads to 
\BEQ
\label{eq:qJCJ}
\P( \| q_{J^c} - Q_{J^c,J}Q_{J,J}^{-1} q_J \|_\infty \geqslant t M\sigma ) \leqslant 2 p \exp
\left( - \frac{  t^2  \tl}{  8 \ttau^2     } \frac{n}{|J|} \right).
\EEQ

 \subsection{Proof of Proposition~\ref{prop:lowdim-heavy}}
 \label{app:proofs-lowdim-heavy}
 The null vector $\hat{w}=0$ is solution of \eq{lasso-eq}, if and only if
$\| Q \w + q \|_\infty \leqslant \mu$, which is the case, as soon as
$\mu \geqslant M^2 \|\w\|_1 + \| q \|_\infty$ (because
$\| Q \w \|_\infty \leqslant M^2 \| \w\|_1$), and, thus with the additional assumption $\mu \geqslant 2 M^2 \| \w \|_1$, as soon as $ \|q \|_\infty \leqslant \mu / 2$.
We have, by the union bound:
$$
\P( \| q \|_\infty \!
\leqslant \mu/2 )   \!\geqslant\!   1 - \sum_{j=1}^p \P ( |q_j| 
\geqslant   \mu/2 ) 
 \geqslant   1 - 2 p \exp \!\left( \! -   \frac{  n \tmu^2}{8 \ttau^2} \right) \!,
$$
because we have $\E( e^{s x_{ij} \varepsilon_i}) \leqslant e^{s^2 \tsig^2 M^2 / 2}$ for all $s \in \rb$ and $j \in \{1,\dots,p\}$, and by  \eq{concentration}.

 \subsection{Proof of Proposition~\ref{prop:lowdim-fixed}}
 \label{app:proofs-lowdim-fixed}
If $J(w) = \frac{1}{2}(w-\w)^\top Q ( w - \w) - q^\top( w-\w) + \mu \| w\|_1$ is the Lasso cost function, we have, for all $z\in \rb^p$ such that $\| z\|_2 = 1$ and $\alpha>0$,
\BEAS
J(w_0 + \alpha z)
& \!\!\geqslant\!\! & J(w_0 ) + \lminQ \alpha^2 / 2 - q^\top \alpha z + ( \mu\!-\!\mu_0)
( \| w_0\! +\! \alpha z\|_1 \!-\! \| w_0\|_1 ), \\
&\!\! \geqslant \!\! & J(w_0 ) + \lminQ  \alpha^2 / 2 - \alpha ( \|   q\|_2 + | \mu - \mu_0|
 p^{1/2}),
\EEAS
 which implies $\|   \hat{w} - w_0 \|_2
\leqslant 2 \lminQ ^{-1} \|  q \|_2 + 2 |\mu - \mu_0| \lminQ^{-1} p^{1/2}$. The first inequality follows from $
\P( \|   q \|_2 \geqslant t ) \leqslant 2 p
\exp( - t^2 n  / 2 p M^2  \tsig^2 )
$, applied with $t = \lminQ \beta \sigma / 4 M $.

We let denote $s$ and $J$ the sign and support patterns of $v$. We have from optimality conditions of the noiseless problem, $(w_0)_J = \w_J  - \mu_0 Q_{J,J}^{-1} s_J$ and $ \| ( Q( w_0 - \w) )_{J^c} \|_\infty \leqslant 
\mu_0 - \eta M \sigma$. We now need sufficient conditions
for \eq{A00} and \eq{B00} in Lemma~\ref{lemma:opt}. For \eq{B00}, 
we need that
$ \sign  (  (w_0)_J +    Q_{J,J}^{-1} q_J + ( \mu_0 - 
\mu)  Q_{J, J}^{-1}  s_J   ) = s_J$. If $|\mu- \mu_0| \leqslant \frac{\lminQ \minf{w_0}}{  2 p^{1/2}} 
= \frac{ M\sigma \tl \minf{ w_0 M / \sigma }}{ 2p^{1/2}}$, 
then
$$
\| ( \mu_0 - 
\mu)  Q_{J, J}^{-1}  s_J   \|_\infty \leqslant  |\mu- \mu_0| \lminQ^{-1} p^{1/2}
\leqslant \minf{w_0}/2,
$$
and
then \eq{B00} is satisfied as soon as  $(Q_{J,J}^{-1}q_J)_j s_j \geqslant - \minf{w_0}/2$,
 for all $j \in J$, which occurs with probability
greater than
$
 1 - p \exp\left(
\frac{ -   n \minf{w_0 M/\sigma}^2  \tl^2}{ 8 \ttau^2  p }\right)
$.

For \eq{A00}, we assume that $\|  q_{J^c} - Q_{J^c,J}Q_{J,J}^{-1} q_J\|_\infty \leqslant \eta M \sigma / 2$, which occurs with probability obtained  from \eq{qJCJ} (with $t = \eta/2$). Also, $ |\mu - \mu_0|  \leqslant  
\frac{ \eta \lminQ^{1/2} M\sigma }{ 4 p^{1/2} M}
\leqslant 
 \frac{   \eta M\sigma/2}{ 2 p^{1/2}
 \frac{  M^{1/2}}{\lminQ^{1/2}} }
\leqslant  \frac{  M \sigma \eta/2}{ 1 + 
 p^{1/2} \frac{M^{1/2}}{
 \lminQ^{1/2}}}$. This implies 
$$
 \| q_{J^c} - Q_{J^c,J}Q_{J,J}^{-1} q_J\|_\infty \leqslant  \eta M\sigma - |\mu - \mu_0|  ( 1 + 
 p^{1/2} M^{1/2}
 \lminQ^{-1/2} ),
$$
$$ \| q_{J^c} - Q_{J^c,J}Q_{J,J}^{-1} q_J \|_\infty +   
\| Q ( w_0 - \w) \|_\infty + |\mu - \mu_0| p^{1/2} M^{1/2}
 \lminQ^{-1/2} \leqslant \mu,$$
 because $\| Q_{J,J}^{-1} s_J \|_\infty \leqslant M p^{1/2} \lminQ^{-1/2}$; hence the desired result.

 \subsection{Proof of Proposition~\ref{prop:lowdim-high}}
 \label{app:proofs-lowdim-high}
Note that  $\mu \leqslant  \frac{ \minf{\w} \lminQ }{p^{1/2}}$,
and $\| \Delta\|_2 \leqslant \lminQ^{-1} p^{1/2}$ implies that 
$\sign(\w_\J + \mu \Delta_\J) = \sign(\w_\J)$. Thus, if $\| z\|_2 = 1$ and $\alpha>0$, we have:
\BEAS
J( \w + \mu\Delta + \alpha z)
& \geqslant & J(\w + \mu \Delta ) +  \lminQ \alpha^2 / 2 - q^\top \alpha z + 
(  \mu \Delta)^\top Q \alpha z +  \\
& & 
\mu( \| \w + \mu \Delta + \alpha z\|_1 - \| \w + \mu \Delta \|_1 ), \\
& \geqslant & J(\w + \mu \Delta ) +  \lminQ \alpha^2 / 2 - q^\top \alpha z, \EEAS which implies 
$
\| \hat{w} - \w - \mu \Delta \|_2
\leqslant 2 \lminQ ^{-1} \|   q \|_2$,
ans thus the first inequality.

We let denote $s$ the sign pattern of $\Delta$ and $J$ its support.
Since, by assumption $\tmu \leqslant \frac{\minf{\tw} \tl
}{2 p^{1/2}}$, we have $ \mu \| Q_{JJ}^{-1} s_J )_\J \|_\infty \leqslant \minf{\w}/2$, if 
$\| (Q_{J,J}^{-1} q_J)_\J \|_2 \leqslant \frac{1}{2}\minf{\w}$, which occurs with probability greater than
$ 1 - 2 |\J| \exp( -  n   \frac{ \minf{\tw} \tl^2  n }{ 8  \ttau^2 p }) 
$,
then \eq{B} is satisfied.

If  $\| q_{J^c} - Q_{J^c , J} Q_{J,J}^{-1} q_J \|_\infty \leqslant \mu \eta   $, then \eq{A} is satisfied, and this occurs with probability
greater than $ 1- 2 p   \exp
\left( - \frac{\tl n \eta^2 \tmu^2}{8 |J| \ttau^2    } \right)$
(from \eq{qJCJ}). Finally, if
for all $j \in J \backslash \J$, $(Q_{J,J}^{-1} q_J)_j s_j \geqslant - \mu |\Delta_j|$, then \eq{C} follows. 
This occurs with probability greater than
$1 - p \exp( -      \tl^2 \minf{M^2 \Delta}^2 \tmu^2 n /2\ttau^2p)$.
The result follows by the union bound.

 \subsection{Proof of Proposition~\ref{prop:lowdim-medium}}
 \label{app:proofs-lowdim-medium}

The optimality condition in \eq{B} from Lemma~\ref{lemma:opt2} is satisfied as long as
$\tmu \leqslant  \frac{ \minf{\tw} \tl }{2 p^{1/2}}$, and
$\| (Q_{J,J}^{-1} q_J)_\J \|_2 \leqslant \frac{1}{2}\minf{\w}$, which occurs with probability greater than
$ 1 - 2 |\J| \exp( -  n   \minf{\tw} \tl^2  n / 8  \ttau^2 p ) 
$,
while the intersection of events in \eq{A} and \eq{C}, by the Berry-Esseen inequalities, converges to  the probability that
$\P( u \in C )$, where
$u$ is normal with zero mean and covariance matrix
$Q$ and $C$ is the convex set defined as the intersection of
$$
\left\{ \left[  ( Q_{J,J}^{-1} u_J)_{J\backslash \J} 
- \mu n^{1/2} \sigma^{-1} 
 (Q_{J,J}^{-1} s )_{J \backslash \J} \right]\circ s_{J \backslash \J}  \geqslant 0 \right\}
$$
and
$$
\left\{
\| u_{J^c} - Q_{J^c,J} Q_{J,J}^{-1} u_J  - \mu n^{1/2} \sigma^{-1} Q_{J^c,J} Q_{J,J}^{-1} s_J \|_ \infty   \leqslant \mu n^{1/2} \sigma^{-1} \right\}.
$$
The set $C$ and its complement have non-empty interior and since $Q$ is full-rank, the probability is strictly inside the interval $(0,1)$.
Moreover, by \eq{berry},  the error bound is upperbounded by $C^{\rm BE}_1$ times
\BEAS
  \frac{p^{1/2}}{ n^{1/2}}
\left( \frac{1}{n} \sum_{i=1}^n  \E \| (\sigma^2 Q)^{-1/2} \varepsilon_i x_i \|_2^3 \right)
& \! \!\!\!\leqslant \!\!\!  \! & 
  \frac{p^{1/2}}{ n^{1/2}} \frac{ 4  M p^{1/2} \tau^3 }
 { \sigma^3 \lminQ^{1/2}} \! \left( \frac{1}{n} \sum_{i=1}^n  \E \|  Q^{-1/2}  x_i \|_2^2 \!\right) ,\\
& \! \!\!\!\leqslant \! \!\! \! & 
   \frac{p^{1/2}}{ n^{1/2}} \frac{ 4  M p^{1/2} \tau^3 }
 { \sigma^3 \lminQ^{1/2}} p
 = 
  \frac{p^{2}}{ n^{1/2}} \frac{ 4  \ttau^3  }
 { \tl^{1/2}},
 \EEAS
 because $\E |\varepsilon_i|
^3 \leqslant 4 \tau^3$, which leads to the desired result.

 \subsection{Proof of Proposition~\ref{prop:lowdim-medium2}}
 \label{app:proofs-lowdim-medium2}
We simply use Lemma~\ref{lemma:bound}: $\| \hw - \w
\|_2 \leqslant  \frac{p^{1/2}  \mu  + \|  q\|_2}{\lminQ }. $
Thus if $\minf{\w}  >  
 \frac{p^{1/2} \mu + \|   q\|_2}{\lminQ } $, the result follows from concentration inequalities in Appendix~\ref{app:proba}.

 \subsection{Proof of Proposition~\ref{prop:lowdim-low}}
 \label{app:proofs-lowdim-low}
We have, by considering all patterns consistent with the total absence of zeros:
$$ \P( \exists j \in \{1,\dots,p\}, \hat{w}_j = 0 )
\leqslant \sum_{s,  \exists j \in \{1,\dots,p\}, s_j = 0 } 
\P( \sign(\hat{w}) = s).
$$
We now consider such a pattern and its support (strictly included in $\{1,\dots,p\}$). 
From optimality conditions in \eq{A00}, we get that $\sign(\hat{w}) = s$ implies that
$\|q_{j } - Q_{j,J}Q_{J,J}^{-1} q_J - \mu Q_{j,J}Q_{J,J}^{-1}s_J \|_\infty \leqslant \mu$ for some $j \in J^c \neq \varnothing$. 
Note that the covariance matrix of $
q_{j } - Q_{j,J}Q_{J,J}^{-1} q_J $ is equal to $\sigma^2 Q_{j,j|J} /n$ and has a lowest eigenvalue greater than
$\sigma^2 \lminQ/n$.
Thus, by the Berry-Esseen inequality,
$$
\P( \sign(\hat{w}) = s)
\leqslant C^{\rm BE}_1    \frac{ 4  \ttau^3  }
 { \tl^{1/2}} \frac{p^{2}}{ n^{1/2}}   +  \frac{\tmu n^{1/2}}{\tl^{1/2}} ,
$$
which implies the desired result, since there are at most $3^{p}$ allowed patterns.

We can get a better bound (with respect to $p$), but with a weaker dependence in $\mu$, that is, we consider:
$$
\P( \exists j \in \{1,\dots,p\}, \hat{w}_j = 0 )  \leqslant
\P( \exists j \in \J , \hat{w}_j = 0 ) + 
\P( \exists j \in \J^c, \hat{w}_j = 0 ) .
$$
The first term is upper bounded by $ 2 |\J| \exp( -  n   \minf{\tw} \tl^2  n / 8  \ttau^2 p ) $, while the second one is upper-bounded using Proposition~\ref{prop:missingone}. This leads to the global desired upper bound,
which scales better in $p$ but worse in $n$. In particular, it requires that $\mu n$ tends to infinity, i.e., $\mu$ is not too small.

 \subsection{Proof of Proposition~\ref{prop:missingone}}
 \label{app:proofs-lowdim-missing}

 We first start with a simple elementary lemma:
 \begin{lemma}
\label{lemma:special}
If $u \in \rb$ is a standard normal random variable, then
$$  \alpha   \geqslant \P( | u - \beta | \leqslant \alpha )
\geqslant \frac{\alpha}{1+\alpha} e^{ -  \beta^2/2  }.$$
\end{lemma}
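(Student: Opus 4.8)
The plan is to write the probability explicitly as the Gaussian integral
$\P(|u-\beta|\leqslant\alpha)=\frac{1}{\sqrt{2\pi}}\int_{\beta-\alpha}^{\beta+\alpha}e^{-t^2/2}\,dt$
and to bound it above and below by elementary estimates on this one-dimensional integral. Neither direction needs any of the earlier machinery; the only delicate issue is keeping track of the exact multiplicative constants. For the upper bound I would use that the integrand never exceeds the peak value of the density, $\frac{1}{\sqrt{2\pi}}e^{-t^2/2}\leqslant\frac{1}{\sqrt{2\pi}}$, so that integrating over an interval of length $2\alpha$ gives $\P(|u-\beta|\leqslant\alpha)\leqslant\frac{2\alpha}{\sqrt{2\pi}}$; since $\frac{2}{\sqrt{2\pi}}=\sqrt{2/\pi}<1$, this is at most $\alpha$, as required.

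For the lower bound I would first reduce to the case $\beta\geqslant 0$: because $u$ and $-u$ have the same law, the events $|u-\beta|\leqslant\alpha$ and $|u+\beta|\leqslant\alpha$ are equiprobable, so the bound for $\beta$ and $-\beta$ coincide. Substituting $t=\beta+s$ factors out the Gaussian weight at $\beta$, giving $\P(|u-\beta|\leqslant\alpha)=\frac{e^{-\beta^2/2}}{\sqrt{2\pi}}\int_{-\alpha}^{\alpha}e^{-\beta s-s^2/2}\,ds$, and pairing $s$ with $-s$ rewrites the inner integral as $2\int_0^\alpha\cosh(\beta s)\,e^{-s^2/2}\,ds\geqslant 2\int_0^\alpha e^{-s^2/2}\,ds$, since $\cosh\geqslant 1$. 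This already isolates the target factor $e^{-\beta^2/2}$ and reduces the whole claim to the centered estimate $\P(|u|\leqslant\alpha)=\frac{2}{\sqrt{2\pi}}\int_0^\alpha e^{-s^2/2}\,ds\geqslant\frac{\alpha}{1+\alpha}$.

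For this final one-dimensional inequality the natural route is the pointwise comparison $e^{-s^2/2}\geqslant e^{-s}$ on $[0,\alpha]$ (valid once $s\leqslant 2$, i.e.\ for moderate $\alpha$, with an easier separate argument when $\alpha$ is large), which yields $\int_0^\alpha e^{-s^2/2}\,ds\geqslant 1-e^{-\alpha}$, together with the elementary bound $1-e^{-\alpha}\geqslant\frac{\alpha}{1+\alpha}$, itself just a rearrangement of $e^{\alpha}\geqslant 1+\alpha$. The main obstacle is precisely the normalizing constant: this chain produces a prefactor $\frac{2}{\sqrt{2\pi}}\approx 0.80$ in front of $\frac{\alpha}{1+\alpha}$, and recovering the exact constant $1$ is the hard point, since the gap is essentially tight at $\beta=0$ and $\alpha$ small. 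I therefore expect the crux to be a genuinely sharper lower bound on $\int_0^\alpha e^{-s^2/2}\,ds$ than the crude exponential comparison allows, one that absorbs the $\frac{2}{\sqrt{2\pi}}$ factor rather than discarding it.
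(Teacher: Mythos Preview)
Your upper bound is clean and correct. Your reduction for the lower bound is also correct and is the natural one: symmetrize in $\beta$, substitute $t=\beta+s$, use $\cosh(\beta s)\geqslant 1$, and arrive at the claim
\[
\P(|u|\leqslant\alpha)\;\geqslant\;\frac{\alpha}{1+\alpha}.
\]
You then correctly flag the normalizing constant as the sticking point and speculate that a sharper integral estimate will close the $\sqrt{2/\pi}$ gap. It will not: the displayed inequality, and hence the lemma itself, is \emph{false} as stated. Take $\beta=0$ and $\alpha=0.1$; then $\P(|u|\leqslant 0.1)\approx 0.0797$ while $\frac{0.1}{1.1}\approx 0.0909$. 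More generally, both sides vanish at $\alpha=0$, but the derivative at $0$ of the left side is $\sqrt{2/\pi}\approx 0.798$ while that of the right side is $1$, so the lower bound fails for all sufficiently small $\alpha$ (numerically, for $\alpha\lesssim 0.27$). The paper records the lemma without proof (it calls it a ``simple elementary lemma''), so there is no argument in the paper to compare yours against; the statement simply needs a constant in front.

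Your method in fact proves the correct version: your chain yields
\[
\P(|u-\beta|\leqslant\alpha)\;\geqslant\;\sqrt{\tfrac{2}{\pi}}\,\frac{\alpha}{1+\alpha}\,e^{-\beta^2/2},
\]
and this is sharp up to the behaviour at $\beta=0$, $\alpha\to 0$. For the applications downstream in the paper (Proposition~\ref{prop:missingone} and the Bolasso lemmas), only \emph{some} strictly positive universal constant is needed in place of the implicit $1$, and all subsequent bounds are stated with unspecified constants $A_i$, $f(c)$, etc. So your weakened lemma is entirely sufficient; just do not expend effort trying to recover the constant $1$, because it is not there to be recovered.
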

 When minimizing \eq{lasso-eq}, with the constraint that
 $w_{{j}}=0$, we get the solution (with the notation ${j}^c = \{ {j} \}^c$):
 $$
 w_{{j}^c} =\w_{ {j}^c } +  Q_{{j}^c , {j}^c}^{-1} q_{{j}^c} + \mu 
 Q_{{j}^c , {j}^c}^{-1/2} \gamma_{\mu,Q}^{{j}}(q_{{j}^c }),
 $$
 for a certain $\gamma_{\mu,Q}^{{j}}(q_{{j}^c })$ such that
    $\|  Q_{{j}^c , {j}^c}^{1/2}  \gamma_{\mu,Q}^{{j}}(q_{{j}^c }) \|_\infty \leqslant 1$. It is optimal for the full problem if and only if (because $\w_{{j}} = 0$),
 $
| Q_{ {j} , {j}^c} ( w_{{j}^c} - \w_{ {j}^c } ) - q_{{j}} | \leqslant \mu$,
 i.e.,
 $$|
- q_{{j}} + Q_{{j}, {j}^c} Q_{{j}^c , {j}^c}^{-1}q_{{j}^c }   + \mu  Q_{ {j} , {j}^c} 
   Q_{{j}^c , {j}^c}^{-1/2}  \gamma_{\mu,Q}^{{j}}(q_{{j}^c })
| \leqslant \mu.
 $$
 We consider the ``soft indicator'' function (triangle-shaped) $f_{\mu,Q}^{{j}}(q)$ of $q$ defined as
 $$
 f_{\mu,Q}^{{j}}(q) =   \left( 1 -  \mu^{-1} \left|  q_{{j}} - Q_{{j}, {j}^c} Q_{{j}^c , {j}^c}^{-1}q_{{j}^c }   -  Q_{ {j} , {j}^c} 
 \mu  Q_{{j}^c , {j}^c}^{-1/2} \gamma_{\mu,Q}^{{j}}(q_{{j}^c })
\right| \right)_+ .
 $$
 The function $f_{\mu,Q}^{{j}}$ is upper bounded by $1$, moreover, from Lemma~\ref{lemma:lipschitz},  $\gamma_{\mu,Q}^{{j}}$ is Lipschitz with constant $L = \mu^{-1} \frac{3}{ \lambda_{\min}(Q)^{1/2}}$ . Thus, $f_{\mu,Q}^{{j}}$ is Lipshitz with constant
$ 2 \mu^{-1} \frac{M^{1/2}}{ \lambda_{\min}(Q)^{1/2}} + L M
\leqslant  5 \mu^{-1} \frac{M }{ \lambda_{\min}(Q)^{1/2}}
$.

 Moreover (by design) we have
 $$
 f_{\mu,Q}^{{j}}(q) \leqslant 1_{ |
- q_{{j}} + Q_{{j} , {j}^c}^{-1/2}  Q_{{j}^c , {j}^c}^{-1/2} q_{  {j}^c }   + Q_{ {j} , {j}^c} 
 \mu Q_{{j}^c , {j}^c}^{-1/2}  \gamma_{\mu,Q}^{{j}}(q_{{j}^c })
| \leqslant \mu},
$$
 thus 
 $\E  f_{\mu,Q}^{{j}}(q) \leqslant \P( {j} \notin \hat{J})$. This implies by the Berry-Esseen bound~(see Appendix~\ref{app:berry}), that, if $q_{\rm G}$ denotes the Gaussian approximation:
 $$
 \P( {j} \notin \hat{J})
 \geqslant \E f_{\mu,Q}^{{j}}(q_{\rm G} ) -  C^{\rm BE}_2 \frac{p^{1/2}}{ 
 n^{ 1/2}}
\left( \frac{5 p^{1/2}   n^{-1/2}}{\tmu \tl^{1/2}} \! +\! 1\! \right) 
\frac{     4 \ttau^3 p^{3/2}}{  \tl^{1/2}
 }  ,
 $$
 because  the average third order moment of the normalized variable is equal to $\frac{     4 \ttau^3 p^{3/2}}{  \tl^{1/2}
 } $ and the Lipshitz constant of the function of the normalized variable is equal to 
$ \frac{4 \mu^{-1} M }{ \lambda_{\min}(Q)^{1/2}}  \times
n^{-1/2} \sigma p^{1/2} M =  \frac{5 p^{1/2}   n^{-1/2}}{\tmu \tl^{1/2}}  $. 

  Moreover, we can lower bound, for any $q$,
$$
\E f_{\mu,Q}^{{j}}(q )  \geqslant \frac{1}{2} \P( |
- q_{{j}} + Q_{{j}, {j}^c} Q_{{j}^c , {j}^c}^{-1}q_{{j}^c }   + \mu Q_{ {j} , {j}^c} 
 Q_{{j}^c , {j}^c}^{-1/2} \gamma_{\mu,Q}^{{j}}(q_{{j}^c })
| \leqslant \mu / 2 ).
$$
When applied to the Gaussian limiting distribution $q_{\rm G} $, we know that the random variable
$n^{1/2} \sigma^{-1} ( - q_{{j}} + Q_{{j}, {j}^c} Q_{{j}^c , {j}^c}^{-1}q_{{j}^c } ) $ is asymptotically normal with mean zero and covariance
 $\kappa^2 = Q_{{j} ,{j} | {j}^c}$. We get by applying  Lemma~\ref{lemma:special} with
$\beta = \frac{\mu n^{1/2} \sigma^{-1}}{
  \kappa  } Q_{{j}, {j}^c} Q_{{j}^c , {j}^c}^{-1/2}  \gamma_{\mu,Q}^{{j}}(q_{{j}^c } )$ and
  $\alpha = 
   \frac{\mu n^{1/2} \sigma^{-1}}{
  \kappa}
  $, which are such that
  $|\beta| \leqslant 
\frac{\mu n^{1/2} \sigma^{-1} M p^{1/2}}{
  \kappa \lambda_{\min}(Q)^{1/2}} $:
$$
\E  \left[ f_{\mu,Q}^{{j}}( q_{\rm G} ) | ( q_{\rm G} )_{{j}^c} \right] \geqslant 
\frac{\frac{\mu n^{1/2} \sigma^{-1}}{
2 \kappa }}{
1+ \frac{\mu n^{1/2} \sigma^{-1}}{
2 \kappa }
}
\frac{1}{2}  \exp \left[ - \frac{\mu^2 n}{ 2\sigma^2 \kappa^2}
 M  \lminQ^{-1 } p
\right],
$$
which leads to, with $\kappa \leqslant  M$ and $\kappa \geqslant \lminQ^{1/2}$,
$$
\E  f_{\mu,Q}^{{j}}( q_{\rm G} ) \geqslant
\frac{\frac{\mu n^{1/2} \sigma^{-1}}{
2 M }}{
1+ \frac{\mu n^{1/2} \sigma^{-1}}{
2 \lminQ^{1/2}}
} \frac{1}{2} \exp  \left[- \frac{ \mu^2 np }{2
\sigma^2  }
 \frac{M^2}{\lminQ^2}
\right]. $$

Similarly, we can get an upper bound on the probability of not selecting  the variable ${j}$. We consider the same technique, but we now need to upperbound a probability of the type
$
\E  f_{\mu,Q}^{{j}}( q_{\rm G} ) $, which leads to the desired result.

 \subsection{Proof of Proposition~\ref{prop:lowdim-copies}}
 \label{app:proofs-lowdim-copies}
Following the analysis in \mysec{support}, we need to upper bound
$\P(  j \in  \hat{J}^\ast )^m $ and $\P(      (\hat{J}^\ast)^c \cup \J   \neq \varnothing )$.
We obtain $\P(      (\hat{J}^\ast)^c \cup \J   \neq \varnothing )  \leqslant 2p
 \exp\left( - \frac{\minf{\tw} \tl^2 }{8\ttau^2 p} n\right)$ from 
Proposition~\ref{prop:lowdim-medium2}. From
Proposition~\ref{prop:missingone}, we get
$$ \P(  j \in  \hat{J}^\ast )
\geqslant  \frac{   \tmu n^{1/2} / 4  }{
1+  \tmu n^{1/2} / 2 \tl^{1/2}} 
 \exp \left( -  \frac{   \tmu^2 }{2\tl^2} n p    \right) -   \frac{10 C^{\rm BE}_2 }{\ttau^3 \tl^{1}} \frac{p^{3}}{\tmu n p^{1/2}}     -    \frac{     4 C^{\rm BE}_2 \ttau^3 p^{5/2}}{  \tl^{1/2}
 }
  .$$
  We let $h(c) = \frac{1}{2} \frac{   c / 4  }{
1+  c / 2 \tl^{1/2}} 
 \exp \left( -  \frac{ 2 c^2 }{\tl^2}      \right)$, and
 $g(c) = \left( \frac{8 C^{\rm BE}_2 }{\ttau^3 \tl^{1}} \frac{1}{c}  +     \frac{4C^{\rm BE}_2 }{\ttau^3 \tl^{1/2}}  \right)^{2} h(c)^{-2}$, and $f(c) = - \log( 1 - h(c) )$, to get the desired result.

 \subsection{Proof of Proposition~\ref{prop:lowdim-cutting}}
 \label{app:proofs-lowdim-cutting}
Using the same reasoning as in Appendix~\ref{app:proofs-lowdim-copies}, we get the same $f(c) $
and $a(c) = \frac{10 C^{\rm BE}_2 }{\ttau^3 \tl^{1}} \frac{1}{c}  +     \frac{4C^{\rm BE}_2 }{\ttau^3 \tl^{1/2}}$.

\section{Proofs for boostrapping pairs}
\label{app:proofs-bolasso}

\subsection{Concentration inequalities}
 \label{app:concentration-bolasso}

 We now assume that we have a bootstrap sample $X^\ast$ and $y^\ast$, which leads to $Q^\ast$ and $q^\ast$. We now derive concentration inequalities for $q^\ast$ and $Q^\ast$, that we use in 
 Appendix~\ref{app:proofs-bolasso-2}.

 For all $a,b\in \{1,\dots,p\}$, $ Q^\ast_{ab} $ is an average of variables bounded by $M^2 $. Thus, by Hoeffding's inequality~\cite{concentration} and the union bound:
\BEQ
\label{eq:QQast}
\P(  \| Q^\ast - Q\|_\infty \! \geqslant  \! t M^2 ) 
\leqslant 2p^2 \exp \left(- 2 n t^2  \right).
\EEQ
 Similarly, we bound the deviation between $q$ and $q^\ast$:
\BEQ
\label{eq:qqast}
\P(   \|   q-q^\ast \|_\infty  \geqslant t M\sigma | \varepsilon ) 
\leqslant 2 p \exp \left(- 2 n t^2  \frac{\sigma^2}{\| \varepsilon \|_\infty^2} \right).
\EEQ
Also, by the central limit theorem, given $\varepsilon$, $n^{1/2}( q^\ast - q)$ converges in distribution to a normal variable with mean zero and covariance matrix
$$
\sigma^2 \widetilde{Q} = 
\E  \left[ (\varepsilon_1^\ast)^2
 x_1^\ast   (x_1^\ast)^\top | \varepsilon \right] - \E [ \varepsilon_1^\ast x_1^\ast| \varepsilon ]
 \E [ \varepsilon_1^\ast x_1^\ast| \varepsilon ]^\top
 = \frac{1}{n} \sum_{i=1}^n \varepsilon_i^2 x_i x_i^\top - qq^\top.
$$
We can derive concentration inequalities of   $\widetilde{Q}$ around $Q$, by using Appendix~\ref{app:quadratic} and \eq{quadratic}:
\begin{lemma}
 \label{lemma:concentrationbolasso}
Assume \hypreff{model}{inv}. We have:
 $$
\P( \| \widetilde{Q} - Q\|_\infty\! \geqslant \! t M^2 ) \! \leqslant \!
 2 p^2 C_1^{\rm q} \exp \!\left(\!\! -\! \min \! \left\{ \!\frac{C_2^{\rm q} t n }{\ttau^{2}},
 \frac{C_3^{\rm q} n t^2  }{ \ttau^{4}} \right\} \! \right) 
 \!+ \!
 2 p \exp\left( - \frac{ nt  }{2 \ttau^2} \right).
 $$
\end{lemma}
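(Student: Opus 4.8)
The plan is to prove the bound coordinate by coordinate and then to glue the pieces together with union bounds. Write $R = \frac1n\sum_{i=1}^n \varepsilon_i^2 x_i x_i^\top$, so that by the definition of $\widetilde{Q}$,
$$
\widetilde{Q} - Q = \sigma^{-2}\big( R - \sigma^2 Q\big) - \sigma^{-2} q q^\top .
$$
Under the homoscedastic noise model (the role played here by \hypref{var}) one has $\E R = \frac1n\sum_i \E[\varepsilon_i^2]\, x_i x_i^\top = \sigma^2 Q$, so the first term is a centered random matrix, whereas $\sigma^{-2}q q^\top$ is already of order $M^2\ttau^2/n$ and will be bounded crudely; the target event $\|\widetilde{Q}-Q\|_\infty \ge tM^2$ is split between the two terms, the resulting numerical factors being absorbed into the universal constants of \eq{quadratic}.

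For the centered term, fix a pair $(a,b)\in\{1,\dots,p\}^2$ and note that $R_{ab} = \varepsilon^\top D_{ab}\varepsilon$, where $D_{ab} = \frac1n\diag\big(x_{1a}x_{1b},\dots,x_{na}x_{nb}\big)$ is \emph{diagonal}, hence symmetric. By \hypref{bounded} each of its entries has magnitude at most $M^2/n$, so $\| |D_{ab}| \|_2 \le M^2/n$ and $\|D_{ab}\|_F^2 \le n\,(M^2/n)^2 = M^4/n$. Since $\varepsilon_1,\dots,\varepsilon_n$ are independent, centered and subgaussian with constant $\tau$ (\hypref{model}, \hypref{var}), apply the quadratic-form inequality \eq{quadratic} to $D_{ab}$ and to $-D_{ab}$ at deviation level $tM^2\sigma^2$; inserting the two norm bounds above together with $\ttau=\tau/\sigma$ turns the two branches of the minimum into $C_2^{\rm q}\,t\,n\,\ttau^{-2}$ and $C_3^{\rm q}\,t^2\,n\,\ttau^{-4}$, giving $\P(|R_{ab}-\sigma^2 Q_{ab}| \ge tM^2\sigma^2) \le 2C_1^{\rm q}\exp\big(-\min\{C_2^{\rm q}t n\ttau^{-2},\,C_3^{\rm q}t^2 n\ttau^{-4}\}\big)$. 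A union bound over the $p^2$ pairs yields the first term of the claimed bound.

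For the rank-one term, $\|qq^\top\|_\infty = \|q\|_\infty^2$, and each $q_a = \frac1n\sum_i x_{ia}\varepsilon_i$ is, by independence together with \hypref{bounded} and \hypref{var}, subgaussian with parameter $M^2\tau^2/n$; hence \eq{concentration} gives $\P(|q_a|\ge s)\le 2\exp\big(-ns^2/(2M^2\tau^2)\big)$, and taking $s=(tM^2\sigma^2)^{1/2}$ followed by a union bound over $a\in\{1,\dots,p\}$ gives $\P(\|qq^\top\|_\infty \ge tM^2\sigma^2) \le 2p\exp\big(-tn/(2\ttau^2)\big)$, the second term. Combining through $\|\widetilde Q-Q\|_\infty \le \sigma^{-2}\|R-\sigma^2 Q\|_\infty + \sigma^{-2}\|qq^\top\|_\infty$ and one last union bound finishes the proof.

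The only genuinely delicate point is making the bounds on $\| |D_{ab}| \|_2$ and $\|D_{ab}\|_F$ tight enough that the exponents come out \emph{linear} in $n$ with the correct dependence on $\ttau$; everything else is routine union bounding layered on top of the already-stated inequalities \eq{quadratic} and \eq{concentration}. The one conceptual caveat---and, as the surrounding text points out, the reason residual bootstrap is more demanding than pair bootstrap---is that the centering $\E R = \sigma^2 Q$ uses homoscedasticity of the noise; under genuine heteroscedasticity $R$ would be centered at $\frac1n\sum_i \E[\varepsilon_i^2]\,x_i x_i^\top$, which need not be close to $\sigma^2 Q$.
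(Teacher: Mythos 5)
Your proof is correct and follows essentially the same route as the paper: the quadratic-form inequality \eq{quadratic} applied to the diagonal matrices $\frac{1}{n}\diag(x_{1a}x_{1b},\dots,x_{na}x_{nb})$ with a union bound over the $p^2$ coordinate pairs, plus subgaussian concentration of $\|q\|_\infty$ for the rank-one correction $qq^\top$. The only (shared) looseness is the bookkeeping when splitting the deviation level $tM^2$ between the two terms, which affects constants in the exponents exactly as it does in the paper's own two-line argument.
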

\begin{proof}
From \eq{quadratic} applied with a diagonal matrix for each pair of coordinates $a,b$ (and using the union bound):
$$
\P \left( \left\|\frac{1}{n} \sum_{i=1}^n  \sigma^{-2} \varepsilon_i^2 x_{i} x_{i}^\top
- Q \right\|_\infty\!\!\!\!\! \geqslant t M^2 \right)
\leqslant 2 p^2 C_1^{\rm q} \exp \!\left(\! -\! \min \! \left\{ \!\frac{C_2^{\rm q} t n }{\ttau^{2}},
 \frac{C_3^{\rm q} n t^2  }{ \ttau^{4}} \right\} \! \right) \! .
$$

If we use the inequality $
\P(  \| q\|_\infty \geqslant z ) \leqslant 2 p \exp( - nz^2/2M^2 \tau^2)
$, with $z = (t/2)^{1/2} \sigma M $, we get the desired result.
 \end{proof}
 
\subsection{Proof of Proposition~\ref{prop:lowdim-bolasso-pairs}}
\label{app:proofs-bolasso-2}
Following the analysis from \mysec{support}, we need to upper bound
$\P(  j \in  \hat{J}^\ast | \varepsilon)$ (probability of including a certain irrelevant variable into one of the replicated active sets),
and $
\P(      (\hat{J}^\ast)^c \cup \J   \neq \varnothing )$ (probability of missing none of the relevant variables). We first prove two lemmas about each of them. 

\begin{lemma}
\label{lemma:bolasso1}
Assume \hypreff{model}{inv}, $\tmu \leqslant \frac{\minf{\tw} \tl}{2 p^{1/2}}$
and $ \frac{n}{p} \geqslant \frac{ 256 \ttau^2}{\minf{\w}^2 \tl^2}$. We have:
$$\P(      (\hat{J}^\ast)^c \cup \J   \neq \varnothing )  \!
\leqslant 2p^2 \exp \left(- \frac{\tl^2}{2} \frac{ n}{p^2}  \right)
 +8 p n^{1/2}
 \exp\left(-
 \frac{ \minf{\tw} \tl} {8\ttau } \frac{ n^{1/2}}{p^{1/2}}
 \right).
$$
\end{lemma}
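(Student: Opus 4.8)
The plan is to show that, on an event of the stated probability, every relevant variable survives in $\hat J^\ast$, by controlling $\|\hat w^\ast-\w\|_2$ and comparing it with $\minf{\w}$. When bootstrapping pairs, the $i$-th ghost observation $(x_i^\ast,y_i^\ast)=(x_{I_i},y_{I_i})$ satisfies $y_i^\ast=(x_i^\ast)^\top\w+\varepsilon_{I_i}$, the indices $I_1,\dots,I_n$ being i.i.d.\ uniform and independent of $\varepsilon$; hence the bootstrap Lasso is equivalent to minimizing $\frac12(w-\w)^\top Q^\ast(w-\w)-(q^\ast)^\top(w-\w)+\mu\|w\|_1$ with $Q^\ast=\frac1n(X^\ast)^\top X^\ast$ and $q^\ast=\frac1n(X^\ast)^\top\varepsilon^\ast=\frac1n\sum_{i=1}^n x_{I_i}\varepsilon_{I_i}$. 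Applying Lemma~\ref{lemma:bound} to the pair $(Q^\ast,q^\ast)$, whenever $Q^\ast$ is invertible we have $\|\hat w^\ast-\w\|_2\le(p^{1/2}\mu+\|q^\ast\|_2)/\lambda_{\min}(Q^\ast)$, and this is strictly below $\minf{\w}$ --- so that $\J\subseteq\hat J^\ast$ --- provided three events hold: (i) $\lambda_{\min}(Q^\ast)$ stays above a fixed fraction of $\lambda_{\min}(Q)$; (ii) $\|q\|_\infty$ is below a threshold of order $\minf{\w}\lambda_{\min}(Q)p^{-1/2}$; (iii) $\|q^\ast-q\|_\infty$ is below a similar threshold. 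Here one uses the hypothesis $\tmu\le\minf{\tw}\tl/(2p^{1/2})$, i.e.\ $p^{1/2}\mu\le\tfrac12\minf{\w}\lambda_{\min}(Q)$, together with $\|q^\ast\|_2\le p^{1/2}(\|q\|_\infty+\|q^\ast-q\|_\infty)$, the constants in (i)--(iii) being arranged so the deterministic $p^{1/2}\mu$ term leaves enough slack. It then remains to bound the complements of (i)--(iii) and to sum.

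Event (i) involves only the resampling, not $\varepsilon$: from $\|Q^\ast-Q\|_2\le p\|Q^\ast-Q\|_\infty$ and \eq{QQast} with $t$ a suitable multiple of $\tl/p$ one gets $2p^2\exp(-\tfrac{\tl^2}{2}\tfrac n{p^2})$, the first term of the bound. For event (ii), each $q_j=\frac1n\sum_i x_{ij}\varepsilon_i$ is subgaussian with parameter at most $M\tau n^{-1/2}$, so \eq{concentration} and a union bound give a probability of the form $2p\exp(-c\,\minf{\tw}^2\tl^2\ttau^{-2}\,n/p)$; the hypothesis $n/p\ge 256\ttau^2/(\minf{\tw}^2\tl^2)$ makes this exponent bounded below by an absolute constant, so this term is dominated by --- and can be folded into --- the others.

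The crux is event (iii), and it is the source of the $n^{1/2}$-rate. Conditionally on $\varepsilon$, the vector $q^\ast-q$ is an average of $n$ i.i.d.\ centered vectors whose coordinates are bounded by $M\|\varepsilon\|_\infty$, so \eq{qqast} gives $\P(\|q^\ast-q\|_\infty\ge tM\sigma\mid\varepsilon)\le 2p\exp(-2nt^2\sigma^2\|\varepsilon\|_\infty^{-2})$, with $t$ of order $\minf{\tw}\tl p^{-1/2}$ (the level needed in (iii)). Since $\|\varepsilon\|_\infty$ sits in the denominator, I would condition further on $\{\|\varepsilon\|_\infty\le z_0\}$ --- whose complement has probability at most $2n\exp(-z_0^2/2\tau^2)$ by subgaussianity --- and choose $z_0$ to balance the two exponents, namely $z_0^2\asymp n^{1/2}t\sigma\tau$. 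This produces a common exponent of order $\minf{\tw}\tl\,\ttau^{-1}(n/p)^{1/2}$ and a polynomial-in-$(p,n)$ prefactor, i.e.\ a term of the shape $8pn^{1/2}\exp(-\tfrac{\minf{\tw}\tl}{8\ttau}(n/p)^{1/2})$. A union bound over (i)--(iii) and an expectation over $\varepsilon$ conclude the proof.

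The main obstacle is exactly step (iii). In the ``multiple independent copies'' situation of Proposition~\ref{prop:lowdim-copies}, $\|q\|_\infty$ alone controls everything and the convergence is exponential in $n/p$; the bootstrap introduces the extra fluctuation $q^\ast-q$, whose conditional subgaussian scale $M\|\varepsilon\|_\infty n^{-1/2}$ is itself only controlled after truncating $\|\varepsilon\|_\infty$ at a level that grows with $n$, and optimizing that truncation is precisely what degrades the rate to exponential in $(n/p)^{1/2}$.
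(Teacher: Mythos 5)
Your proposal is correct and follows essentially the same route as the paper: the same decomposition into the three events on $\lambda_{\min}(Q^\ast)$, $q$ and $q^\ast-q$ via Lemma~\ref{lemma:bound}, the same concentration inequalities \eq{QQast}, \eq{concentration} and \eq{qqast}, and, for the crux step (iii), the same truncation of $\|\varepsilon\|_\infty$ at a level chosen to balance the two exponents --- which is exactly how the paper evaluates $\E\exp(-A/\|\varepsilon\|_\infty^2)$ and obtains the $(n/p)^{1/2}$ rate. The only cosmetic difference is your use of $\ell^\infty$-norms on $q$ and $q^\ast-q$ where the paper works with $\ell^2$-norms (both are controlled coordinatewise by union bounds, so this changes nothing).
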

\begin{proof}
This lemma shows that all relevant variables will be selected with overwhelming probability.
From Lemma~\ref{lemma:bound}, we have that $\J \subset \hat{J}^\ast$ as soon as $\| \hw - \w
\|_2 \leqslant  \frac{p^{1/2}  \mu  + \|  q^\ast \|_2}{\lambda_{\min}(Q^\ast) }$.
Thus, if $\minf{\w}  >    
 \frac{2\mu p^{1/2}}{\lminQ } $, $
 \lambda_{\min}(Q^\ast) \geqslant \lminQ/2$, 
 $\|q - q^\ast\|_2 \leqslant \minf{\w} \lminQ / 8 $,
 and  $\|q\|_2 \leqslant \minf{\w} \lminQ / 8 $, then $\J \subset \hat{J}^\ast$.
 Thus,  we have:
 \BEAS
\P(      (\hat{J}^\ast)^c \cup \J   \neq \varnothing ) \!\!\!  
 & \leqslant &  \!\!\!
 \P\left( \lambda_{\min}(Q^\ast) \leqslant \frac{\lminQ}{2}\right) \!+ \!
 \P\left( \|q\|_2 \geqslant \frac{\minf{\w} \lminQ }{8}\right) 
 \\
 & & \hspace*{3cm} + \P\left(\|q - q^\ast\|_2 \geqslant  \frac{\minf{\w} \lminQ }{8}\right) ,
 \\
 & \leqslant & \!\!\!  2p^2 \exp \left(- \frac{ n \tl^2}{ 2 p^2 }\right)
 + 2 p \exp \left( - \frac{ n \minf{\tw}^2 \tl^2}{ 128 p \ttau^2 } \right) \\
 & & 
 \hspace*{3cm} +
  2 p \E \exp \left(- \frac{ n \minf{\tw}^2 \tl^2 \sigma^2 }{  32 p  \| \varepsilon \|_\infty^2 } \right).
 \EEAS
We thus need to bound, for some $A>0$,
\BEAS
\textstyle
\E \exp \left(- \frac{ A }{  \| \varepsilon \|_\infty^2 }\right)\!\!\!
&= \!\!\!& \textstyle
\E \exp \left(- \frac{A}{  \| \varepsilon \|_\infty^2} \right) 1_{\| \varepsilon \|_\infty \leqslant M}
+
\E \exp \left(- \frac{A}{    \| \varepsilon \|_\infty^2} \right)
1_{\| \varepsilon \|_\infty > M} ,\\
& \leqslant \!\!\!& \textstyle \exp \left(- \frac{A}{ M^2 }\right)  + \P( \| \varepsilon \|_\infty > M ) , \\
& \leqslant \!\!\!& \textstyle \exp \left(- \frac{A}{ M^2 }\right)  + 2 n \exp( - \frac{ M^2 }{ 2 \tau^2} ) 
\leqslant 3n^{1/2} \exp( - \frac{A^{1/2}}{ \tau \sqrt{2}}),
\EEAS
for $A/M^2 = A^{1/2} / \tau 2^{1/2} + \log(n^{1/2})$, leading to
$$
 2 p \E \exp\left(- \frac{ n \minf{\tw}^2 \tl^2 \sigma^2 }{  32 p  \| \varepsilon \|_\infty^2 } \right)
 \leqslant 6 p n^{1/2}
 \exp\left(-
 \frac{n^{1/2} \minf{\tw} \tl} {8\ttau p^{1/2}} 
 \right).
$$
The condition $  n \geqslant \frac{ 256 \ttau^2 p}{\minf{\w}^2 \tl^2}$ allows to combine two terms into one, leading to the desired result.
\end{proof}

\begin{lemma}
\label{lemma:toto}
\label{lemma:bolasso2}
Assume \hypreff{model}{inv} and ${j} \in \J^c$. Moreover, assume
that $\| q\|_\infty \leqslant \beta_1 M \sigma /2$ and $ \| Q - \widetilde{Q} \|_\infty \leqslant 
\frac{\beta_2 M^2}{2}$,
with  $\beta_1 \geqslant \tmu$,    $ \beta_1 \beta_2 \leqslant \frac{\tmu \tl^2 }{40 p^2 }$, 
and $\beta_2 \leqslant \tl$.
 We have:
\BEAS
\! \P(  j \notin  \hat{J}^\ast | \varepsilon) \!\!& \!\! \geqslant \!\!\! & \frac{\frac{ \tmu n^{1/2}}{32}}{
1 + \frac{ \tmu n^{1/2}}{4 \tl^{1/2}}} \exp\left[ - \frac{1}{2}
\left( \frac{8 \tmu n^{1/2} p^{1/2}   }{ \tl} 
+ \frac{ | q_{{j}} - Q_{{j},{j}^c}Q_{{j}^c,{j}^c}^{-1}q_{{j}^c }|}{\sigma n^{-1/2} Q_{{j}{j}|{j}^c}^{1/2}}
\right)^2 \right] \\
& &\!\!\!\!\!\!\! \!\!\!\!\!\!   -  \frac{16 C^{\rm BE}_2 }{\ttau^3 \tl^{1}} \frac{p^{5/2}}{\tmu n}   -   \frac{10C^{\rm BE}_2 }{\ttau^3 \tl^{1/2}} \frac{p^2}{n^{1/2}}  \! -  \!
 2 p  \exp\! \left( -  \frac{n \beta_1^2 \sigma^2}{4 \| \varepsilon \|_\infty^2}
\right)
 \!-\!   2p^2 \exp\! \left( \! -  \frac{n \beta_2^2 }{2}\right).
 \EEAS
 \end{lemma}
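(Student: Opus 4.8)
The plan is to mirror the proof of Proposition~\ref{prop:missingone} in Appendix~\ref{app:proofs-lowdim-missing}, transposed to the bootstrap setting and carried out conditionally on $\varepsilon$. Working with the bootstrap problem \eq{lasso-eq} for $Q^\ast,q^\ast$, I would minimize under the extra constraint $w_{{j}}=0$; since $\w_{{j}}=0$ (as ${j}\in\J^c$), the closed-form computation of Appendix~\ref{app:proofs-lowdim-missing} yields a reduced solution $w^\ast_{{j}^c}=\w_{{j}^c}+(Q^\ast_{{j}^c,{j}^c})^{-1}q^\ast_{{j}^c}+\mu(Q^\ast_{{j}^c,{j}^c})^{-1/2}\gamma_{\mu,Q^\ast}^{{j}}(q^\ast_{{j}^c})$ with $\|(Q^\ast_{{j}^c,{j}^c})^{1/2}\gamma_{\mu,Q^\ast}^{{j}}\|_\infty\leqslant1$, and ${j}$ is left out of $\hat{J}^\ast$ exactly when $|-q^\ast_{{j}}+Q^\ast_{{j},{j}^c}(Q^\ast_{{j}^c,{j}^c})^{-1}q^\ast_{{j}^c}+\mu Q^\ast_{{j},{j}^c}(Q^\ast_{{j}^c,{j}^c})^{-1/2}\gamma_{\mu,Q^\ast}^{{j}}(q^\ast_{{j}^c})|\leqslant\mu$. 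I would then introduce, as in Appendix~\ref{app:proofs-lowdim-missing}, the triangle-shaped soft indicator $f_{\mu,Q^\ast}^{{j}}$ of this event: it is bounded by $1$, lower bounds $1_{{j}\notin\hat{J}^\ast}$, and by Lemma~\ref{lemma:lipschitz} is Lipschitz in $q^\ast$ with constant of order $\mu^{-1}M\lmin(Q^\ast)^{-1/2}$, so that $\E[f_{\mu,Q^\ast}^{{j}}(q^\ast)\mid\varepsilon]\leqslant\P({j}\notin\hat{J}^\ast\mid\varepsilon)$.

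Next I would handle the bootstrap randomness. Given $\varepsilon$, $Q^\ast$ and $q^\ast$ are averages of $n$ i.i.d.\ draws from the empirical distributions of the $x_ix_i^\top$ and of the $x_i\varepsilon_i$, with means $Q$, $q$ and, for $q^\ast$, covariance $\sigma^2\widetilde{Q}/n$. Intersecting with the ``good'' events $\{\|q^\ast-q\|_\infty\leqslant\beta_1M\sigma/2\}$ and $\{\|Q^\ast-Q\|_\infty\leqslant\beta_2M^2/2\}$ costs the last two error terms, by \eq{qqast} with $t=\beta_1/2$ and \eq{QQast} with $t=\beta_2/2$; on the latter event the hypothesis $\beta_2\leqslant\tl$ gives $\lmin(Q^\ast)\geqslant\lmin(Q)/2$, which controls the Lipschitz constant above and, through Appendix~\ref{app:perturbation} and the $q=q'$ clause of Lemma~\ref{lemma:lipschitz}, the error of freezing $Q^\ast$ to $Q$ inside $f^{{j}}$; the hypothesis $\beta_1\beta_2\leqslant\tmu\tl^2/(40p^2)$ is calibrated so that this freezing error is negligible against the main term. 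I would then apply the bounded-Lipschitz Berry--Esseen inequality \eq{berryL} to the i.i.d.\ sum $q^\ast$, passing to its Gaussian analogue $q^\ast_{\rm G}\sim\mathcal N(q,\sigma^2\widetilde{Q}/n)$; the error equals the Lipschitz constant of $f_{\mu,Q}^{{j}}$ composed with the normalization (of order $p^{1/2}\tl^{-1/2}\tmu^{-1}n^{-1/2}$) times $C^{\rm BE}_2p^{1/2}n^{-1/2}$ times the average normalized third moment (of order $p^{3/2}\ttau^{-3}\tl^{-1/2}$ on the good event, where $\|\varepsilon\|_\infty\leqslant M$ enters through $\widetilde{Q}$), yielding the two Berry--Esseen error terms proportional to $p^{5/2}/(\tmu n)$ and $p^2/n^{1/2}$.

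Finally I would lower bound the Gaussian term. Conditioning on $(q^\ast_{\rm G})_{{j}^c}$, the scalar inside $f_{\mu,Q}^{{j}}(q^\ast_{\rm G})$ is Gaussian with variance $\sigma^2\widetilde{Q}_{{j},{j}|{j}^c}/n\geqslant\tfrac12\sigma^2Q_{{j},{j}|{j}^c}/n$ on the good event, and with mean $-q_{{j}}+Q_{{j},{j}^c}Q_{{j}^c,{j}^c}^{-1}q_{{j}^c}+\mu Q_{{j},{j}^c}(Q_{{j}^c,{j}^c})^{-1/2}\gamma_{\mu,Q}^{{j}}(q_{{j}^c})$ (the $(q^\ast_{\rm G})_{{j}^c}$-dependent pieces of the conditional mean cancelling up to the $\widetilde{Q}\approx Q$ error), whose last term has magnitude at most $\mu p^{1/2}\tl^{-1/2}$, i.e.\ at most $\tmu n^{1/2}p^{1/2}/\tl$ after dividing by the standard deviation $\sigma n^{-1/2}\kappa$, with $\kappa:=Q_{{j},{j}|{j}^c}^{1/2}$ satisfying $\tl^{1/2}M\leqslant\kappa\leqslant M$ (the lower bound because a Schur complement inherits $\lmin(Q)$). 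Using $\E[f_{\mu,Q}^{{j}}(q^\ast_{\rm G})\mid(q^\ast_{\rm G})_{{j}^c}]\geqslant\tfrac12\P(|\,\cdot\,|\leqslant\mu/2)$ together with Lemma~\ref{lemma:special} applied with $\alpha=\mu n^{1/2}\sigma^{-1}/(2\kappa)$ and $\beta$ the normalized mean gives the claimed main term $\dfrac{\tmu n^{1/2}/32}{1+\tmu n^{1/2}/(4\tl^{1/2})}\exp(-\tfrac12\beta^2)$ with $\beta$ bounded by the sum in the statement; a union bound over the freezing error and the four error terms from \eq{berryL}, \eq{qqast}, \eq{QQast} then finishes the proof. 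I expect the main obstacle to be the perturbation bookkeeping of these last two steps: one must simultaneously freeze $Q^\ast$ to $Q$ inside the implicitly defined map $\gamma_{\mu,\cdot}^{{j}}$, replace $\widetilde{Q}$ by $Q$ in the limiting covariance (hence in $\kappa$ and in the third-moment estimate), and keep $\|\varepsilon\|_\infty$ under control, all while checking that every induced error is of strictly smaller order than the exponentially small but strictly positive main term --- which is precisely what the three smallness conditions on $\beta_1,\beta_2$ are designed to guarantee.
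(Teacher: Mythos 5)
Your proposal is correct and follows essentially the same route as the paper's proof: condition on $\varepsilon$, restrict to the good events $\{\|q^\ast-q\|_\infty\leqslant\beta_1 M\sigma/2\}$ and $\{\|Q^\ast-Q\|_\infty\leqslant\beta_2 M^2/2\}$ (costing the last two terms via \eq{qqast} and \eq{QQast}), use the smallness conditions on $\beta_1,\beta_2$ together with Appendix~\ref{app:perturbation} and Lemma~\ref{lemma:lipschitz} to freeze $Q^\ast$ to $Q$ (and $\widetilde{Q}$) inside the exclusion condition at a cost of $\mu/4$ per term, then apply the soft indicator of Appendix~\ref{app:proofs-lowdim-missing} with the bounded-Lipschitz Berry--Esseen bound \eq{berryL} for the bootstrap sum, and finish by conditioning on $u_{{j}^c}$ and invoking Lemma~\ref{lemma:special}. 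All the key ingredients and the order in which they are deployed match the paper's argument in Appendix~\ref{app:proofs-bolasso-2}.
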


\begin{proof}
We follow the same approach as in the proof of Proposition~\ref{prop:missingone} in Appendix~\ref{app:proofs-lowdim-missing}.
We first assume that  
 $\| q - q^\ast \|_\infty \leqslant  \beta_1 M \sigma / 2$ and 
 $  \| Q^\ast - Q\|_\infty \leqslant \beta_2 M^2/2$  (on top of the assumptions made on $\widetilde{Q} $ and $q$).
Following the same reasoning as in Appendix~\ref{app:proofs-lowdim-missing},
${j}$ is not included if
 $$|
- q^\ast_{{j}} + Q^\ast_{{j}, {j}^c} ( Q^\ast_{{j}^c , {j}^c})^{-1} q^\ast_{{j}^c}   + \mu  Q^\ast_{ {j} , {j}^c} 
(Q^\ast_{ {j}^c , {j}^c}) ^{-1/2}
 \gamma_{\mu,Q^\ast}(q^\ast_{{j}^c })
| \leqslant \mu.
 $$
 In order to apply Berry-Esseen inequality given $\varepsilon$, we first need to
 upper bound  
$|  
Q^\ast_{ {j} , {j}^c}
(Q^\ast_{{j}^c , {j}^c})^{-1} q^\ast_{ {j}^c } -
\widetilde{Q}_{ {j} , {j}^c}
\widetilde{Q}_{{j}^c , {j}^c}^{-1} q^\ast_{ {j}^c } |
 $, using Appendix~\ref{app:perturbation}, by 
 $$
 \| Q^\ast - \widetilde{Q}\|_{\infty} \| q^\ast \|_2 \times \left( \frac{ 2 p^{1/2}}{\lminQ} + \frac{4M p }{\lminQ^{3/2}  } \right) \leqslant  \beta_1 \beta_2   \frac{6 p^{3/2} M\sigma }{\tl^{3/2}  }. $$
  Also, we need to bound, by Lemma~\ref{lemma:lipschitz} and Appendix~\ref{app:perturbation}:
  \BEAS
  & & |  
Q^\ast_{ {j} , {j}^c} (Q^\ast_{ {j}^c , {j}^c}) ^{-1/2}
  \gamma_{\mu,Q^\ast}^{{j}}(q^\ast_{{j}^c}) -
 {Q}_{ {j} , {j}^c} (Q_{ {j}^c , {j}^c}) ^{-1/2}
 \gamma_{\mu,Q}^{{j}}(q^\ast_{{j}^c}) | \\
 &  
 \leqslant & p^{1/2} \frac{\beta_2 M^2 }{2}  \lminQ^{-1} p^{1/2}
 + M^2 p^{1/2} \frac{ 4 p \| Q^\ast - Q \|_\infty}{ \mu\lminQ^2} ( p^{1/2} \mu + \|q\|_2)
  \\
  &   \leqslant & 
p\beta_2 \tl^{-1}  \left( 1  + \frac{4 p  }{\tl} 
+ \frac{ 4 }{\tl} \frac{\beta_1 p }{\tmu} \right) 
 \leqslant 
p^2 \beta_2  \frac{5  }{\tl^2} 
 \left( 1 +  \frac{\beta_1}{\tmu}  \right) 
\leqslant 
  \frac{10 \beta_1 \beta_2 p^2 }{\tl^2 \tmu}.
\EEAS
Since $\beta_1 \geqslant \tmu$,    $ \beta_1 \beta_2 \leqslant \frac{\tmu \tl^2 }{40 p^2 }$, 
and $\beta_2 \leqslant \tl$, we thus have:
$$
 |  
Q^\ast_{ {j} , {j}^c}
(Q^\ast_{{j}^c , {j}^c})^{-1} q^\ast_{ {j}^c } -
\widetilde{Q}_{ {j} , {j}^c}
\widetilde{Q}_{{j}^c , {j}^c}^{-1} q^\ast_{ {j}^c } | \leqslant \mu / 4,
$$
$$
 |  
Q^\ast_{ {j} , {j}^c} (Q^\ast_{{j}^c , {j}^c})^{-1/2}
  \gamma_{\mu,Q^\ast}^{{j}}(q^\ast_{{j}^c}) -
 {Q}_{ {j} , {j}^c} Q_{{j}^c , {j}^c}^{-1/2} 
 \gamma_{\mu,Q}^{{j}}(q^\ast_{{j}^c}) |  \leqslant \mu / 4.$$
If we let denote $A$ the event  $\{ |
- q^\ast_{{j}} +  \widetilde{Q}_{ {j} , {j}^c}
\widetilde{Q}_{{j}^c , {j}^c}^{-1} q^\ast_{ {j}^c }  + \mu  Q_{ {j} , {j}^c} 
 Q_{{j}^c , {j}^c}^{-1/2} \gamma_{\mu,Q}^{{j}}(q^\ast_{{j}^c })
| \leqslant \mu/2 \}$ and $B$ the event $\{  \| q - q^\ast \|_\infty \leqslant  \beta_1 M \sigma  / 2 \} \cap \{ \| Q^\ast - Q\|_\infty \leqslant \beta_2 M^2/2\}$, this implies that
\BEQ
\label{eq:toto1}
\P(  j \notin  \hat{J}^\ast | \varepsilon) 
  \geqslant  
\P( A   | \varepsilon ) - \P(B^c|\varepsilon).
\EEQ
We have, by concentration inequalities from Appendix~\ref{app:concentration-bolasso}:
\BEQ
\label{eq:toto2}
\P(B^c|\varepsilon) \leqslant
 2 p  \exp\left( -  \frac{n \beta_1^2 \sigma^2}{4 \| \varepsilon \|_\infty^2}
\right)
 +   2p^2 \exp \left(-  \frac{n \beta_2^2 }{2}\right).
\EEQ
Overall, if we assume the various bounds on $q$, $q^\ast$, $Q^\ast$ and $\widetilde{Q}$, to have ${j}$ excluded from the active set for the bootstrap sample, it is sufficient that $A$ is satisfied.
As in Appendix~\ref{app:proofs-lowdim-missing}, we consider a smooth version of the indicator function, and we get that the
probability of $A$, given $\varepsilon$, is greater
than
\BEQ
\label{eq:toto3}
\frac{1}{2} \P\left(  |
u_{{j}} \! -\!  \widetilde{Q}_{ {j} , {j}^c}
\widetilde{Q}_{{j}^c , {j}^c}^{-1} u_{ {j}^c }  \!- \! \mu  Q_{ {j} , {j}^c} 
 Q_{{j}^c , {j}^c}^{-1/2} \gamma_{\mu,Q}^{{j}}(u_{{j}^c })
| \leqslant \mu/4 \right)\! - \! R,
\EEQ
where $u $ is normal with mean $q$   and covariance matrix $\sigma^2\widetilde{Q}/n$,
and, from Proposition~\ref{prop:missingone},
$
R \leqslant   \frac{16 C^{\rm BE}_2 }{\ttau^3 \tl^{1}} \frac{p^{5/2}}{\tmu n}    +   \frac{10 C^{\rm BE}_2 }{\ttau^3 \tl^{1/2}} \frac{p^2}{n^{1/2}}  $ (note that we have used that $\widetilde{Q}$ is close to $Q$).

We have that given $u_{{j}^c}$, $- u_{{j}} +  \widetilde{Q}_{ {j} , {j}^c}
\widetilde{Q}_{{j}^c , {j}^c}^{-1} u_{ {j}^c }$ is normal with mean $
- q_{{j}} +  \widetilde{Q}_{ {j} , {j}^c}
\widetilde{Q}_{{j}^c , {j}^c}^{-1} q_{ {j}^c }$ and covariance matrix
$\sigma^2\widetilde{Q}_{{j}, {j} | {j}^c } /n
$. Thus, we get, using Lemma~\ref{lemma:special}:
\BEA
\label{eq:toto111}
& & 
\frac{1}{2} \P\left(  |
- u_{{j}} +  \widetilde{Q}_{ {j} , {j}^c}
\widetilde{Q}_{{j}^c , {j}^c}^{-1} u_{ {j}^c }  + \mu  Q_{ {j} , {j}^c} 
Q_{{j}^c , {j}^c}^{-1/2} \gamma_{\mu,Q}^{{j}}(u_{{j}^c })
| \leqslant \mu/4 \right) \\
\nonumber & = &  \frac{1}{2}  \E 
\P\left(  |
- u_{{j}} +  \widetilde{Q}_{ {j} , {j}^c}
\widetilde{Q}_{{j}^c , {j}^c}^{-1} u_{ {j}^c }  + \mu  Q_{ {j} , {j}^c} 
 Q_{{j}^c , {j}^c}^{-1/2} \gamma_{\mu,Q}^{{j}}(u_{{j}^c })
| \leqslant \mu/4 | u_{{j}^c} \right)  \\
\nonumber& \geqslant & \frac{1}{2} 
\frac{ \frac{ \mu / 4} { 2 \sigma n^{-1/2} \widetilde{Q}_{{j} {j} | {j}^c }^{1/2}} }{1 + 
\frac{ \mu / 4} { 2 \sigma n^{-1/2} \widetilde{Q}_{{j} {j} | {j}^c }^{1/2}} }   \times \\
\nonumber & & 
\exp\!\left[ \!
\frac{-1}{2} \!\left( 
  \left|
\frac{ \mu  Q_{ {j} , {j}^c} 
( Q_{{j}^c , {j}^c})^{-1/2} \gamma_{\mu,Q}^{{j}}(u_{{j}^c })
  } {  \sigma n^{-1/2} \widetilde{Q}_{{j}, {j} | {j}^c }^{1/2}}
\right| \!+ \!
\left|
\frac{ q_{{j}} -  \widetilde{Q}_{ {j} , {j}^c}
\widetilde{Q}_{{j}^c , {j}^c}^{-1} q_{ {j}^c }}{\sigma n^{-1/2} \widetilde{Q}_{{j}, {j} | {j}^c }^{1/2}}
\right|
\right)^2
\right].
\EEA
We have using our assumptions regarding $q$ and $\widetilde{Q}$:
$
\lminQ / 2 \leqslant  \widetilde{Q}_{{j} {j} | {j}^c }  \leqslant   2 M^2$ and  $ 
\left| Q_{ {j} , {j}^c} 
( Q_{{j}^c , {j}^c})^{-1/2} \gamma_{\mu,Q}^{{j}}(u_{{j}^c }) \right| \leqslant   M  \lminQ^{-1/2} p^{1/2} $. Moreover,
\BEAS 
|   \widetilde{Q}_{ {j} , {j}^c}
\widetilde{Q}_{{j}^c , {j}^c}^{-1} q_{ {j}^c }   -   {Q}_{ {j} , {j}^c}
 {Q}_{{j}^c , {j}^c}^{-1} q_{ {j}^c }|  & \!\!\!\leqslant \!\!\!& 
 \frac{ p^{1/2} \beta_1 M \sigma}{2}   \left(         \frac{p^{1/2} \beta_2 M^2}{\lminQ} \! +\!   \frac{ 4 M^3 p \beta_2  }{\lminQ^{3/2}}     \!\right)
 \\
  &\!\! \!\leqslant \!\!\! & 
 \frac{ 3p^{3/2} M^3 \beta_2 \beta_1 M \sigma} {\lminQ^{3/2}}    \leqslant \mu/8 ,
  \\
 | {Q}_{{j} {j} | {j}^c }^{-1/2}
 - \widetilde{Q}_{{j} {j} | {j}^c }^{-1/2}| & \!\!\!\leqslant \!\!\! & 4 \lminQ^{-3/2}
 \| Q - \widetilde{Q} \|_{\rm 2} \leqslant \frac{ 2\beta_2 M^2}{\lminQ^{3/2}}.
\EEAS
 This leads to a lower bound of the form:
\BEQ 
\label{eq:toto4}
\frac{\frac{ \tmu n^{1/2}}{32}}{
1 + \frac{ \tmu n^{1/2}}{4 \tl^{1/2}}} \exp\left[ - \frac{1}{2}
\left( \frac{8 \tmu n^{1/2} p^{1/2}   }{ \tl} 
+ \frac{ | q_{{j}} - Q_{{j}, {j}^c} Q_{{j}^c , {j}^c}^{-1}q_{{j}^c }  |}{\sigma n^{-1/2} Q_{{j}, {j}|{j}^c}^{1/2}}
\right)^2 \right].
\EEQ
By combining \eq{toto1}, \eq{toto2}, \eq{toto3}, \eq{toto111} and \eq{toto4}, we get the desired result.
\end{proof}

We can now consider the full bound using the analysis outlined in \mysec{support}, using Lemma~ \ref{lemma:concentrationbolasso}, \ref{lemma:bolasso1} and \ref{lemma:bolasso2}:
\BEAS
 \P( \hat{J}^\cap \neq \J )
&\!\!\!\!  \leqslant  \!\!\!\! & 
 m 
\P(      (\hat{J}^\ast)^c \cup \J   \neq \varnothing )  + 
 \sum_{j \in \J^c} \E ( 
\P(  j \in  \hat{J}^\ast | \varepsilon)^m ),
\\
& \!\!\!\!\leqslant \!\!\!\! & 
 2p^2 m \exp \left(- \frac{ n \tl^2}{ 2 p^2 }\right)
 +8 p n^{1/2} m
 \exp\left(-
 \frac{n^{1/2} \minf{\tw} \tl} {8\ttau p^{1/2}} 
 \right)
 +  2 p \exp \left(-  \frac{n \beta_1^2 }{2\ttau^2}  \right)
 \\
 & & + 2 p^2 C_1^{\rm q} \exp \!\left(\! -\! \min \! \left\{ \!\frac{C_2^{\rm q} \beta_2 n }{2 \ttau^{2}},
 \frac{C_3^{\rm q} n \beta_2^2  }{ 2 \ttau^{4}} \right\} \! \right) 
 + 
 2 p \exp\left( - \frac{ n \beta_2  }{4 \ttau^2} \right) \\
 & & + 
 \sum_{j \in \J^c} \E \left[ 
\P(  j \in  \hat{J}^\ast | \varepsilon)^m 1_{ \| q\|_\infty \leqslant \beta_1 M \sigma /2}
1_{ \| \widetilde{Q}-Q\|_\infty \leqslant \beta_2 M^2 / 2}  \right].
 \EEAS
 We consider $\beta_1 = \tmu p^{-1} n^{3/10}$ and $\beta_2 = p^{-1} n^{-3/10}$. We truncate
 $\| \varepsilon\|_\infty$
at $n^{1/10} \sigma$  and $ \frac{ | q_{{j}|{j}^c}|}{\sigma n^{-1/2} Q_{{j}{j}|{j}^c}^{1/2}}$ at $z$ and use Berry-Esseen inequality for  $q_{{j}|{j}^c}$, to obtain:
\begin{multline*}
 \P( \hat{J}^\cap \neq \J )
  \leqslant   m p \exp \left( - A_0 \frac{n^{1/2}}{p^{1/2}} \right) + A_1 \frac{ p^{3/2}} {n^{1/2}}
+  \exp( - z^2 / 2)
 + 
 \\
 + p \left(  1 - \frac{A_2(c)}{p^{1/2}} \exp\left[ - \frac{1}{2}
\left( \frac{8 c  }{ \tl} 
+  z 
\right)^2 \right]   + A_3(c) \frac{p^{3}}{n^{1/2}} \right)^m ,
 \end{multline*}
 with 
 $$ 
 2p\exp \left(- \frac{ n \tl^2}{ 2 p^2 }\right)
 +8  n^{1/2}
 \exp\left(-
 \frac{n^{1/2} \minf{\tw} \tl} {8\ttau p^{1/2}} 
 \right) 
\leqslant  p \exp \left( - A_0 \frac{n^{1/2}}{p^{1/2}} \right) ,
 $$
 $$
 2 p \exp \left(-  \frac{n \tmu^2 p n^{3/5}   }{2p^2\ttau^2} \right) 
  + 2 p^2 C_1^{\rm q} \exp \!\left(\! -\! \min \! \left\{ \!\frac{C_2^{\rm q} p^{-1} n^{-3/10} n }{2 \ttau^{2}},
 \frac{C_3^{\rm q} n p^{-2} n^{-3/5}  }{ 2 \ttau^{4}} \right\} \! \right) $$
 $$
 + 
 2 p \exp\left( - \frac{ n p^{-1} n^{-3/10}  }{4 \ttau^2} \right) 
 + \frac{   \ttau^3 p^{3/2}}{\tl^{3/2}} n^{-1/2}  + 2 n \exp( - n^{1/5} / 2 \ttau^2 )
\leqslant   A_1 \frac{ p^{3/2}} {n^{1/2}},
 $$
 $$
 A_3(c) \frac{p^{3}}{n^{1/2}}  =  \frac{16 C^{\rm BE}_2 }{\ttau^3 \tl^{1}} \frac{p^{3}}{\tmu n p^{1/2}}   +  \frac{10C^{\rm BE}_2 }{\ttau^3 \tl^{1/2}} \frac{p^2}{n^{1/2}}
   + 
 2 p  \exp\! \left( -  \frac{n \tmu^2 n^{3/5} p p^{-3}}{4 n^{1/5}}
\right)
 +   2p^2 \exp\! \left( \! -  \frac{n  p^{-2} n^{-3/5}}{2}\right).
 $$
 $$A_2(c) p^{-1/2}\leqslant  \frac{\frac{ \tmu n^{1/2}}{32}}{
1 + \frac{ \tmu n^{1/2}}{4 \tl^{1/2}}} .
$$ 
All these constraints lead to  the constraint tha $n p^{-6}$ should be larger than a function of $c$.
 We now need to optimize over $z$ the following quantity:
$$
 p \left(  1 - \frac{A_2(c)}{p^{1/2}} \exp\left[ - \frac{1}{2}
\left( \frac{8 c  }{ \tl} 
+  z 
\right)^2 \right]   + A_3(c) \frac{p^{3}}{n^{1/2}} \right)^m  + e^{-z^2/2}.
  $$
 If we select $z$ such that $\frac{8 c  }{ \tl} +z = \left( 2 \log \frac{\frac{A_2(c)}{p^{1/2}}}{ A_3(c) p^{3}  n^{-1/2} + \frac{\log  m}{m}} \right)^{1/2}$, which is possible if $m$  and $n$ large enough, i.e.,
 if $ m \geqslant e^{(\frac{8 c  }{ \tl} )^2}(\frac{A_2(c)}{p^{1/2}})^{-2}$ and $n \geqslant e^{(\frac{8 c  }{ \tl} )^2} ( A_3(c) p^{3} )^2 (\frac{A_2(c)}{p^{1/2}})^{-2}$,
  then we have the bound:
$$
 \left( 1 - \frac{A_2(c)}{p^{1/2}}\exp( - \frac{1}{2}(\frac{8 c  }{ \tl} +z)^2) + A_3(c) p^{3} n^{-1/2} \right)^m \leqslant \frac{1}{m},
$$
and 
$$
  e^{-z^2/2} \leqslant 
\frac{  A_3(c) p^{3}  n^{-1/2} + \frac{ \log m }{m}}{\frac{A_2(c)}{p^{1/2}}} e^{  \frac{(\frac{8 c  }{ \tl} )^2}{2}}\exp\left( - \frac{\frac{8 c  }{ \tl}}{2}\left( 2 \log \frac{\frac{A_2(c)}{p^{1/2}}}{ A_3(c) p^{3}  n^{-1/2} + \frac{\log  m}{m}} \right)^{1/2} \right).
$$
This leads to the desired bound.

\section{Proofs for bootstrapping residuals}
\label{app:proofs-residuals}

We use the following notation for the solution of the Lasso:
 $\hat{w} - \w = Q^{-1} q + \mu \hat{\alpha}$, where $\| Q \hat{\alpha} \|_\infty \leqslant 1$.
 We also denote $\Pi_X = X (X^\top X)^{-1} X^\top \in \rb^{n \times n}$ the projection matrix on the data, which leads to the following expression for the non-centered estimated residuals:
 $$
 \te = y - X \hw = X( \w - \hw) + \varepsilon = ( \idm - \Pi_X ) \varepsilon - \mu X \hat{\alpha}.
 $$
 We let denote $\hat{\nu} = \frac{1}{n} \sum_{i=1}^n \te_i$. The boostrapped responses are thus
 $y_i^\ast = \te_{i^\ast} - \hat{\nu} + \hw^\top x_i$. The bootstrapped residuals are thus
 $y_{i^\ast} + ( \hat{w}- \w)^\top x_i$, i.e.:
 $$\varepsilon_i^\ast = \left[
 \Pi_X \varepsilon + \mu X \hat{\alpha}
 \right]_i + \te_{i^\ast} - \hat{\nu}.
$$

We have the following expectations:
 \BEAS
 \E( \te_{k^\ast} | \varepsilon ) & =
 & \frac{1}{n} 1^\top \te = \frac{1}{n} 1^\top ( \idm - \Pi_X ) \varepsilon
 - \frac{\mu}{n} 1^\top X \hat{\alpha} = \hat{\nu}, \\
 \E( \varepsilon^\ast | \varepsilon ) & = & \Pi_X \varepsilon + X \mu \hat{\alpha},
 \\
 \var( \varepsilon^\ast_k | \varepsilon ) 
 & = &  \var( \te_{k^\ast} | \varepsilon ) 
 = \frac{1}{n} \te^\top \te  - \hat{\nu}^2
 = \frac{1}{n} \varepsilon^\top ( \idm - \Pi_X)\varepsilon + \mu^2 \hata^\top Q \hata   - \hat{\nu}^2,
 \\
 \E( q^\ast | \varepsilon ) & = & \frac{1}{n} \sum_{k=1}^n  \E( \te_{k^\ast} | \varepsilon )  x_k =   q + \mu Q \hata,   \\
\frac{\sigma^2}{n} \widetilde{Q} \!=\!  \var( q^\ast | \varepsilon ) \!\!& = &\!\! \frac{1}{n^2} \sum_{k=1}^n 
 \var( \hate_{k^\ast} | \varepsilon )  x_k x_k^\top 
 =  \frac{Q}{n} \left[
 \frac{1}{n} \varepsilon^\top ( \idm - \Pi_X)\varepsilon \!+ \! \mu^2 \hata^\top Q \hata \!  - \! \hat{\nu}^2
 \right].
 \EEAS
 We let denote $\gamma = 
 \frac{\sigma^{-2}}{n} \varepsilon^\top ( \idm - \Pi_X)\varepsilon + \sigma^{-2}\mu^2 \hata^\top Q \hata   - \sigma^{-2} \hat{\nu}^2$ so that $\var( q^\ast | \varepsilon ) = \sigma^2 \gamma Q / n$.

 \subsection{Concentration inequalities}
 \label{app:concentration-residuals}
 
 We need concentration inequalities for $q^\ast$ around $q$ (given $\varepsilon$) and of $s$ around 1, as well as $\hat{\nu}$ around zero.
 
 \begin{lemma}
 Assume \hypreff{model}{inv} and $t \geqslant \frac{2 \tmu p}{\tl}$. We have:
 $$
 \P \left( | \hat{\nu} | \geqslant t \sigma \right)
 \leqslant   2 \exp\left(
 \frac{-nt^2 \tl }{32 p^2 \ttau^2}
 \right).  $$
 \end{lemma}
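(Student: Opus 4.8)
The plan is to separate $\hat{\nu} = \tfrac{1}{n}\sum_{i=1}^n\te_i = \tfrac{1}{n} 1^{\top}\te$ into a stochastic ``noise'' part and a deterministic ``bias'' part. Using the expression $\te = (\idm - \Pi_X)\varepsilon - \mu X \hata$ recorded just before the lemma, with $\Pi_X = X(X^{\top}X)^{-1}X^{\top}$ symmetric and $\hata$ satisfying $\|Q\hata\|_\infty \leqslant 1$ (which holds surely under \hypref{inv}, being an optimality condition for the Lasso), and using symmetry of $\idm - \Pi_X$, I would write
\[
\hat{\nu} \;=\; \underbrace{\tfrac{1}{n}\, 1^{\top}(\idm - \Pi_X)\varepsilon}_{\rho} \;-\; \underbrace{\tfrac{\mu}{n}\, 1^{\top} X \hata}_{\delta},
\]
then bound $\delta$ by a purely deterministic estimate, bound $\rho$ by a subgaussian tail inequality, and combine via the triangle inequality. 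The one point requiring care is that $\hata$ depends on $\varepsilon$, so the bound on $\delta$ must use only the sure inequality $\|Q\hata\|_\infty \leqslant 1$ and not any probabilistic information.

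For the bias term, Cauchy--Schwarz gives $|\delta| \leqslant \mu\,\|\tfrac1n X^{\top}1\|_2\,\|\hata\|_2$. By \hypref{bounded}, each coordinate of $\tfrac1n X^{\top}1$ is an average of numbers bounded by $M$, hence $\|\tfrac1n X^{\top}1\|_2 \leqslant M p^{1/2}$; and $\|\hata\|_2 \leqslant \lminQ^{-1}\|Q\hata\|_2 \leqslant \lminQ^{-1}p^{1/2}\|Q\hata\|_\infty \leqslant \lminQ^{-1}p^{1/2}$. Therefore $|\delta| \leqslant \mu M p/\lminQ = \tmu p\sigma/\tl$, which is at most $\tfrac12 t\sigma$ precisely under the hypothesis $t \geqslant 2\tmu p/\tl$. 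For the noise term, $\rho = \sum_{i=1}^n a_i\varepsilon_i$ with $a = \tfrac1n(\idm - \Pi_X)1$; it has zero mean since $\E\varepsilon = 0$, and $\|a\|_2^2 \leqslant n^{-2}\|1\|_2^2 = 1/n$ because $\idm - \Pi_X$ is an orthogonal projection, hence a contraction. By the subgaussian hypothesis \hypref{var}, the moment generating function of $\rho$ is bounded by $\exp(\tfrac12\tau^2\|a\|_2^2 s^2) \leqslant \exp(\tau^2 s^2/2n)$, so a Chernoff bound (in the spirit of \eq{concentration}, here applied to a weighted rather than a uniform sum) gives $\P(|\rho| \geqslant u) \leqslant 2\exp(-nu^2/2\tau^2)$ for every $u > 0$.

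Finally, $|\hat{\nu}| \geqslant t\sigma$ forces $|\rho| \geqslant t\sigma - |\delta| \geqslant \tfrac12 t\sigma$, so taking $u = \tfrac12 t\sigma$ yields $\P(|\hat{\nu}| \geqslant t\sigma) \leqslant 2\exp(-nt^2\sigma^2/8\tau^2) = 2\exp(-nt^2/8\ttau^2)$, which is at least as strong as the claimed bound since $\tl \leqslant \lmaxQ/M^2 \leqslant p \leqslant 4p^2$. I do not expect any genuine obstacle: once $\hat{\nu}$ is split as above, the argument reduces to a single subgaussian tail estimate, and the assumption $t \geqslant 2\tmu p/\tl$ is exactly what is needed so that the Lasso bias $\delta$ consumes at most half of the deviation budget. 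The only place to be attentive is keeping the bias bound non-probabilistic (using $\|Q\hata\|_\infty\leqslant 1$) and correctly invoking the contraction property of $\idm - \Pi_X$ for $\|a\|_2$.
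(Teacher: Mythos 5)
Your proof is correct, and it uses the same decomposition $\hat{\nu}=\rho-\delta$ and the same deterministic bound $|\delta|\leqslant \mu M p/\lminQ=\tmu p\sigma/\tl\leqslant t\sigma/2$ as the paper (the paper gets it by bounding $|x_i^\top\hata|$ directly rather than by Cauchy--Schwarz, but the numbers coincide). Where you genuinely diverge is in the treatment of the noise term $\rho=\sum_i a_i\varepsilon_i$ with $a=\tfrac1n(\idm-\Pi_X)1$: the paper bounds each coordinate, $|[(\idm-\Pi_X)1]_i|\leqslant 1+\lminQ^{-1}M^2p\leqslant 2p/\tl$, and then invokes the uniform subgaussian inequality \eq{concentration}, which effectively uses $\|a\|_2^2\leqslant 4p^2/(n\tl^2)$; you instead use that $\idm-\Pi_X$ is an orthogonal projection, hence an $\ell^2$-contraction, to get $\|a\|_2^2\leqslant 1/n$ before applying the Chernoff bound. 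Your route is both simpler and strictly sharper: it yields the exponent $-nt^2/(8\ttau^2)$, which dominates the stated $-nt^2\tl/(32p^2\ttau^2)$ since $\tl\leqslant p\leqslant 4p^2$, and it removes entirely the $p^2/\tl$ degradation that the paper's coordinatewise bound introduces. You are also right to flag that $\hata$ depends on $\varepsilon$ and that the bias bound must therefore rest only on the sure optimality condition $\|Q\hata\|_\infty\leqslant 1$; the paper handles this the same way, implicitly. The only cosmetic remark is that the paper's own intermediate display writes $\tl$ where its algebra produces $\tl^2$, so your bound is in fact stronger than either version of the target.
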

 \begin{proof}
 We have: 
 $ \frac{1}{n} 1^\top ( \idm - \Pi_X ) \varepsilon
 = \frac{1}{n} \sum_{i=1}^n \varepsilon_i  [ ( \idm - \Pi_X ) 1 ]_i
 $
 with $ [ ( \idm - \Pi_X ) 1 ]_i
= 1 - x_i^\top Q^{-1} \left( \frac{1}{n} \sum_{k=1}^n x_k \right)
 $ is such that
 $$
 |
 [ ( \idm - \Pi_X ) 1 ]_i | \leqslant 1 + \lminQ^{-1} M^2p \leqslant  2  \lminQ^{-1} M^2 p
 = \frac{2 p}{\tl}.
 $$
 Thus, we get:
 $$
 \P \left(  \left|\frac{1}{n} 1^\top ( \idm - \Pi_X ) \varepsilon \right| \geqslant t \sigma\right)
 \leqslant 2 \exp\left(
 \frac{-nt^2 \lminQ^2 \sigma^2}{8\tau^2 M^4 p^2}
 \right) =  2 \exp\left(
 \frac{-nt^2 \tl }{8 p^2 \ttau^2}
 \right) 
 $$
  We also have
 $
\left| \frac{\mu}{n} 1^\top X \hat{\alpha}  \right| = \left| 
\mu n^{-1} \sum_{k=1}^n x_i^\top \hata \right| \leqslant \mu p  M \lminQ^{-1}
 $, hence the desired result with the extra condition on $t$.
 \end{proof}
 
 \begin{lemma}
  Assume \hypreff{model}{inv}. We have:
   $$\P( \| q^\ast - q  - \mu Q \hata \|_\infty \geqslant t M\sigma | \varepsilon )
 \leqslant 2 p \exp\left(
 \frac{-2 nt^2 } { \left(
  \frac{2 p}{\tl} \| \varepsilon \|_\infty/\sigma + \tmu  \frac{n^{1/2} p^{1/2}}{\tl^{1/2}} \right)^2 }
 \right).$$
 \end{lemma}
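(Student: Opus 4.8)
The plan is to condition on the noise vector $\varepsilon$ and exhibit $q^\ast-q-\mu Q\hata$ as an average of $n$ independent, conditionally mean-zero, bounded random vectors generated by the bootstrap indices; Hoeffding's inequality (the bounded-variable concentration bound recalled in \myapp{quadratic}), applied coordinate-wise together with a union bound over the $p$ coordinates, then gives a tail of the announced shape, and the only substantive work is a \emph{deterministic} bound on $\|\hate\|_\infty$ in terms of $\|\varepsilon\|_\infty$, $\tmu$ and $\tl$.

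First I would record the relevant identity. Since $\hw-\w=Q^{-1}q+\mu\hata$ we have $\Pi_X\varepsilon+\mu X\hata=X(\hw-\w)$, hence $\varepsilon_i^\ast=[X(\hw-\w)]_i+\hate_{i^\ast}$ and $q^\ast=\frac1n X^\top X(\hw-\w)+\frac1n\sum_{i=1}^n x_i\hate_{i^\ast}=q+\mu Q\hata+\frac1n\sum_{i=1}^n x_i\hate_{i^\ast}$, so that $q^\ast-q-\mu Q\hata=\frac1n\sum_{i=1}^n x_i\hate_{i^\ast}$. Given $\varepsilon$, the indices $i^\ast$, $i=1,\dots,n$, are i.i.d.\ uniform on $\{1,\dots,n\}$, the summands $\frac1n x_i\hate_{i^\ast}$ are independent in $i$, each has conditional expectation zero because $\frac1n\sum_{k=1}^n\hate_k=0$, and $|x_{ij}\hate_{i^\ast}|\leqslant M\|\hate\|_\infty$ since $\|x_i\|_\infty\leqslant M$ by \hypref{bounded}. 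For each fixed coordinate $j$, Hoeffding's inequality then yields $\P(\,|(q^\ast-q-\mu Q\hata)_j|\geqslant tM\sigma\mid\varepsilon\,)\leqslant 2\exp(-nt^2\sigma^2/(2\|\hate\|_\infty^2))$; a union bound over $j\in\{1,\dots,p\}$ reduces the lemma to showing that $\|\hate\|_\infty/\sigma$ is at most half of $\tfrac{2p}{\tl}\|\varepsilon\|_\infty/\sigma+\tmu\, n^{1/2}p^{1/2}/\tl^{1/2}$.

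To control $\|\hate\|_\infty$ I would use the decomposition $\hate=(\idm-\Pi_X)\varepsilon-\mu X\hata-\hat\nu\mathbf 1$ with $\hat\nu=\frac1n\mathbf 1^\top\te$, and bound each of the three pieces. For the projection term, $|[(\idm-\Pi_X)\varepsilon]_k|\leqslant|\varepsilon_k|+(\Pi_X)_{kk}^{1/2}\|\varepsilon\|_2$ because $\Pi_X$ is an orthogonal projection, and $(\Pi_X)_{kk}=n^{-1}x_k^\top Q^{-1}x_k\leqslant\|x_k\|_2^2/(n\lminQ)\leqslant p/(n\tl)$, so $\|(\idm-\Pi_X)\varepsilon\|_\infty\leqslant(1+(p/\tl)^{1/2})\|\varepsilon\|_\infty$. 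For the regularization term, $\mu\|X\hata\|_\infty\leqslant\mu\|X\hata\|_2=\mu n^{1/2}(\hata^\top Q\hata)^{1/2}$, and the Lasso optimality conditions give $\|Q\hata\|_\infty\leqslant1$, whence $\hata^\top Q\hata\leqslant\|\hata\|_1\|Q\hata\|_\infty\leqslant\|\hata\|_1\leqslant p^{1/2}\|\hata\|_2\leqslant p^{1/2}\|Q\hata\|_2/\lminQ\leqslant p/\lminQ$, so $\mu\|X\hata\|_\infty\leqslant\mu n^{1/2}p^{1/2}/(M\tl^{1/2})=\tmu\sigma n^{1/2}p^{1/2}/\tl^{1/2}$. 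Finally $|\hat\nu|\leqslant\|\te\|_\infty\leqslant\|(\idm-\Pi_X)\varepsilon\|_\infty+\mu\|X\hata\|_\infty$ is already controlled by the two previous bounds. Combining them, and using $p/\tl\geqslant1$ (which follows from $\lminQ\leqslant p^{-1}\tr Q\leqslant M^2$) to collapse the numerical constants exactly as in the two preceding lemmas, gives the required bound on $\|\hate\|_\infty$ and hence the stated inequality.

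The step I expect to be the main obstacle is precisely this deterministic control of $\|\hate\|_\infty$: unlike $\varepsilon$, the centered estimated residuals are a fixed nonlinear function of the data, so the subgaussian and quadratic-form tools of \myapp{quadratic} do not apply to them directly, and one must route through the leverage scores $(\Pi_X)_{kk}$ of the design and the $\ell^1$-control of $\hata$ inherited from $\|Q\hata\|_\infty\leqslant1$; once those two ingredients are in hand, the remainder is routine bookkeeping of constants.
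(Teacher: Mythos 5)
Your proof follows the same route as the paper's: write $q^\ast - q - \mu Q\hata = \frac{1}{n}\sum_{i=1}^n x_i\hate_{i^\ast}$, bound $\|\hate\|_\infty$ deterministically by $\frac{2p}{\tl}\|\varepsilon\|_\infty + \tmu\sigma n^{1/2}p^{1/2}/\tl^{1/2}$, and conclude by coordinate-wise Hoeffding plus a union bound over the $p$ coordinates. The only differences are in the bookkeeping: you control the projection term through the leverage scores $(\Pi_X)_{kk}$ where the paper uses the cruder elementwise bound $|x_k^\top Q^{-1}x_i|\leqslant pM^2/\lminQ$, and you handle the centering term $\hat\nu$ explicitly by the triangle inequality (costing a factor $2$) where the paper simply asserts $\|\hate\|_\infty\leqslant\|\te\|_\infty$ --- an inequality that is false in general since centering can increase the sup-norm --- so your version is if anything the more careful of the two, at the price of a slightly worse numerical constant in the exponent.
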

 \begin{proof}
 We have
 $q^\ast = q + \mu Q \hata + \frac{1}{n} \sum_{i=1}^n x_i \hate_{i^\ast} 
 $.
 Moreover, we have
 $$\| \hate \|_\infty  \leqslant \| \te \|_\infty  \leqslant \left ( 1 + \frac{M^2p}{\lminQ} \right) \| \varepsilon \|_\infty + \mu \| X \hata \|_2 
 \leqslant 
  \frac{2 p}{\tl} \| \varepsilon \|_\infty +    \frac{\tmu n^{1/2} p^{1/2} \sigma }{\tl^{1/2}}.
 $$
 We get the result by Hoeffding's inequality.
 \end{proof}
 
 \begin{lemma}
 \label{lemma:s}
  Assume \hypreff{model}{inv}, $ \frac{p}{n} \leqslant \frac{t}{4}$ and $ p \frac{\tmu^2}{\tl} \leqslant t/4$.  We have:
  $$
  \P( | \gamma - 1| \geqslant t ) \leqslant 
  2 C_1^{\rm q} \exp \!\left(\! -\! \min \! \left\{ \!\frac{C_2^{\rm q} t \ttau^{-2}}{   
\frac{1}{n} + \frac{ p^2}{n^2 \tl}
},
 \frac{C_3^{\rm q} n t^2 \ttau^{-4}}{ ( 1 - p/n)  } \right\} \! \right) + 
  2 \exp\left(
 \frac{-nt \tl }{64 p^2 \ttau^2}
 \right)
  $$
  \end{lemma}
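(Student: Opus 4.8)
The plan is to use the additive decomposition $\gamma = T_1 + T_2 - T_3$ with $T_1 = \frac{\sig^{-2}}{n}\varepsilon^\top(\idm - \Pi_X)\varepsilon$, $T_2 = \sig^{-2}\mu^2\hata^\top Q\hata$ and $T_3 = \sig^{-2}\hat{\nu}^2$ (here $\hata$ is defined by $\hw - \w = Q^{-1}q + \mu\hata$ with $\|Q\hata\|_\infty \leqslant 1$), and to bound each piece. First I would dispose of $T_2$ deterministically: with $g = Q\hata$ we have $\|g\|_\infty\leqslant 1$, so $\hata^\top Q\hata = g^\top Q^{-1}g \leqslant \lminQ^{-1}\|g\|_2^2 \leqslant p/\lminQ$, whence $0\leqslant T_2\leqslant \mu^2 p/(\sig^2\lminQ) = \tmu^2 p/\tl \leqslant t/4$ by hypothesis. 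Next, since $\idm - \Pi_X$ is the orthogonal projection onto the orthocomplement of the column space of $X$ (of dimension $n-p$ under \hypref{inv}), $\tr(\idm - \Pi_X) = n - p = \|\idm - \Pi_X\|_F^2$, and the components of $\varepsilon$ being i.i.d.\ with variance $\sig^2$ (homoscedasticity, \hypref{var}), $\E T_1 = (n-p)/n = 1 - p/n$, so $0\leqslant 1 - \E T_1 = p/n \leqslant t/4$, again by hypothesis.

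The core step is the concentration of $T_1$ about $\E T_1$ via \eqref{eq:quadratic} applied with $Y = \varepsilon$ and $A = \frac{\sig^{-2}}{n}(\idm - \Pi_X)$. Here $\|A\|_F^2 = \frac{\sig^{-4}}{n^2}(n-p) = \frac{\sig^{-4}(1-p/n)}{n}$, which produces the term $\frac{C_3^{\rm q} n t^2\ttau^{-4}}{1-p/n}$, while for $\| |A| \|_{2}$ I would combine the idempotence of $\idm - \Pi_X$ (so that its operator norm is $1$) with the entrywise decay $|(\Pi_X)_{ij}| = \frac1n|x_i^\top Q^{-1}x_j| \leqslant \frac{\|x_i\|_2\|x_j\|_2}{n\lminQ} \leqslant \frac{p}{n\tl}$ to get $\| |A| \|_{2} \leqslant \mathrm{const}\cdot\sig^{-2}\big(\frac1n + \frac{p^2}{n^2\tl}\big)$, which yields the term $\frac{C_2^{\rm q} t\ttau^{-2}}{\frac1n + p^2/(n^2\tl)}$. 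Applying \eqref{eq:quadratic} to the matrices $A$ and $-A$ with deviation $t/4$ then gives $\P(|T_1 - \E T_1| \geqslant t/4) \leqslant 2C_1^{\rm q}\exp(-\min\{\cdots\})$ with the two terms of the statement, the universal factors being absorbed into $C_2^{\rm q}, C_3^{\rm q}$.

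Finally I would treat $T_3 = \sig^{-2}\hat{\nu}^2$. From $\te = (\idm - \Pi_X)\varepsilon - \mu X\hata$ we have $\hat{\nu} = \frac1n 1^\top(\idm - \Pi_X)\varepsilon - \frac{\mu}{n} 1^\top X\hata$; the second, deterministic, term is $\leqslant \frac{\mu}{\sqrt n}\|X\hata\|_2 = \mu(\hata^\top Q\hata)^{1/2} \leqslant \mu\sqrt{p/\lminQ} = \tmu\sig\sqrt{p/\tl} \leqslant \sqrt t\,\sig/2$ by the hypothesis $\tmu^2 p/\tl\leqslant t/4$. The first term is a subgaussian average $\frac1n\sum_i c_i\varepsilon_i$ with $c_i = [(\idm - \Pi_X)1]_i = 1 - x_i^\top Q^{-1}\bar{x}$, $\bar{x} = \frac1n\sum_k x_k$, and $|c_i|\leqslant 1 + pM^2/\lminQ \leqslant 2p/\tl$ (using $\lmaxQ\leqslant pM^2$, hence $\tl\leqslant p$); by \eqref{eq:concentration} and a union bound, $\P(|\frac1n 1^\top(\idm - \Pi_X)\varepsilon| \geqslant s\sig) \leqslant 2\exp\big(-\tfrac{n s^2\tl^2}{8\ttau^2 p^2}\big)$, so choosing $s$ a small fixed multiple of $\sqrt t$ forces $T_3\leqslant t/2$ off this event, which is (up to the value of the universal constant) the last term of the statement. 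Combining $\gamma - 1 = (T_1 - \E T_1) - p/n + T_2 - T_3$ with $0\leqslant T_2\leqslant t/4$, $0\leqslant p/n\leqslant t/4$, $|T_1 - \E T_1|\leqslant t/4$ and $0\leqslant T_3\leqslant t/2$ gives $|\gamma - 1|\leqslant t$, and a union bound over the two random events completes the argument.

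The main obstacle will be the matrix-norm bookkeeping in the $T_1$ step: extracting from \eqref{eq:quadratic} exactly the denominator $\frac1n + p^2/(n^2\tl)$ requires improving on the crude estimate $\| |\idm - \Pi_X| \|_{2}\leqslant 1 + p/\tl$ by exploiting the projection structure, and one then has to check that the universal constants there, together with those from the Hoeffding step for $\hat{\nu}$, assemble correctly under the two smallness hypotheses $p/n\leqslant t/4$ and $p\tmu^2/\tl\leqslant t/4$. Everything else is routine concentration.
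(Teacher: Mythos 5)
Your proof is correct and follows essentially the same route as the paper: the same decomposition of $\gamma$ into the quadratic form, the deterministic term bounded by $\hata^\top Q\hata\leqslant p/\lminQ$, and the $\hat\nu^2$ term, with \eq{quadratic} giving the first exponential and the subgaussian/Hoeffding bound on $\hat\nu$ (the paper's preceding lemma) giving the second. You merely spell out the budget splits ($t/4$ each) that the paper leaves implicit, and you correctly identify the $\||A|\|_2$ bookkeeping as the only delicate point.
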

 \begin{proof}
 We first need to derive concentration for $\frac{1}{n} \varepsilon^\top ( \idm - \Pi_X) \varepsilon$ using \eq{quadratic}. We have:
$
 \lmax( \frac{1}{n} | \idm - \Pi_X | )   \leqslant   \frac{1}{n} + \frac{ p}{n} \| \Pi_X \|_\infty
\leqslant \frac{1}{n} + \frac{ p^2}{n^2 \tl} $ and $
 \|  \idm - \Pi_X\|_F^2 = \frac{1}{n}$,
 because  $|(\Pi_X)_{ij} | = \frac{1}{n} | x_i^\top Q^{-1} x_j| \leqslant p / n \tl$. We thus obtain from \eq{quadratic}:
$$ \P \!\left( \left|  \frac{\sigma^{-2}}{n} \varepsilon^\top ( \idm - \Pi_X) \varepsilon - (1 - p/n) \right| \geqslant t \right)
\! \leqslant \! 2 C_1^{\rm q} \exp \!\left(\! -\! \min \! \left\{ \!\frac{C_2^{\rm q} t \ttau^{-2}}{   
\frac{1}{n} + \frac{ p^2}{n^2 \tl}
},
 \frac{C_3^{\rm q} n t^2 \ttau^{-4}}{ ( 1 - p/n)  } \right\} \! \right) \! .
$$
Together with $\hata^\top Q \hata \leqslant p / \lmin(Q)$, we get the desired result.
\end{proof}

\subsection{Proof of Proposition~\ref{prop:bolasso-residual}}
 \label{app:bolasso-residuals}

Following the analysis from \mysec{support}, we need to upper bound
$\P(  j \in  \hat{J}^\ast | \varepsilon)$ (probability of including a certain irrelevant variable into one of the replicated active sets),
and $
\P(      (\hat{J}^\ast)^c \cup \J   \neq \varnothing )$ (probability of missing none of the relevant variables). We first prove two lemmas about each of them. 

\begin{lemma}
\label{lemma:bolasso1-res}
Assume \hypreff{model}{inv} and $\tmu \leqslant \frac{\minf{\tw} \tl}{ p^{1/2}}$. We have:
\begin{multline*}
\P(      (\hat{J}^\ast)^c \cup \J   \neq \varnothing )  \!
\leqslant 2 p \exp \left( - \frac{ n \minf{\tw}^2 \tl^2}{ 32 p \ttau^2 } \right) +
2n\exp\left( - \frac{ n^{1/2}}{2 \ttau^2 p^{1/2}} \right)  \\
+ 2 p \exp\left(
 \frac{-   n \tl^2 \minf{\tw}^2 } { 8 \left(
  \frac{2 p^{3/4}}{\tl}n^{1/4}  + \tmu  \frac{n^{1/2} p^{1/2}}{\tl^{1/2}} \right)^2 }
 \right).
\end{multline*}
\end{lemma}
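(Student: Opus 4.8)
The plan is to mirror the proof of Lemma~\ref{lemma:bolasso1} (the random‑pairs analogue), simplified by the fact that residual resampling leaves the design matrix unchanged, so $Q^\ast=Q$ and no concentration of the empirical second moments is needed. The starting point is the deterministic observation that a relevant index $j\in\J$ is kept by a replication as soon as $|\hw^\ast_j-\w_j|<\minf{\w}$, so that $\J\subseteq\hat J^\ast$ whenever $\|\hw^\ast-\w\|_2<\minf{\w}$. Since the bootstrap‑residual Lasso is exactly problem \eq{lasso-eq} with $Q$ unchanged and $q$ replaced by $q^\ast$, Lemma~\ref{lemma:bound} bounds $\|\hw^\ast-\w\|_2$ by $\lminQ^{-1}(p^{1/2}\mu+\|q^\ast\|_2)$; writing $q^\ast=q+\mu Q\hata+\tilde q^\ast$ with $\tilde q^\ast:=q^\ast-q-\mu Q\hata=\tfrac1n\sum_i x_i\hate_{i^\ast}$ (the decomposition recorded in Appendix~\ref{app:proofs-residuals}, with $\tilde q^\ast$ centred given $\varepsilon$) and using the optimality bound $\|Q\hata\|_\infty\leqslant1$, this reduces the problem to bounding the noise term $\|q\|$ and the bootstrap fluctuation $\tilde q^\ast$ each by a fixed small fraction of $\minf{\w}\lminQ$; the hypothesis $\tmu\leqslant\minf{\tw}\tl/p^{1/2}$ ensures the remaining deterministic term $p^{1/2}\mu$ consumes only a controlled part of this budget.

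I would then introduce three events and bound their complements. First, for $E_1=\{\|q\|_2<\minf{\w}\lminQ/4\}$: applying the subgaussian tail \eq{concentration} to each $q_j=\tfrac1n\sum_i x_{ij}\varepsilon_i$, then using $\|q\|_2\leqslant p^{1/2}\|q\|_\infty$ and a union bound over $j$, gives the first term $2p\exp(-n\minf{\tw}^2\tl^2/(32p\ttau^2))$. Second, for $E_2=\{\|\varepsilon\|_\infty<\sigma n^{1/4}/p^{1/4}\}$: a union bound of the subgaussian tail over the $n$ noise coordinates gives the second term $2n\exp(-n^{1/2}/(2\ttau^2 p^{1/2}))$, the level $\sigma n^{1/4}/p^{1/4}$ being chosen so as to trade this tail off against the third term. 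Third, on any fixed $\varepsilon\in E_2$, I invoke the conditional concentration inequality for $q^\ast-q-\mu Q\hata$ proved in Appendix~\ref{app:concentration-residuals}: it rests on the explicit expression $\te=(\idm-\Pi_X)\varepsilon-\mu X\hata$, which gives $\|\hate\|_\infty\leqslant\tfrac{2p}{\tl}\|\varepsilon\|_\infty+\tfrac{\tmu n^{1/2}p^{1/2}\sigma}{\tl^{1/2}}$ (using $\|\Pi_X\|_\infty\leqslant p/\tl$ and $\hata^\top Q\hata\leqslant p/\lminQ$), followed by Hoeffding on the conditionally independent summands $x_{ik}\hate_{i^\ast}$, each bounded by $M\|\hate\|_\infty$. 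Substituting the bound on $\|\varepsilon\|_\infty$ and taking the threshold at level $\minf{\tw}\tl M\sigma/4$ produces the third term, the expression $\tfrac{2p^{3/4}}{\tl}n^{1/4}+\tmu\tfrac{n^{1/2}p^{1/2}}{\tl^{1/2}}$ being exactly the resulting upper bound for $\|\hate\|_\infty/\sigma$.

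On the intersection of the three events the first‑step reduction yields $\J\subseteq\hat J^\ast$, so by the union‑bound decomposition of \mysec{support} the probability of forgetting at least one relevant variable is at most $\P(E_1^c)+\P(E_2^c)+\E\bigl[\mathbf 1_{E_2}\,\P(\|\tilde q^\ast\|_\infty\geqslant\minf{\tw}\tl M\sigma/4\mid\varepsilon)\bigr]$, which is the sum of the three stated terms.

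I expect the main obstacle to be the third ingredient. Unlike random pairs, where the resampled noise value is a single bounded scalar, the bootstrapped residuals $\hate_{i^\ast}$ depend on $\hw$ and on the whole noise vector through $(\idm-\Pi_X)\varepsilon$ and $X\hata$, so controlling their magnitude uniformly requires combining the exact Lasso optimality bound $\|Q\hata\|_\infty\leqslant1$ with a high‑probability bound on $\|\varepsilon\|_\infty$; this is what forces the conditioning on $E_2$ and the associated tail term, and is ultimately why the rate here is only of $n^{-1/2}$ type rather than purely exponential. Keeping the constants clean when folding the three terms together — as Lemma~\ref{lemma:bolasso1} does for pairs, absorbing several pieces under a mild lower bound on $n/p$ — is the remaining bookkeeping.
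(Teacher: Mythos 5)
Your proposal follows essentially the same route as the paper's own proof: reduce $\J\subseteq\hat J^\ast$ to $\|\hw^\ast-\w\|_2<\minf{\w}$ via Lemma~\ref{lemma:bound}, split $q^\ast$ into $q$, the deterministic $\mu Q\hata$ piece, and the centred bootstrap fluctuation, then control the three pieces by the subgaussian bound on $\|q\|_2$, truncation of $\|\varepsilon\|_\infty$ at $\sigma n^{1/4}p^{-1/4}$, and the conditional Hoeffding inequality of Appendix~\ref{app:concentration-residuals}, exactly reproducing the three terms of the bound. Your accounting of the $\mu Q\hata$ term and of how the hypothesis on $\tmu$ absorbs the deterministic budget is in fact slightly more careful than the paper's own write-up.
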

\begin{proof}
This lemma shows that all relevant variables will be selected with overwhelming probability.
From Lemma~\ref{lemma:bound}, we have that $\J \subset \hat{J}^\ast$ as soon as $\| \hw - \w
\|_2 \leqslant  \frac{p^{1/2}  \mu  + \|  q^\ast \|_2}{\lambda_{\min}(Q) }$.
Thus, if $\minf{\w}  >    
 \frac{\mu p^{1/2}}{\lminQ } $,  
 $\|q - q^\ast\|_2 \leqslant \minf{\w} \lminQ / 4 $
 and  $\|q\|_2 \leqslant \minf{\w} \lminQ / 4 $, then $\J \subset \hat{J}^\ast$.
 Thus,  we have (using results from Appendix~\ref{app:concentration-residuals}):
 \BEAS
\P(      (\hat{J}^\ast)^c \cup \J   \neq \varnothing ) \!\!\!  
 & \!\!\leqslant \!\!  &  
 \P\left( \|q\|_2 \geqslant \frac{\minf{\w} \lminQ }{4}\right) 
+ \P\left(\|q - q^\ast\|_2 \geqslant  \frac{\minf{\w} \lminQ }{4}\right) ,
 \\
 &\!\! \leqslant \!\! &  2 p \exp \left( - \frac{ n \minf{\tw}^2 \tl^2}{ 32 p \ttau^2 } \right) +
2 p \exp\left(
 \frac{-   n \tl^2 \minf{\tw}^2 } { 32 \left(
  \frac{2 p}{\tl} \| \varepsilon \|_\infty/\sigma + \tmu  \frac{n^{1/2} p^{1/2}}{\tl^{1/2}} \right)^2 }
 \right).
 \EEAS
If we truncate $\| \varepsilon \|_\infty$ at $ \sigma n^{1/4} p^{-1/4} $, then we have the bound
$$
 2 p \exp \left( - \frac{ n \minf{\tw}^2 \tl^2}{ 32 p \ttau^2 } \right) +
\exp\left( - \frac{ n^{1/2}}{2 \ttau^2 p^{1/2}} \right)
+ 2 p \exp\left(
 \frac{-   n \tl^2 \minf{\tw}^2 } { 32 \left(
  \frac{2 p}{\tl}n^{1/4} p^{-1/4} + \tmu  \frac{n^{1/2} p^{1/2}}{\tl^{1/2}} \right)^2 }
 \right),
$$
hence the desired result.
\end{proof}

\begin{lemma}
\label{lemma:toto-res}
\label{lemma:bolasso2-res}
Assume \hypreff{model}{inv} and ${j} \in \J^c$.   We have:
\BEAS
\! \P(  j \notin  \hat{J}^\ast | \varepsilon) \!\!& \!\! \geqslant \!\!\! &   -  \frac{16 C^{\rm BE}_2 }{\ttau^3 \tl^{1}} \frac{p^{5/2}}{\tmu n}   -   \frac{10C^{\rm BE}_2 }{\ttau^3 \tl^{1/2}} \frac{p^2}{n^{1/2}}  +
\frac{1}{2} 
\frac{ \frac{ \mu / 4} { 2 \sigma n^{-1/2} \widetilde{Q}_{{j} {j} | {j}^c }^{1/2}} }{1 + 
\frac{ \mu / 4} { 2 \sigma n^{-1/2} \widetilde{Q}_{{j} {j} | {j}^c }^{1/2}} }   \times \\
\nonumber & & 
\exp\!\left[ \!
\frac{-1}{2} \!\left( 
  \left|
\frac{ \mu  Q_{ {j} , {j}^c} 
( Q_{{j}^c , {j}^c})^{-1/2} \gamma_{\mu,Q}^{{j}}(u_{{j}^c })
  } {  \sigma n^{-1/2} \widetilde{Q}_{{j}, {j} | {j}^c }^{1/2}}
\right| \!+ \!
\left|
\frac{ q_{{j}} -   {Q}_{ {j} , {j}^c}
 {Q}_{{j}^c , {j}^c}^{-1} q_{ {j}^c }}{\sigma n^{-1/2} \widetilde{Q}_{{j}, {j} | {j}^c }^{1/2}}
\right|
\right)^2
\right].
 \EEAS
 \end{lemma}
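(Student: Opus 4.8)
The plan is to mirror, with one simplification and one new wrinkle, the arguments of Proposition~\ref{prop:missingone} (Appendix~\ref{app:proofs-lowdim-missing}) and Lemma~\ref{lemma:bolasso2} (Appendix~\ref{app:proofs-bolasso-2}). The simplification is that residual resampling leaves the design untouched, so $Q^\ast = Q$ exactly and none of the matrix-perturbation bookkeeping of the pairs proof ($Q^\ast$ versus $Q$, events of type $\{\|Q^\ast - Q\|_\infty \le \beta_2\}$) is needed. First I would write the minimiser of \eq{lasso-eq} for the bootstrap data $(X,y^\ast)$ under the constraint $w_{j}=0$: on the complement it equals $\w_{j^c} + Q_{j^c,j^c}^{-1} q^\ast_{j^c} + \mu Q_{j^c,j^c}^{-1/2} \gamma^{j}_{\mu,Q}(q^\ast_{j^c})$ with $\| Q_{j^c,j^c}^{1/2} \gamma^{j}_{\mu,Q}(q^\ast_{j^c}) \|_\infty \le 1$, and $j \notin \hat{J}^\ast$ precisely when $| -q^\ast_{j} + Q_{j,j^c} Q_{j^c,j^c}^{-1} q^\ast_{j^c} + \mu Q_{j,j^c} Q_{j^c,j^c}^{-1/2} \gamma^{j}_{\mu,Q}(q^\ast_{j^c}) | \le \mu$. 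As in Appendix~\ref{app:proofs-lowdim-missing} I would replace the indicator of this event by the triangle-shaped ``soft indicator'' $f^{j}_{\mu,Q}$, which is bounded by $1$, is Lipschitz with constant $O(\mu^{-1} M / \lmin(Q)^{1/2})$ (via Lemma~\ref{lemma:lipschitz}), and whose conditional expectation lower-bounds $\P(j \notin \hat{J}^\ast | \varepsilon)$.

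The new wrinkle is the conditional law of $q^\ast$ given $\varepsilon$. From the identities in Appendix~\ref{app:concentration-residuals}, $q^\ast = q + \mu Q \hata + \frac{1}{n} \sum_{i=1}^n x_i \hate_{i^\ast}$, so conditionally on $\varepsilon$ the vector $n^{1/2}( q^\ast - q - \mu Q \hata )$ is a normalised sum of $n$ independent terms with covariance $\sigma^2 \widetilde{Q} = \sigma^2 \gamma Q$, where $\gamma$ is the scalar defined there. Applying the bounded-Lipschitz Berry--Esseen inequality \eq{berryL} conditionally on $\varepsilon$ lets me pass from $f^{j}_{\mu,Q}(q^\ast)$ to $f^{j}_{\mu,Q}(u)$ with $u \sim \mathcal{N}(q + \mu Q \hata,\ \sigma^2 \widetilde{Q}/n)$; the third-order-moment factor is controlled exactly as in the proof of Proposition~\ref{prop:missingone}, now using the a priori bound on $\| \hate \|_\infty$ from Appendix~\ref{app:concentration-residuals} together with $\gamma$ being of order one (Lemma~\ref{lemma:s}), and the Lipschitz factor is $\frac{5 p^{1/2} n^{-1/2}}{\tmu \tl^{1/2}}$ as before, which reproduces the two error terms $-\frac{16 C^{\rm BE}_2}{\ttau^3 \tl} \frac{p^{5/2}}{\tmu n} - \frac{10 C^{\rm BE}_2}{\ttau^3 \tl^{1/2}} \frac{p^2}{n^{1/2}}$. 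Keeping $\widetilde{Q}_{jj|j^c}$ explicit in the conclusion avoids placing any hypothesis on $\gamma$ in the statement.

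For the Gaussian term I would bound $\E f^{j}_{\mu,Q}(u)$ below by $\frac{1}{2}\P(|\cdot| \le \mu/4)$, condition on $u_{j^c}$ so that $-u_{j} + Q_{j,j^c} Q_{j^c,j^c}^{-1} u_{j^c}$ is normal with mean $-q_{j} + Q_{j,j^c} Q_{j^c,j^c}^{-1} q_{j^c}$ up to an $O(\mu)$ deterministic shift (the bias $\mu Q \hata$ folds into that shift, exactly as the subgradient-slack term does, since $\| Q \hata \|_\infty \le 1$) and conditional variance $\sigma^2 \widetilde{Q}_{jj|j^c}/n$, and then invoke Lemma~\ref{lemma:special} with $\alpha = \frac{\mu/4}{2\sigma n^{-1/2} \widetilde{Q}_{jj|j^c}^{1/2}}$ and $\beta$ equal to that scaled $O(\mu)$ shift plus $\frac{q_{j} - Q_{j,j^c} Q_{j^c,j^c}^{-1} q_{j^c}}{\sigma n^{-1/2} \widetilde{Q}_{jj|j^c}^{1/2}}$. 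Assembling the three pieces gives the stated inequality. The main obstacle, and the only genuinely new point relative to the pairs proof, is the two $\varepsilon$-dependent perturbations absent for pairs --- the multiplicative scaling $\gamma$ of the conditional covariance and the additive bias $\mu Q \hata$ --- and checking that they absorb into, respectively, the factor $\widetilde{Q}_{jj|j^c}$ left explicit in the statement and the deterministic shift $\beta$; the quantitative control of $\gamma$ (via Lemma~\ref{lemma:s}) and of $\hate$ (via $\| Q \hata \|_\infty \le 1$ and Appendix~\ref{app:concentration-residuals}) is then only needed when this lemma is combined with Lemma~\ref{lemma:bolasso1-res} in the proof of Proposition~\ref{prop:bolasso-residual}.
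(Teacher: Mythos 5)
Your proposal follows essentially the same route as the paper's proof: the constrained-minimiser characterisation of $j \notin \hat{J}^\ast$, the triangle-shaped soft indicator with the Lipschitz control from Lemma~\ref{lemma:lipschitz}, the conditional Berry--Esseen step yielding a Gaussian $u$ with covariance $\sigma^2\widetilde{Q}/n$ (using that $\widetilde{Q}$ is proportional to $Q$ so no design-perturbation terms arise), and Lemma~\ref{lemma:special} after conditioning on $u_{j^c}$. The one point where you are more explicit than the paper is the conditional bias $\E(q^\ast|\varepsilon) = q + \mu Q \hata$: folding it into the shift $\beta$ adds an extra term of order $\tmu n^{1/2} p^{1/2}$ inside the exponent that is absent from the displayed inequality, but since it is of the same order as the $\gamma_{\mu,Q}^{j}$-term already present it only perturbs the constant $B_0$ used downstream, whereas the paper silently takes the Gaussian approximation to have mean $q$.
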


\begin{proof}
We follow the same approach as in the proof of Proposition~\ref{prop:missingone} in Appendix~\ref{app:proofs-lowdim-missing} and of Proposition~\ref{prop:lowdim-bolasso-pairs} in Appendix~\ref{app:proofs-bolasso-2}:
${j}$ is not included if (note that $\widetilde{Q} = s Q$ and $Q$ are proportional matrices)
 $$|
- q^\ast_{{j}} + \widetilde{Q}_{{j}, {j}^c} ( \widetilde{Q}_{{j}^c , {j}^c})^{-1} q^\ast_{{j}^c}   + \mu  Q_{ {j} , {j}^c} 
(Q_{ {j}^c , {j}^c}) ^{-1/2}
 \gamma_{\mu,Q}(q^\ast_{{j}^c })
| \leqslant \mu.
 $$
As before, we consider a smooth version of the indicator function, and we get that the
probability of not selecting ${j}$, given $\varepsilon$, is greater
than
\BEQ
\label{eq:toto3-res}
\frac{1}{2} \P\left(  |
u_{{j}} \! -\!  \widetilde{Q}_{ {j} , {j}^c}
\widetilde{Q}_{{j}^c , {j}^c}^{-1} u_{ {j}^c }  \!- \! \mu  Q_{ {j} , {j}^c} 
 Q_{{j}^c , {j}^c}^{-1/2} \gamma_{\mu,Q}^{{j}}(u_{{j}^c })
| \leqslant \mu/4 \right)\! - \! R,
\EEQ
where $u $ is normal with mean $q$   and covariance matrix $\sigma^2\widetilde{Q}/n$,
and, from Proposition~\ref{prop:missingone},
$
R \leqslant   \frac{16 C^{\rm BE}_2 }{\ttau^3 \tl^{1}} \frac{p^{5/2}}{\tmu n}    +   \frac{10C^{\rm BE}_2 }{\ttau^3 \tl^{1/2}} \frac{p^2}{n^{1/2}}  $.

We have that given $u_{{j}^c}$, $- u_{{j}} +  \widetilde{Q}_{ {j} , {j}^c}
\widetilde{Q}_{{j}^c , {j}^c}^{-1} u_{ {j}^c }$ is normal with mean $
- q_{{j}} +  \widetilde{Q}_{ {j} , {j}^c}
\widetilde{Q}_{{j}^c , {j}^c}^{-1} q_{ {j}^c }$ and covariance matrix
$\sigma^2\widetilde{Q}_{{j}, {j} | {j}^c } /n
$. Thus, we get, using Lemma~\ref{lemma:special}:
\BEA
\label{eq:toto4-res}
& & 
\frac{1}{2} \P\left(  |
- u_{{j}} +  \widetilde{Q}_{ {j} , {j}^c}
\widetilde{Q}_{{j}^c , {j}^c}^{-1} u_{ {j}^c }  + \mu  Q_{ {j} , {j}^c} 
Q_{{j}^c , {j}^c}^{-1/2} \gamma_{\mu,Q}^{{j}}(u_{{j}^c })
| \leqslant \mu/4 \right) \\
\nonumber & = &  \frac{1}{2}  \E 
\P\left(  |
- u_{{j}} +  \widetilde{Q}_{ {j} , {j}^c}
\widetilde{Q}_{{j}^c , {j}^c}^{-1} u_{ {j}^c }  + \mu  Q_{ {j} , {j}^c} 
 Q_{{j}^c , {j}^c}^{-1/2} \gamma_{\mu,Q}^{{j}}(u_{{j}^c })
| \leqslant \mu/4 | u_{{j}^c} \right)  \\
\nonumber& \geqslant & \frac{1}{2} 
\frac{ \frac{ \mu / 4} { 2 \sigma n^{-1/2} \widetilde{Q}_{{j} {j} | {j}^c }^{1/2}} }{1 + 
\frac{ \mu / 4} { 2 \sigma n^{-1/2} \widetilde{Q}_{{j} {j} | {j}^c }^{1/2}} }   \times \\
\nonumber & & 
\exp\!\left[ \!
\frac{-1}{2} \!\left( 
  \left|
\frac{ \mu  Q_{ {j} , {j}^c} 
( Q_{{j}^c , {j}^c})^{-1/2} \gamma_{\mu,Q}^{{j}}(u_{{j}^c })
  } {  \sigma n^{-1/2} \widetilde{Q}_{{j}, {j} | {j}^c }^{1/2}}
\right| \!+ \!
\left|
\frac{ q_{{j}} -   {Q}_{ {j} , {j}^c}
 {Q}_{{j}^c , {j}^c}^{-1} q_{ {j}^c }}{\sigma n^{-1/2} \widetilde{Q}_{{j}, {j} | {j}^c }^{1/2}}
\right|
\right)^2
\right].
\EEA
By combining  \eq{toto3-res} and \eq{toto4-res}, we get the desired result. Note that if $|s-1|\leqslant 1/2$, we have
$$ \left|
\frac{ \mu  Q_{ {j} , {j}^c} 
( Q_{{j}^c , {j}^c})^{-1/2} \gamma_{\mu,Q}^{{j}}(u_{{j}^c })
  } {  \sigma n^{-1/2} \widetilde{Q}_{{j}, {j} | {j}^c }^{1/2}}
\right| 
\leqslant 2  \mu M \lminQ^{-1} p^{1/2} \sigma^{-1} n^{1/2} =
2 \tmu n^{1/2} p^{1/2}.
$$
and
$$\frac{ \frac{ \mu / 4} { 2 \sigma n^{-1/2} \widetilde{Q}_{{j} {j} | {j}^c }^{1/2}} }{1 + 
\frac{ \mu / 4} { 2 \sigma n^{-1/2} \widetilde{Q}_{{j} {j} | {j}^c }^{1/2}} } 
\geqslant 
\frac{ \frac{ \mu / 4} { 4 \sigma n^{-1/2} M  }}{1 + 
\frac{ \mu / 4} {  \sigma n^{-1/2} \tl^{1/2}} }  =
\frac{ \frac{ \tmu n^{1/2}} {16  }}{1 + 
\frac{ \tmu n^{12}} {4  \tl^{1/2}} } .
$$
\end{proof}

We can now consider the full bound using the analysis outlined in \mysec{support}, using Lemma~\ref{lemma:bolasso1-res}, \ref{lemma:bolasso2-res}, and Appendix~\ref{app:concentration-residuals}, together with truncating on the events $\|\varepsilon \|_\infty \leqslant \sigma n^{1/4}$ and $ | \gamma - 1 | \leqslant  n^{-1/3}$ (note that we can apply from Lemma~\ref{lemma:s} for $n$ large enough).
First, we need a bound on 
\BEAS
\P\!\left(
\left|
\frac{ q_{{j}} -   {Q}_{ {j} , {j}^c}
 {Q}_{{j}^c , {j}^c}^{-1} q_{ {j}^c }}{\sigma n^{-1/2} \widetilde{Q}_{{j}, {j} | {j}^c }^{1/2}}
\right| \!\geqslant\!  z \right)
& \!\!\leqslant\!\! & 
\P \!\left(
\left|
\frac{ q_{{j}} -   {Q}_{ {j} , {j}^c}
 {Q}_{{j}^c , {j}^c}^{-1} q_{ {j}^c }}{\sigma n^{-1/2}{Q}_{{j}, {j} | {j}^c }^{1/2}}
\right| \!\geqslant \! z (1- t) ^{1/2} \right) + P(| \gamma - 1| \leqslant t ), \\
& \leqslant & 
e^{-z^2 (1-t) / 2}
 + \frac{   \ttau^3 p^{3/2}}{\tl^{3/2}} n^{-1/2} + P(| \gamma - 1| \leqslant t ). 
\EEAS
 We have following the reasoning from \mysec{support}:
\BEAS
 \P( \hat{J}^\cap \neq \J )
&\!\!\!\!  \leqslant  \!\!\!\! & 
 \sum_{j \in \J^c} \E  
\P(  j \in  \hat{J}^\ast | \varepsilon)^m )
+ m 
\P(      (\hat{J}^\ast)^c \cup \J   \neq \varnothing ), \\
& \!\!\!\!\leqslant \!\!\!\! & 
p \left(  1 - \frac{A_2(c)}{p^{1/2}} \exp\left[ - \frac{1}{2}
\left(  B_0
+  z 
\right)^2 \right]   + A_3(c) \frac{p^{3}}{n^{1/2}} \right)^m  +   A_1 \frac{ p^{3/2}} {n^{1/2}} + e^{-z^2(1-t) /2},\EEAS
  with 
 \begin{multline*}
  2 p   \exp \left( - \frac{ n \minf{\tw}^2 \tl^2}{ 32 p \ttau^2 } \right) +2n\exp\left( - \frac{ n^{1/2}}{2 \ttau^2 p^{1/2}} \right) \hspace*{4cm}  \\
    + 2  p \exp\left(
 \frac{-   n \tl^2 \minf{\tw}^2 } { 32 \left(
  \frac{2 p^{3/4}}{\tl}n^{1/4}  + \tmu  \frac{n^{1/2} p^{1/2}}{\tl^{1/2}} \right)^2 }
 \right) 
\leqslant  p \exp \left( - A_0 \frac{n^{1/2}}{p^{1/2}} \right) 
 \end{multline*}
 \begin{multline*}
   2 C_1^{\rm q} \exp \!\left(\! -\! \min \! \left\{ \!\frac{C_2^{\rm q} t \ttau^{-2}}{   
\frac{1}{n} + \frac{ p^2}{n^2 \tl}
},
 \frac{C_3^{\rm q} n t^2 \ttau^{-4}}{ ( 1 - p/n)  } \right\} \! \right) + 
  2 \exp\left(
 \frac{-nt \tl }{64 p^2 \ttau^2}
 \right)  \\
 + \frac{   \ttau^3 p^{3/2}}{\tl^{3/2}} n^{-1/2}  + 2 n \exp( - n^{1/2} / 2 \ttau^2 )
\leqslant   A_1 \frac{ p^{3/2}} {n^{1/2}},
  \end{multline*}
 $$
 A_3(c) \frac{p^{3}}{n^{1/2}}  =  \frac{16 C^{\rm BE}_2 }{\ttau^3 \tl^{1}} \frac{p^{3}}{\tmu n p^{1/2}}   +  \frac{10C^{\rm BE}_2 }{\ttau^3 \tl^{1/2}} \frac{p^2}{n^{1/2}}
  $$
 $$A_2(c) p^{-1/2}\leqslant 
\frac{ \frac{ \tmu n^{1/2}} {16  }}{1 + 
\frac{ \tmu n^{12}} {4  \tl^{1/2}} } 
\mbox{ and }
B_0 \leqslant  2 \tmu n^{1/2} p^{1/2}.
$$
All these constraints lead to  the constraint tha $n p^{-6}$ should be larger than a function of $c$. The rest of proof follows along the lines of the proof of Proposition~\ref{prop:lowdim-bolasso-pairs} (note that the term 
$e^{-z^2(1-t) /2}$ instead of $e^{-z^2 /2}$ only affects the constant $A_5$).

\section{Proofs of high-dimensional results}
\label{app:proofs-highdim}

\subsection{Proof of Proposition~\ref{prop:highdim}}

From Lemma~\ref{lemma:opt2}, we obtain optimality  conditions for the solution of \eq{lasso-eq} to have the sign pattern $\t$:
\BEAS
 &&   \|   Q_{\L^c, \L}Q_{\L, \L}^{-1} q_{\L}-  q_{\L^c}     - \reg Q_{\L^c,\L} Q_{\L, \L}^{-1}  \t_{\L}
   \  \|_\infty \leqslant \reg , \\
 &&      \sign  (  \w_\J + ( Q_{\L,\L}^{-1} q_{\L} - \reg Q_{\L, \L}^{-1}  \t_{\L} )_\J  ) = \t_{\J},
\\
    &&      \sign   [ (  Q_{\L,\L}^{-1} q_{\L} - \reg Q_{\L, \L}^{-1}  \t_{\L})_{\K} ]   = \t_{\K}.
\EEAS
It is thus sufficient for $\t$ to be the sign pattern that
\BEA
 \label{eq:Aloc}  
  \forall k \in \L^c, & &   | Q_{k, \L}Q_{\L, \L}^{-1} q_{\L}-  q_{k} | \leqslant    \reg \boldsymbol{ \theta} ,\\
  \label{eq:Bloc}  
 \forall k \in \J, & &  | (Q_{\L \L} ^{-1} q_\L)_{k} | \leqslant \frac{1}{2}\mu \minf{\w},\\
 \label{eq:Cloc}  
 \forall k \in \K, & &  | (Q_{\L \L} ^{-1} q_\L)_{k} | \leqslant \mu \boldsymbol{ \theta}  Q_{kk}^{-1} .
\EEA
\eq{Aloc} occurs with probability greater than
$ 1 - 2 |\L^c| \exp \left( - \frac{ n \tmu^2  \boldsymbol{ \theta}^2 \tl_\L }
{ 8 \ttau^2 |\L|}
\right)$. \eq{Bloc} occurs with probability greater than
$ 1 - 2 |\J| \exp\left( - \frac{ n  \minf{\tw}^2 \tl^2_\L }
{ 4 \ttau^2 |\L|}
\right)$, and \eq{Cloc} occurs with probability greater than 
$ 1 - 2 |\K| \exp\left( - \frac{ n  \tmu^2 \boldsymbol{ \theta}^2   \tl^2_\L }
{ 4 \ttau^2 |\L|}
\right)$. This leads to the desired result by the union bound.

\subsection{Proof of Proposition~\ref{prop:bolasso-highdim}}
The bound is obtained simply from Proposition~\ref{prop:highdim}, Proposition~Proposition~\ref{prop:lowdim-bolasso-pairs} and Proposition~Proposition~\ref{prop:bolasso-residual}, using the union bound.

\section*{Acknowledgements}

 I would like to thank Za\"id Harchaoui, Jean-Yves Audibert and Sylvain Arlot for fruitful discussions related to this work. This work was supported by a grant from the Agence Nationale de la Recherche, France (MGA project, BLAN07-3-198092).  

 \fi

\bibliographystyle{abbrv}
\bibliography{bolasso}

\end{document}